\documentclass{article} 
\usepackage{iclr2026_conference,times}

\usepackage{url, bbm, enumitem}
\usepackage{algorithm}
\usepackage{algorithmic}
\usepackage{mathtools}
\usepackage{tabularx}
\usepackage{tabu}
\usepackage{subcaption}

\usepackage{amsthm, amssymb}
\usepackage{booktabs} 
\usepackage{multirow}
\usepackage[table]{xcolor}
\usepackage[most]{tcolorbox}
\tcbset{
  colback=gray!10,
  colframe=gray!50,
  boxrule=0.5pt,
  arc=2mm,
  fonttitle=\bfseries,
  listing only,
  listing options={basicstyle=\ttfamily\small, breaklines=true}
}

\newtheorem{theorem}{Theorem}[section]
\newtheorem{proposition}[theorem]{Proposition}

\newtheorem{assumption}[theorem]{Assumption}

\newcommand{\st}{\mathrm{s.t.}}
\newcommand{\E}{\mathbb{E}}

\newcommand{\Ours}{\textnormal{VIP}}
\newcommand{\Let}{\triangleq}

\definecolor{cuhk}{RGB}{117,15,109}
\definecolor{huawei}{RGB}{206,14,45}

\newcommand{\cuhk}{%
  \textsuperscript{%
    {\usefont{T1}{pbk}{m}{n}\textcolor{cuhk}{\textbf{C}}}%
  }%
}
\newcommand{\huawei}{%
  \textsuperscript{%
    {\usefont{T1}{pbk}{m}{n}\textcolor{huawei}{\textbf{H}}}%
  }%
}

\title{Adaptive Rollout Allocation for \\ Online Reinforcement Learning with \\ Verifiable Rewards}

\usepackage[T1]{fontenc}
\usepackage{fontawesome5}

\author{Hieu Trung Nguyen\cuhk \thanks{Work done during an internship at Huawei Hong Kong Research Center}~~\thanks{Equal contribution}~, Bao Nguyen\cuhk\footnotemark[2]~, Wenao Ma\huawei\thanks{Corresponding author}~, Yuzhi Zhao\huawei, Ruifeng She\huawei, Viet Anh Nguyen\cuhk\\
\cuhk The Chinese University of Hong Kong,
\huawei Huawei Hong Kong Research Center\\
\texttt{\{thnguyen,nbnguyen,nguyen\}@se.cuhk.edu.hk} \\ \texttt{\{ma.wenao,she.ruifeng\}@huawei.com} \\ \texttt{yzzhao2-c@my.cityu.edu.hk} \\
\faGithub~\textbf{Source Code:} \href{https://github.com/HieuNT91/VIP}{\texttt{https://github.com/HieuNT91/VIP}} 
}

\DeclareRobustCommand{\mc}{\mathcal}

\iclrfinalcopy 
\usepackage{hyperref}
\begin{document}

\maketitle

\begin{abstract}
Sampling efficiency is a key bottleneck in reinforcement learning with verifiable rewards. Existing group-based policy optimization methods, such as GRPO, allocate a fixed number of rollouts for all training prompts. This uniform allocation implicitly treats all prompts as equally informative, and could lead to inefficient computational budget usage and impede training progress. We introduce \Ours, a Variance-Informed Predictive allocation strategy that allocates a given rollout budget to the prompts in the incumbent batch to minimize the expected gradient variance of the policy update. At each iteration, \Ours~uses a lightweight Gaussian process model to predict per-prompt success probabilities based on recent rollouts. These probability predictions are translated into variance estimates, which are then fed into a convex optimization problem to determine the optimal rollout allocations under a hard compute budget constraint. Empirical results show that \Ours~consistently improves sampling efficiency and achieves higher performance than uniform or heuristic allocation strategies in multiple benchmarks.
\end{abstract}

\section{Introduction}

The advent of Language Models (LMs) has marked a new era in artificial intelligence, transforming models from static knowledge bases into dynamic, intelligent agents capable of complex reasoning, creative generation, and intricate problem-solving. This evolution has been largely driven by the advancement of post-training methods~\citep{tong2024dart,rafailov2023direct,ref:shao2024deepseekmath} and test-time scaling methods~\citep{snell2024scaling,feng2023alphazero,nguyen2025reasoning,nguyen2025structured}, which are crucial for aligning a base LM's vast, general knowledge with specific human intentions, ethical guidelines, and desired task-oriented behaviors. These refinement processes bridge the gap between a model that understands language and one that can reliably and safely act as a collaborator, simulator, or autonomous agent. As LMs become more integrated into real-world applications, the effectiveness and efficiency of these post-training stages have become a primary focus of research and development.

While test-time scaling methods are training-free and effective in many scenarios, post-training generally yields higher performance gains at a lower inference budget. Two prominent paradigms are used for post-training LMs: Supervised Fine-Tuning (SFT) and Reinforcement Learning (RL). SFT is a computationally efficient approach that trains the model on a curated dataset of input-output pairs, effectively equipping it with domain-specific knowledge. However, its effectiveness is often hampered by the difficulty of curating high-quality and comprehensive datasets. In contrast, RL-based methods, which treat the model as an agent optimizing for a reward signal derived from human preferences (Reinforcement Learning from Human Feedback, RLHF) or external verification mechanisms (Reinforcement Learning from Verifiable Rewards, RLVR), can explore a broader action space and achieve superior performance on complex, open-ended tasks. RLVR, in particular, has gained wide popularity thanks to its low reliance on human input. However, the performance of RL methods comes at a significant computational cost. Algorithms like Proximal Policy Optimization (PPO, \cite{ref:schulman2015trust}) require the simultaneous training of a separate value model to estimate the advantage of actions, which can be prohibitively memory-intensive in resource-constrained situations. To address this, a family of ``critic-free" or group-based RL methods has emerged, estimating the advantage as a relative measure within a group of rollouts, essentially trading speed for memory efficiency. Such methods include GRPO, \citep{ref:shao2024deepseekmath}, Dr.~GRPO \citep{ref:liu2025understanding}, RLOO \citep{ahmadian2024back}, and other variants. A direct drawback of these methods is the additional computation time required to generate multiple rollouts for a training example. The number of generations often needs to be large enough (e.g., 16) to obtain a stable training process, which exacerbates the generation overhead and eventually leads to a memory-bound system. Such high computational demands consequently highlight the need for greater sampling efficiency in the RL training process.

The high computational demands of RL algorithms have led to a body of work dedicated to optimizing this performance-cost trade-off. One line of research involves hybrid approaches that combine an initial SFT phase with a subsequent RL refinement stage \citep{chen2025step} or merge the two stages into a single method \citep{fu2025srft}. These methods demonstrated improvement in smoothening or even accelerating the training process, but often fail to dynamically adapt to the training process based on the model's evolving capabilities for solving the training problems. The observation that simply filtering out problems where the accuracy is close to either 0 or 1 can boost performance \citep{ref:yu2025dapo} hints at the potential for more adaptive control. To the best of our knowledge, there is a lack of crucial metrics for such capabilities and, consequently, a lack of an adaptive control mechanism for the GRPO process.

To address such challenges, we introduce the Variance-Informed Predictive allocation strategy (\Ours), a principled training framework for efficient and adaptive rollout allocation in group-based policy optimization. We first provide a theoretical analysis of the gradient variance for three prominent RL algorithms, establishing the relationship between the gradient variance contributed by a prompt and the probability that it is solved by the current model. Building on this insight, at the beginning of each training iteration, \Ours~predicts the expected gradient variance of each prompt in a mini-batch. Based on these predictions, \Ours~then solves a convex optimization problem followed by integer rounding heuristic to allocate rollouts across prompts to minimize the total expected gradient variance under the given computational budget. The core contributions of this paper are the following:
\begin{itemize}[leftmargin=5mm]
    \item \textbf{Gradient variance analysis.} We provide a rigorous analysis of the effect of gradient variance on the RL training process. We derive the connection between gradient variance and success probability for prevailing group-based RL methods, including Dr.~GRPO and RLOO. This establishes the theoretical foundation for adaptively controlling budget allocation during training.
    \item \textbf{Variance prediction.} \Ours~employs a Gaussian process (GP) over prompt embeddings to model the probability of success for each prompt in any training step. This enables recursive Bayesian updates that leverage both past rollout outcomes and the similarity structure of the prompts. The GP prediction allows the framework to estimate the gradient variance for each prompt in the minibatch at every training step. 

    \item \textbf{Variance-minimizing rollout allocation.} Given the predicted variances, \Ours~formulates a convex optimization problem to determine the optimal allocation of rollouts on prompts. This allocation minimizes the gradient variance subject to a rollout budget constraint. We derive an efficient algorithm that provides the exact solution to the continuous relaxation and further develop a greedy incentive-based rounding heuristic to produce a feasible integer solution. This process enables fast resource allocation under computational budget constraints.
\end{itemize}

Our paper unfolds as follows. Section~\ref{sec:related} discusses related work on adaptive strategies over group-based RL frameworks for RLVR, while Section~\ref{sec:RLVR} lays the technical background on RLVR. Section~\ref{sec:variance} presents our analysis of the per-prompt gradient variance. Section~\ref{sec:allocation} delineates our \Ours~framework for variance prediction via Gaussian process, and the optimization problem for rollout allocation. Section~\ref{sec:experiments} presents empirical results on mathematical reasoning and information retrieval datasets.

\section{Related Work} \label{sec:related}

Recent advancements in group-based RL for LMs have focused on adaptive strategies to enhance both training efficiency and performance. These methods move beyond static training pipelines by dynamically selecting and managing data used for policy updates. 
Initial research empirically demonstrated that filtering out ``non-informative" problems to which the model's accuracy is either 0 or 1 can improve training efficiency, as these problems lead to rollout batches with zero variance and consequently make no contribution to the gradient signal \citep{ref:yu2025dapo}. 
Along similar lines, \citep{lin2025cppo} filtered out rollouts with low absolute advantage, while
\citep{xu2025not} proposed to retain examples with the highest variance.
Lacking an a priori measure of training prompt informativeness, these methods require the pre-sampling of an large batch of rollouts before each update iteration, potentially negating efficiency gains from the significant sampling overhead.

Several works adopted a pre-rollout estimation of the prompt informativeness.
\citep{zheng2025act} proposed skipping each problem with a probability determined by the number of recent consecutive rollouts where the prompt is non-informative.
\citep{zhang2025speed} proposed a signal-to-noise ratio indicating a problem's contribution to the gradient, and showed that training can be accelerated by generating a small trial batch of rollouts to identify and filter out non-informative problems, and then continue sampling on intermediate-difficulty ones.
\citep{sun2025improving} designed a difficulty-targeted online selection algorithm with attention-based difficulty prediction and rollout replay.
\citep{liao2025enhancing} ranked problems by difficulty, indicated by the average rewards from past rollouts, and allocated resources to sample larger batches for difficult prompts.
\citep{yang2025depth} took a similar approach as \citep{zhang2025speed} to estimate problem difficulty, and assigned a larger budget and higher gradient weights to difficult prompts.
\citep{kong2025rethinking} estimates prompt difficulty by aggregating historical performance
discrepancies of the problems, then adaptively selects the set of problems whose difficulty is in alignment with the current competence of the model.
Other works also considered the entropy-reward trade-off to mitigate the risk of premature convergence. \citep{liao2025enhancing} combined difficulty-aware reallocation with entropy-stabilizing temperature scheduling to effectively balance efficiency and exploration. Another approach steers exploration by maximizing information gain throughout the training process \citep{ref:lee2024maxinforl}.

\section{Background on RLVR} \label{sec:RLVR}

We study the prominent family of methods for RL training that employs group-based advantage estimation to stabilize learning and better utilize reward signals. This family includes Group Relative Policy Optimization (GRPO)\citep{ref:shao2024deepseekmath} and its variants, such as Dr.~GRPO~\citep{ref:liu2025understanding} and RLOO~\citep{ahmadian2024back}. Given a dataset of $Q$ prompts $\mathcal{Q} = \{q_1, q_2, \dots, q_Q\}$, the general objective function for group-based policy optimization methods is
\begin{align} \label{eq:general}
J(\theta) &=\E_{q\sim\mathcal{Q}}\E_{\pi_{\mathrm{old}}^{\otimes n}}
\Bigg[
\frac{1}{n}\sum_{j=1}^n \frac{1}{|\tilde o_j|}\sum_{\tau=1}^{|\tilde o_j|}
\Big(
A'_{j,\tau}
- \beta\, D_{\mathrm{KL}}\big(\pi_\theta(\cdot\mid q, \tilde o_{j,<\tau}) \,\big\|\, \pi_{\mathrm{ref}}(\cdot\mid q, \tilde o_{j,<\tau})\big)
\Big)
\Bigg],
\end{align}
where $n$ is the number of rollouts for each prompt, $\tilde o_j$ is the $j$-th rollout from policy $\pi_{\mathrm{old}}$, $| \tilde o_j|$ is the number of tokens in $\tilde o_j$, and $\pi_\theta$ is the current policy with its learnable parameters $\theta$. Here, $D_{\mathrm{KL}}\left(\pi_\theta(\cdot\mid q, \tilde o_{j,<\tau}) \,\|\, \pi_{\mathrm{ref}}(\cdot\mid q, \tilde o_{j,<\tau})\right)$ is the Kullback-Leibler divergence between the policy $\pi_\theta$ and a reference policy $\pi_{\mathrm{ref}}$, both of which are conditioned on the prompt $q$ and the tokens generated thus far $\tilde o_{j,<\tau}$. We use a tilde ($\sim$) to emphasize the stochastic nature of the quantities.

For prompt $q$, the term $A'_{j,\tau}$ is the clipped surrogate at token $\tau$ for output $\tilde o_j$:
\begin{equation} \label{eq:advantage}
A'_{j,\tau} = \min \left(
    r_{j,\tau} \tilde{A}_{j,\tau},\;
    \operatorname{clip}(r_{j,\tau}, 1-\epsilon, 1+\epsilon)\, \tilde{A}_{j,\tau}
\right),~~\text{with} \quad r_{j,\tau}(\theta) = \frac{\pi_\theta (\tilde o_{j,\tau} \mid q, \tilde o_{j,<\tau})}{\pi_{\mathrm{old}} (\tilde o_{j,\tau} \mid q, \tilde o_{j,<\tau})}.
\end{equation}
Notice that we momentarily omit the index $q$ to avoid clutter.
Here, $r_{j,\tau}$ is the relative ratio between current and data-generating policies; while $\tilde{A}_{j,\tau}$ is the advantage estimator for token $\tau$ in output $\tilde o_j$. Let $R(\tilde o_j)$ be the reward for output $\tilde o_j$ on prompt $q$, taking values of $-1$ (incorrect) or $1$ (correct). Based on the whole set of $n$ rollouts $\{\tilde o_j\}$, there are two popular advantage estimators:
\begin{itemize}[leftmargin=5mm]
    \item In Dr.~GRPO~\citep{ref:liu2025understanding}, $\tilde A_j = \tilde A_{j,\tau} = R(\tilde o_j) - \frac{1}{n} \sum_{k} R(\tilde o_{k})$, which uses all rollouts to compute the mean.
    \item In RLOO~\citep{ahmadian2024back}, $\tilde A_j = \tilde A_{j,\tau} = R(\tilde o_j) - \frac{1}{n-1} \sum_{k \neq j} R(\tilde o_{k})$, which excludes the $j$-th output when computing the mean.
\end{itemize}
Both advantage estimators are token-independent: all tokens in the $j$-th rollout admit the same advantage $\tilde A_j$. In this paper, we will focus on a particular training regime under the next assumption.

\begin{assumption} \label{assumption:onpolicy_noKL}
We set the KL regularization term to zero, i.e., $\beta = 0$.
\end{assumption}

Assumption~\ref{assumption:onpolicy_noKL} is both practical and non-restrictive for the following reasons. In RLHF, reward models are only reliable near the distribution of the reference policy~\citep{gao2023scaling, chen2024odin, huang2024correcting, rame2024warm}, which makes KL regularization essential to prevent reward hacking~\citep{laidlaw2024correlated, song2024importance, ref:skalse2022defining}. In contrast, our work focuses on the RLVR setting, where rewards are computed by rule-based verifiers. These verifiable rewards eliminate concerns about reward miscalibration. Indeed, recent studies in RLVR have successfully removed the KL term while still achieving state-of-the-art performance~\citep{ref:yu2025dapo, ref:liu2025understanding}.

Assumption~\ref{assumption:onpolicy_noKL} simplifies the training objective by eliminating the KL term. The objective function reduces to:
\begin{equation*}
    J(\theta) = \mathbb{E}_{q \in \mathcal{Q}} \mathbb{E}_{\pi_{\text{old}}^{\otimes n}} \Bigg[ 
\frac{1}{n} \sum_{j=1}^n \frac{1}{|\tilde o_j|} \sum_{\tau=1}^{|\tilde o_j|} \min \left(
    r_{j,\tau} \tilde{A}_{j},\;
    \operatorname{clip}(r_{j,\tau}, 1-\epsilon, 1+\epsilon)\, \tilde{A}_{j}.
\right) 
\Bigg]
\end{equation*}
Whenever $r_{j, \tau}$ falls outside the clipping region in a direction that worsens the surrogate, $\min\Big(r_{j,\tau} \tilde{A}_{j},\; \operatorname{clip}(r_{j,\tau}, 1-\epsilon, 1+\epsilon)\, \tilde{A}_{j}\Big)$ collapses to the clipped term and contributes no gradient. To isolate precisely when the gradient is active, we introduce the indicator
\[
\mathbb{I}^{\text{unc}}_{j, \tau} = 
1 - \Big( \mathbf{1}\{r_{j,\tau}>1+\varepsilon\}\,\mathbf{1}\{\tilde A_j\ge 0\}
+ \mathbf{1}\{r_{j,\tau}<1-\varepsilon\}\,\mathbf{1}\{\tilde A_j<0\}\Big).
\]
Using this indicator, we define $\bar r_j = \frac{1}{|\tilde o_j|}
    \sum_{\tau=1}^{|\tilde o_j|}
    \mathbb{I}^{\text{unc}}_{j, \tau} \, r_{j,\tau}$ and rewrite the objective as:
\begin{equation}
    \label{eq:general_objective}
    J(\theta) =
    \mathbb{E}_{q\in\mathcal Q} \mathbb{E}_{\pi_{\mathrm{old}}^{\otimes n}}
    \left[
    \frac{1}{n}\sum_{j=1}^n \tilde A_j \bar r_j 
    \right] = \mathbb{E}_{q\in\mathcal Q} \mathbb{E}_{\pi_{\mathrm{old}}^{\otimes n}}
    \left[
    \frac{1}{n}\sum_{j=1}^n A_j(\{ \tilde o_k \}) \bar r_j
    \right].
\end{equation}

The next section studies the gradient variance of $J$.

\section{Analysis for Gradient Variance} \label{sec:variance}
In this section, we analyze the variance of the gradient estimator arising from sampling rollouts in RLVR algorithms such as GRPO and RLOO. Each update step combines two types of data: previously collected rollouts from past batches (off-policy) and newly generated rollouts for the current batch of prompts $\mc B$ (on-policy). However, as discussed in the subsequent Section~\ref{sec:allocation}, our research objective is to determine how many rollouts should be allocated to the prompts in $\mc B$. This allocation decision influences only the on-policy component of the gradient. The off-policy contributions, while present in the overall update, are determined entirely by past data and therefore remain unaffected by the number of new rollouts we choose to sample. Consequently, to understand how allocation impacts gradient variance, it is both natural and sufficient to isolate the variance arising from the on-policy rollouts generated by the current policy $\pi_\theta$. For these on-policy samples, the behavior and target policies coincide $\pi_{\mathrm{old}} = \pi_\theta$, hence $\bar r_j = 1$.

The gradient contribution of these on-policy rollouts is: 
\begin{align}
\nabla_\theta  J^{\mathrm{on}}(\theta) 
&= \E_{q \in \mathcal B} \E_{ \pi_{\theta}^{\otimes n}} \Bigg[
\frac{1}{n} \sum_{j=1}^n A_j(\{ \tilde o_k \}) \underbrace{\frac{1}{|\tilde o_j|}\sum_{\tau=1}^{|\tilde o_j|}  \nabla_\theta \log \pi_\theta( \tilde o_{j,\tau} | q, \tilde o_{j, <\tau} )}_{\Let  H(\tilde o_j)}
\Bigg].
\end{align}

Above, $H(\tilde o_j)$ is the average log-likelihood gradient, where the average is taken over all tokens in the $j$-th rollout. Consider a specific prompt $q$ and the $n$ rollouts $\{\tilde o_k\}$, the contribution of $q$ to the calculation of the sample gradient is
$
\frac{1}{n} \sum_{j=1}^n A_j(\{ \tilde o_k \}) H(\tilde o_j)$,
and we omit the subscript $q$ to avoid clutter. This quantity is a random vector because $H(\tilde o_j)$ is a random vector. To make the variance analysis mathematically tractable while accurately proxying the scale of the parameter updates, we will study the $L_2$ norm of this gradient gradient contribution. To this end, we are interested in the (univariate) random variable:
\[
\tilde G \Let \frac{1}{n} \sum\nolimits_{j=1}^n A_j(\{ \tilde o_k \}) \|H(\tilde o_j)\|_2.
\]
To simplify the notation, we define the random variables $\tilde R_j = R(\tilde o_j)$ and $\tilde Z_j = \|H(\tilde o_j)\|_2$ for all $j$.
We first consider the Dr.~GRPO case. The gradient $\tilde G$ has the specific form:
\[
\tilde G \Let \frac{1}{n} \sum\nolimits_{j=1}^n \big(\tilde R_j - \frac{1}{n} \sum\nolimits_{k=1}^n \tilde R_k \big) \tilde Z_j.
\]
To have a rigorous analysis of the variance of $\tilde G$, we make the following assumption explicitly.
\begin{assumption} \label{a2}
    \begin{enumerate}[leftmargin=5mm, label=(\roman*)]
    \item $\tilde R_1, \ldots, \tilde R_n$ are independent and identically distributed (i.i.d.) copies of $\tilde R$, and $\tilde Z_1, \ldots, \tilde Z_n$ are also i.i.d. copies of $\tilde Z$, 
    \item $\{\tilde Z_j\}$ and $\{\tilde R_j\}$ are uncorrelated up to second-order: $\mathrm{Cov}(\tilde R_k, \tilde Z_j) = 0$ for any $(k, j)$ and $\mathrm{Cov}(\tilde R_k \tilde R_{k'}, \tilde Z_j \tilde Z_{j'}) = 0$ for any $(k, j, k', j')$
    \end{enumerate}
\end{assumption}

Assumption~\ref{a2} is realistic in common settings: given the same prompt and fixed sampling procedure, different rollouts are independent samples from the same distribution $\pi_\theta$. Thus, it is reasonable to assume that $\{\tilde o_j\}$ are i.i.d., which induces Assumption~\ref{a2}(i). Assumption~\ref{a2}(ii) needs further justification because for the $j$-th rollout, the reward $\tilde R_j$ may be correlated with the average gradient $\tilde Z_j$. However, we have strong reasons to believe that it holds because $\tilde R$ is a Bernoulli distribution, and $\tilde Z_j$ aggregates variations both over the number of tokens (from the definition of $H(\tilde o_j)$) and across the high-dimensional parameter space (from $\|H(\tilde o_j)\|_2$). Empirically, we include in Appendix~\ref{app:test} a statistical test to support our assumption.

The next result establishes the variance for the Dr.~GRPO case.

\begin{proposition}[Dr.~GRPO gradient variance]
\label{prop:drgrpo_var}
Consider a prompt with binary reward $R(\tilde o_j) \in \{1, -1\}$ with $\mathbb{P}(R(\tilde o_j)=1)=p$. If Assumption~\ref{a2} holds and the variance of the projected gradient $\tilde Z$ is $\sigma_Z^2$, the variance of the per-prompt projected Dr.~GRPO gradient estimator with $n$ rollouts is
\[
\mathrm{Var}(\tilde G) = \frac{ (n-1)}{n^2} 4 \sigma_Z^2 p(1-p).
\]
\end{proposition}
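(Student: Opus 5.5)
The plan is to compute $\mathrm{Var}(\tilde G)$ directly, using the fact that the Dr.~GRPO advantages sum to zero. Write $\bar R = \frac1n\sum_k \tilde R_k$, $D_j = \tilde R_j - \bar R$, $\mu_Z = \E[\tilde Z]$ and $W_j = \tilde Z_j - \mu_Z$. Since $\sum_j D_j = 0$, we may replace $\tilde Z_j$ by its centered version:
\[
\tilde G = \frac1n \sum_{j=1}^n D_j \tilde Z_j = \frac1n \sum_{j=1}^n D_j W_j .
\]
This representation already explains why the answer is proportional to $\sigma_Z^2$: if $\tilde Z$ were deterministic, $\tilde G$ would vanish identically.

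The first step is to show $\E[\tilde G]=0$. Each $D_j$ is a linear combination of the $\tilde R_k$, and $\mathrm{Cov}(\tilde R_k,\tilde Z_j)=0$ by Assumption~\ref{a2}(ii). Hence $\E[D_j W_j] = \E[D_j]\,\E[W_j] = 0$, and therefore $\mathrm{Var}(\tilde G) = \E[\tilde G^2]$.

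The second step expands the square:
\[
\E[\tilde G^2] = \frac1{n^2}\sum_{j,j'} \E[D_j D_{j'} W_j W_{j'}].
\]
The key move is to factor each term as $\E[D_jD_{j'}]\,\E[W_jW_{j'}]$. To justify this, note that $D_jD_{j'}$ is a linear combination of products $\tilde R_k\tilde R_{k'}$. Also,
\[
W_jW_{j'} = \tilde Z_j\tilde Z_{j'} - \mu_Z\tilde Z_j - \mu_Z\tilde Z_{j'} + \mu_Z^2 .
\]
The resulting cross covariances are all zero: the terms $\mathrm{Cov}(\tilde R_k\tilde R_{k'},\tilde Z_j\tilde Z_{j'})$ vanish by the second part of Assumption~\ref{a2}(ii). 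The terms $\mathrm{Cov}(\tilde R_k\tilde R_{k'},\tilde Z_j)$ require slightly more care. One can read them off the stated assumption with a constant factor (taking $j'$ so that the product is linear in the $\tilde Z$'s), or treat them as part of the intended "uncorrelated up to second order" hypothesis. I expect this bookkeeping to be the main obstacle, namely making sure that exactly what Assumption~\ref{a2}(ii) provides is enough for the factorization.

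Once the factorization holds, independence of the $\tilde Z_j$ from Assumption~\ref{a2}(i) gives $\E[W_jW_{j'}] = \sigma_Z^2\,\mathbf 1\{j=j'\}$. Hence
\[
\mathrm{Var}(\tilde G) = \frac{\sigma_Z^2}{n^2}\sum_{j=1}^n \E[D_j^2].
\]
The remaining sum is the standard sample-variance identity for i.i.d.\ variables:
\[
\sum_j \E[D_j^2] = (n-1)\,\mathrm{Var}(\tilde R).
\]
For $\tilde R\in\{-1,1\}$ with $\mathbb P(\tilde R=1)=p$ we have $\tilde R = 2B-1$ with $B$ Bernoulli$(p)$, so $\mathrm{Var}(\tilde R)=4p(1-p)$. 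Substituting yields
\[
\mathrm{Var}(\tilde G)=\frac{n-1}{n^2}\,4\sigma_Z^2\,p(1-p),
\]
which is the claimed formula.
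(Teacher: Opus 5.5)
Your argument is correct in outline and takes a different route from the paper, but one step needs repairing. The paper never uses the fact that the Dr.~GRPO advantages sum to zero. It writes $\E[\tilde G^2]=C_1+C_2-C_3$ and enumerates the index patterns of the double, triple and quadruple sums. It evaluates each resulting moment by factoring the reward part from the $\tilde Z$ part. Only after collecting coefficients does it see every $\mu_Z^2$ term cancel, leaving $\frac{n-1}{n^2}\sigma_Z^2\mathrm{Var}(\tilde R)$. Your identity $\sum_j D_j=0$ explains that cancellation structurally and reduces the computation to the sample-variance identity, so it is much shorter. It also gives the RLOO result immediately. The RLOO advantage equals $\frac{n}{n-1}D_j$, so the RLOO estimator is $\frac{n}{n-1}$ times the Dr.~GRPO one and has variance $\frac{\sigma_Z^2}{n-1}\mathrm{Var}(\tilde R)$.

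The step that does not go through as written is the term-by-term factorization $\E[D_jD_{j'}W_jW_{j'}]=\E[D_jD_{j'}]\,\E[W_jW_{j'}]$. Because $W_jW_{j'}$ contains the linear terms $-\mu_Z\tilde Z_j-\mu_Z\tilde Z_{j'}$, this needs $\mathrm{Cov}(\tilde R_k\tilde R_{k'},\tilde Z_j)=0$, which is not part of Assumption~\ref{a2}. Your proposed way of extracting it fails: the second condition in Assumption~\ref{a2}(ii) always pairs $\tilde R_k\tilde R_{k'}$ with a product of two $\tilde Z$'s, and no choice of $j'$ makes $\tilde Z_j\tilde Z_{j'}$ linear.

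The difficulty comes from centering before factoring, and it disappears if you reverse the order. Keep $\tilde G=\frac1n\sum_j D_j\tilde Z_j$ and expand each $D_jD_{j'}$ as a deterministic linear combination of the products $\tilde R_k\tilde R_{k'}$. Assumption~\ref{a2}(ii), exactly as stated, then gives $\E[D_jD_{j'}\tilde Z_j\tilde Z_{j'}]=\E[D_jD_{j'}]\bigl(\mu_Z^2+\sigma_Z^2\mathbf 1\{j=j'\}\bigr)$. Summing over $(j,j')$, the $\mu_Z^2$ part equals $\mu_Z^2\,\E\bigl[(\sum_j D_j)^2\bigr]=0$. You are left with $\frac{\sigma_Z^2}{n^2}\sum_j\E[D_j^2]$, and the rest of your argument is fine from there. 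This version uses no hypotheses beyond those the paper's own proof relies on.
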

We now switch gears to the RLOO case. The RLOO gradient $\tilde G$ has the specific form:
\[
\tilde G \Let \frac{1}{n} \sum\nolimits_{j=1}^n \big(\tilde R_j - \frac{1}{n} \sum\nolimits_{\substack{k=1\\k\neq j}}^n \tilde R_k \big) \tilde Z_j.
\]

\begin{proposition}[RLOO gradient variance]
\label{prop:rloo_var}
Consider a prompt with binary reward $R(\tilde o_j) \in \{1, -1\}$ with $\mathbb{P}(R(\tilde o_j)=1)=p$. If Assumption~\ref{a2} holds and the variance of the projected gradient $\tilde Z$ is $\sigma_Z^2$, the variance of the per-prompt projected RLOO gradient estimator with $n$ rollouts is
\[
\mathrm{Var}(\tilde G) = \frac{1 }{n - 1} 4\sigma_Z^2 p(1-p).
\]
\end{proposition}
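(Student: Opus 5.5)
The plan is to reduce the RLOO case to Proposition~\ref{prop:drgrpo_var} with an algebraic identity, not to redo the second-moment computation. I will use the RLOO advantage as defined in Section~\ref{sec:RLVR}, i.e.\ with the leave-one-out mean normalized by $\frac{1}{n-1}$. The claimed formula $\frac{1}{n-1}4\sigma_Z^2p(1-p)$ corresponds to this normalization; the $\frac{1}{n}$ in the display just above appears to be a typo.

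\textbf{Step 1.} Write $\bar R = \frac1n\sum_{k=1}^n \tilde R_k$. Then
\[
\tilde R_j - \frac{1}{n-1}\sum_{k\neq j}\tilde R_k
= \tilde R_j - \frac{n\bar R - \tilde R_j}{n-1}
= \frac{n}{n-1}\big(\tilde R_j - \bar R\big).
\]
So the RLOO advantage is exactly the Dr.~GRPO advantage rescaled by the deterministic factor $\frac{n}{n-1}$.

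\textbf{Step 2.} Multiply by $\tilde Z_j$ and average over $j$. This gives $\tilde G_{\mathrm{RLOO}} = \frac{n}{n-1}\,\tilde G_{\mathrm{DrGRPO}}$ pathwise, for the same rollouts $\{\tilde o_j\}$. The hypotheses of Proposition~\ref{prop:drgrpo_var} (Assumption~\ref{a2}, binary rewards with success probability $p$, $\mathrm{Var}(\tilde Z)=\sigma_Z^2$) are statements about the joint law of $\{\tilde R_j,\tilde Z_j\}$ only. They are therefore unchanged, and that proposition applies directly to $\tilde G_{\mathrm{DrGRPO}}$.

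\textbf{Step 3.} By scaling of variance,
\[
\mathrm{Var}(\tilde G_{\mathrm{RLOO}}) = \frac{n^2}{(n-1)^2}\cdot\frac{n-1}{n^2}\,4\sigma_Z^2p(1-p) = \frac{1}{n-1}\,4\sigma_Z^2p(1-p),
\]
which is the claim (for $n\ge 2$).

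There is essentially no obstacle beyond getting the normalization right. As a fallback, I could compute the variance directly. Write $\tilde G$ as $\frac1n\sum_j\sum_k c_{jk}\tilde R_k\tilde Z_j$ and expand $\E[\tilde G^2]$ into the cases $j=j'$, $k=k'$, etc. Then use Assumption~\ref{a2}(ii) to factor $\E[\tilde R_k\tilde R_{k'}\tilde Z_j\tilde Z_{j'}]$, with $\E[\tilde R]=2p-1$ and $\mathrm{Var}(\tilde R)=4p(1-p)$. The main bookkeeping burden there is tracking the diagonal and off-diagonal index coincidences. The reduction above avoids all of it.
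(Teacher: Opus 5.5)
Your argument is correct, and it takes a genuinely different route from the paper's.

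\textbf{What the paper does.} It never uses the identity $\tilde R_j - \frac{1}{n-1}\sum_{k\neq j}\tilde R_k = \frac{n}{n-1}(\tilde R_j - \bar R)$. Instead it expands $\E[\tilde G^2]$ for RLOO into three terms $C_1, C_2, C_3$. It writes each leave-one-out double sum as the full double sum minus its diagonal, and plugs in the index-coincidence moment computations from the Dr.~GRPO proof. It then collects the coefficients of $\E[\tilde R^2]$ and $(\E[\tilde R])^2$ to reach $\frac{\sigma_Z^2}{n-1}\mathrm{Var}(\tilde R)$.

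\textbf{What your route buys.} The pathwise relation $\tilde G_{\mathrm{RLOO}} = \frac{n}{n-1}\tilde G_{\mathrm{DrGRPO}}$ replaces all of that with one line of algebra plus variance scaling. It makes transparent that the two estimators differ only by a deterministic factor, which is exactly why their variances differ by $n^2/(n-1)^2$. It also gives the continuous-reward version, Proposition~\ref{prop:rloo_var_cont}, immediately from Proposition~\ref{prop:drgrpo_var_cont}.

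\textbf{What the paper's route buys.} The brute-force expansion does not depend on spotting a proportionality. The same bookkeeping would therefore handle baselines whose coefficient matrix is not a multiple of the centering matrix.

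\textbf{On the normalization.} Your reading is right: the appendix proof uses $\frac{1}{n-1}$. With the $\frac{1}{n}$ from the main-text display, the advantage coefficients would no longer sum to zero over $k$ or over $j$. Terms involving $\E[\tilde R]$ and $\E[\tilde Z]$ would then survive, and the stated formula would be false.
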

\section{Predictive rollout Allocation Strategy} \label{sec:allocation}

Consider a specific iteration $t$ when we have drawn a mini-batch $\mathcal B_t$ from the set of training prompts. Propositions~\ref{prop:drgrpo_var}-\ref{prop:rloo_var} indicate that the gradient variance for a prompt depends on the number of rollouts. Consequently, a uniform allocation of $n$ rollouts for every prompt could be inefficient. Section~\ref{sec:variance} reveals that the per-prompt gradient variance depends on the success probability $p_q$, which is the probability that model $\theta_t$ answers prompt $q$ correctly. However, this probability is not observable ex ante without performing rollouts, and we need to resort to an estimate $\hat {p}_q $ to make allocation decisions. Building an estimator for the success probability is challenging for at least two reasons: First, the success probability is, unfortunately, not static: it depends on the model weights, which evolve after every training iteration. As the weights are updated, the distribution over the outputs changes, and consequently, the success probability drifts. Second, prompt embeddings may not contain sufficiently informative signal for parametric classifiers.

To improve rollout efficiency, we propose VIP, the Variance-Informed Predictive allocation strategy, which minimizes the minibatch gradient variance by allocating a different number of rollouts $\{n_q\}$ for each prompt in the minibatch. The complete workflow of \Ours, from prediction to optimized rollout allocation, is illustrated in Figure~\ref{fig:framework}. \Ours~comprises two main components: (i) a nonparametric Gaussian process model for predicting the success probability of the current model on each training prompt (green boxes), and (ii) an optimization model for making optimal allocation decisions given the computing budget (blue box).  At the beginning of iteration $t$, we first predict the success probability for all $q \in \mc B_t$, as described in Section~\ref{sec:gp-recursive-fixed-kernel}. The probability estimates serve as input to our optimization framework for allocating rollouts across prompts in Section~\ref{sec:budget-allocation}.

\begin{figure}[!h]
    \centering
    \includegraphics[width=\linewidth]{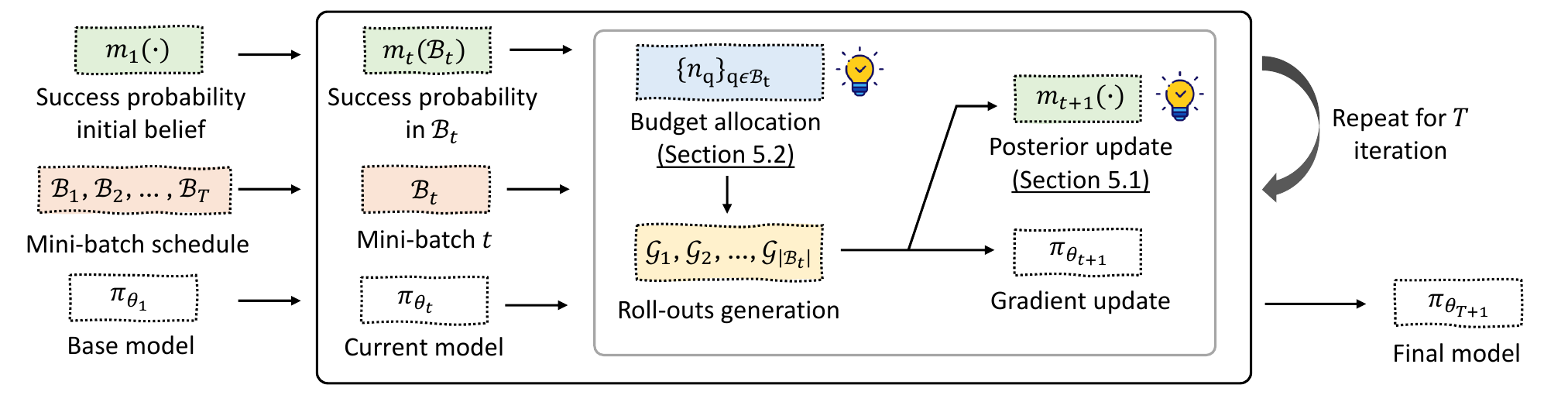}
    \caption{The process starts with an initial belief over prompt success probabilities. At each step $t$, a mini-batch $\mathcal{B}_t$ is selected, and the belief function $m_t(\cdot)$ predicts the success probabilities of the prompts in $\mathcal{B}_t$. A budget allocation module assigns rollout budgets $\{n_q\}$, rollouts are generated, and the resulting data updates the model and beliefs. Repeated for $T$ steps, this yields a fine-tuned model $\pi_{\theta_{T + 1}}$ with improved performance and efficient rollout usage.}
    \label{fig:framework}
    \vspace{-5mm}
\end{figure}

\subsection{GP Prior and Recursive Posterior Update}
\label{sec:gp-recursive-fixed-kernel}

Given the training prompts $\mathcal{Q} = \{q_1, q_2, \dots, q_Q\}$, we denote their embeddings as $\mc D = \{x_q\}_{q=1}^Q$. At every iteration $t$, the success probability of the current model on prompt $q$ is modeled using a sigmoid link function on the latent value $g_t(x_q)$:
\[
p_{q,t} = \mathrm{sigmoid}(g_t(x_q)) = 1/ [1+\exp(-g_t(x_q))] \in (0, 1),
\]
where $g_t : \mathbb{R}^d \to \mathbb{R}$ is a latent function over prompt embeddings. Because $g_t$ is real-valued, it is simpler to place a Gaussian process on $g_t$. Toward this goal, we set $g_t \sim \mathrm{GP}(m_t(\cdot), \mathcal{K}(\cdot,\cdot))$,
where $m_t$ is the mean function and $\mc K$ is a radial basis function (RBF) kernel with bandwidth $h > 0$: $\mathcal{K}(x,x') = \exp(-\|x-x'\|_2^2 / (2h^2))$.  

\textbf{Initialization.} At $t=1$, we use a zero-mean prior $g_1 \sim \mathrm{GP}(0,\mathcal{K})$. Over the dataset $\mathcal{D}=\{x_q\}_{q=1}^Q$,  this gives
$g_1(\mathcal{D}) \sim \mathcal{N}(0, \Sigma)$ where $\Sigma = \mathcal{K}(\mathcal{D}, \mathcal{D})$ is the kernel matrix over all prompts.

\textbf{Prediction.} At iteration $t$, the prior $m_t$ is used to predict $\hat p_{q,t} = \mathrm{sigmoid}(m_{t}(x_q))$ for $q \in \mc B_t$. These predicted values will be sent to the optimization problem to compute the rollout allocation.

\textbf{Sequential updates.}  At iteration $t \ge 1$, the latent values for all prompts are captured by a random vector $[g_t(x_1), \dots, g_t(x_Q)]^\top \sim \mathcal{N}(m_t,\Sigma)$. For $q \in \mathcal{B}_t$, we observe $n_q$ rollouts and rewards $\tilde R_{q,j} \in \{-1,1\}$ for $j = 1, \ldots, n_q$. Then, we compute the clipped average:
\[
\bar{R}_{q} = \frac{1}{n_q}\sum\nolimits_{j=1}^{n_q}\tilde R_{q,j} \in [-1, 1], \quad
\hat{p}_{q,t} = \mathrm{clip}\!\big(\frac{\bar{R}_{q}+1}{2}, \epsilon, 1-\epsilon\big) \in (\epsilon, 1- \epsilon), 
\]
which induces an observation for the latent value $\hat{g}_{q,t} = \mathrm{sigmoid}^{-1}(\hat{p}_{q,t}) \in \mathbb{R}$ for $q \in \mc B_t$. Clipping is necessary here to avoid the situation when $\bar R_{q,t} = -1$ or $1$, which leads to infinite values for the latent variable. The collection of these latent values is denoted $\hat g_{\mc B_t}$. Let $\mc B_t^c$ be the complement set containing all prompts that are \textit{not} in the mini-batch $\mc B_t$. We can partition the mean vector and  the covariance matrix into the block form:
\[
m_t = \begin{bmatrix} m_{t,\mc B_t} \\ m_{t,\mc B_t^c} \end{bmatrix}, \qquad
\Sigma = \begin{bmatrix}
\Sigma_{\mathcal{B}_t\mathcal{B}_t} & \Sigma_{\mathcal{B}_t\mathcal{B}_t^c} \\
\Sigma_{\mathcal{B}_t^c\mathcal{B}_t} & \Sigma_{\mathcal{B}_t^c\mathcal{B}_t^c}
\end{bmatrix},
\]
the posterior over unqueried prompts at iteration $t$ is $g_{t,\mathcal{B}_t^c} \mid \hat g_{\mc B_t} \sim \mathcal{N}(m_t^{\star}, \Sigma^{\star})$ with
\begin{equation*}
m_{t,\mathcal{B}_t^c}^{\star} = m_{t,\mathcal{B}_t^c} + 
\Sigma_{\mathcal{B}_t^c\mathcal{B}_t}\,\Sigma_{\mathcal{B}_t\mathcal{B}_t}^{-1}\,(\hat g_{\mc B_t} - m_{t,\mathcal{B}_t}), \quad 
\Sigma^{\star} = \Sigma_{\mathcal{B}_t^c\mathcal{B}_t^c} - 
\Sigma_{\mathcal{B}_t^c\mathcal{B}_t}\,\Sigma_{\mathcal{B}_t\mathcal{B}_t}^{-1}\,\Sigma_{\mathcal{B}_t\mathcal{B}_t^c}.
\end{equation*}

\textbf{Next-step prior.} The prior mean at iterate $t$ is updated into the posterior mean as follows: we set $m_{t+1}(x_q) = \hat{g}_{q,t}$ if $q \in \mathcal{B}_t$ and set $m_{t+1}(x_q) = m_{t,\mathcal{B}_t^c}^{\star}(x_q)$ if $q \in \mathcal{B}_t^c$. This posterior mean $m_{t+1}$ will serve as the prior in iteration $t+1$.

\subsection{Adaptive Budget Allocation for Gradient Variance Minimization}
\label{sec:budget-allocation}

Consider a mini-batch $\mathcal B_t$ of $B$ prompts, and every prompt $q \in \mathcal B_t$ has a \textit{predicted} success probability $\hat p_q$ under the GP model from Section~\ref{sec:gp-recursive-fixed-kernel}. We now focus on deciding the number of rollouts $\{n_q\}$ assigned to each prompt in this batch $\mc B_t$ under a hard constraint of a total budget of $C$ rollouts. Additionally, we impose that the number of rollouts should be between a lower bound $L$ and an upper bound $U$: if the number of rollouts is too small, it is unlikely to obtain discriminative reward signals, while a large number of rollouts may lead to overfitting. For a meaningful setting, we require $L \ge 3$, and assume that $BL \leq C \leq BU$.

Our objective is to minimize the sum of the gradient variance induced by prompts in the mini-batch $\mc B_t$. This problem can be formulated as an integer optimization problem:
\begin{equation}
    \label{eq:discrete_allocate}
        \min \left\{  \sum\nolimits_{q \in \mc B_t} \mathrm{Var}(\tilde G_q) ~:~\sum\nolimits_{q \in \mc B_t} n_q = C, \quad n_q \in \{L, L + 1, \ldots, U\} \quad \forall q \in \mc B_t \right\}.
\end{equation} 
The variance $\mathrm{Var}(\tilde G_q)$ is likely a nonlinear function of the rollout allocation $n_q$, and \eqref{eq:discrete_allocate} becomes a nonlinear integer optimization problem. To tackle this problem, we will first solve its continuous relaxation and then apply a heuristic rounding algorithm.

\textbf{Relaxed Solution for Budget Allocation.} Proposition~\ref{prop:drgrpo_var} provides the per-prompt gradient variance $\mathrm{Var}(\tilde G_q)$ for Dr.~GRPO. By defining $a_q \Let 4\, \sigma_{Z_q}^2 \hat p_q (1 - \hat p_q)$ for every $q \in \mc B_t$, the continuous relaxation of problem~\eqref{eq:discrete_allocate} for Dr.~GRPO is
\begin{equation}
    \label{eq:drgrpo_relaxed}
    \min \left\{
        \sum\nolimits_{q \in \mathcal{B}_t} a_q \frac{n_q - 1}{n_q^2}
        ~:~
        \sum\nolimits_{q \in \mathcal{B}_t} n_q = C, \quad L \leq n_q \leq U, \quad n_q \in \mathbb{R} \ \forall q \in \mathcal{B}_t
    \right\}.
\end{equation}
The next theorem asserts a computationally efficient method to solve~\eqref{eq:drgrpo_relaxed}.

\begin{theorem}[Continuous allocation, Dr.~GRPO] \label{thm:opt-drgrpo}
For each $q \in \mc B_t$, let $n_q^\star$ be the function:
\begin{equation}
    n_q^\star(\lambda) = 
    \begin{cases}
        U & \text{if } \lambda \leq a_q \frac{U-2}{U^3}, \\[6pt]
        \text{the unique solution to } \lambda = a_q \frac{n_q-2}{n_q^3} & \text{if } a_q \frac{U-2}{U^3} < \lambda < a_q \frac{L-2}{L^3}, \\[6pt]
        L & \text{if } \lambda \geq a_q \frac{L-2}{L^3}.
    \end{cases}
\end{equation}
If $BL \leq C \leq BU$, the algebraic equation $\sum_{q} n_q^\star(\lambda^\star) = C$ has a unique solution $\lambda^\star$. Then the unique minimizer of~\eqref{eq:drgrpo_relaxed} is given by $n_q^\star = n_q^\star(\lambda^\star)$ for all $q$. Moreover, $\lambda^\star$ can be found by bisection.
\end{theorem}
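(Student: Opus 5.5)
The plan is a KKT analysis, using that \eqref{eq:drgrpo_relaxed} is a separable convex program over a box intersected with one hyperplane. Write $f_q(n) = a_q (n-1)/n^2 = a_q(1/n - 1/n^2)$. Then
\[
f_q'(n) = -a_q\frac{n-2}{n^3}, \qquad f_q''(n) = a_q\frac{2n-6}{n^4}.
\]
Since $L \ge 3$, we have $f_q'' \ge 0$ on $[L,U]$, so each $f_q$ is convex there. If $a_q>0$ (i.e.\ $\hat p_q\in(0,1)$ and $\sigma_{Z_q}>0$), then $f_q$ is strictly convex on $[L,U]$, since $f_q''$ vanishes only at the endpoint $n=3$. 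The feasible set is a nonempty compact polytope because $BL\le C\le BU$. Hence a minimizer exists, and it is unique by strict convexity of the sum. The linear constraints give Slater-free KKT necessity and sufficiency, so it remains to characterize the KKT points.

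Introduce a multiplier $-\lambda$ for $\sum_q n_q = C$, so that $\lambda$ plays the role of the marginal variance reduction. For a fixed $\lambda$, stationarity decouples into the one-dimensional problems $\min_{n\in[L,U]} f_q(n) + \lambda n$. Their optimality condition is $-f_q'(n) = \lambda$ in the interior, $-f_q'(U)\ge\lambda$ at $n=U$, and $-f_q'(L)\le \lambda$ at $n=L$. Set $\phi_q(n) = -f_q'(n) = a_q (n-2)/n^3$. We have $\phi_q'(n) = a_q(6-2n)/n^4 \le 0$ for $n\ge 3$, so $\phi_q$ is strictly decreasing on $[L,U]$ and maps it onto $[a_q(U-2)/U^3,\, a_q(L-2)/L^3]$. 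This gives exactly the three-case formula for $n_q^\star(\lambda)$, with the middle equation having a unique root by strict monotonicity and continuity. Consequently $n_q^\star(\lambda)$ is continuous and nonincreasing in $\lambda$, equal to $U$ for small $\lambda$ and to $L$ for large $\lambda$.

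Define $S(\lambda) = \sum_q n_q^\star(\lambda)$. It is continuous and nonincreasing, and it takes the value $BU$ for $\lambda \le \min_q a_q(U-2)/U^3$ and $BL$ for $\lambda\ge\max_q a_q(L-2)/L^3$. By the intermediate value theorem, some $\lambda^\star$ satisfies $S(\lambda^\star)=C$. The pair $(n^\star(\lambda^\star),\lambda^\star)$ satisfies the KKT conditions, so $n^\star(\lambda^\star)$ is the unique minimizer.

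The main subtlety is the claimed \emph{uniqueness} of $\lambda^\star$. When $C\in(BL,BU)$, some coordinate is interior at $\lambda^\star$, where $n_q^\star$ is strictly decreasing, so $S$ is strictly decreasing near $\lambda^\star$ and the root is unique. At the extremes $C=BL$ or $C=BU$, and in degenerate cases with $a_q=0$, $S$ is flat on a half-line. I would handle this by noting that the allocation $n_q^\star(\lambda^\star)$ is the same for every root, which is what matters. If needed, I would restrict $\lambda$ to the interval $[\min_q a_q(U-2)/U^3, \max_q a_q(L-2)/L^3]$, taking its endpoint in the extreme cases, and assume $a_q>0$. Finally, monotonicity and continuity of $S$ on this bracketing interval justify bisection. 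Each evaluation of $S$ requires solving the scalar monotone equations $\phi_q(n)=\lambda$, which can themselves be done by inner bisection or as a cubic.
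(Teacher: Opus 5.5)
Your argument follows the same route as the paper's proof. Both use convexity of the separable objective and KKT with a single multiplier $\lambda$ for the budget constraint. Both obtain the three-case thresholding by inverting the decreasing map $n\mapsto a_q\frac{n-2}{n^3}$ on $[L,U]$, and both solve the monotone scalar equation $S(\lambda)=C$ by bisection.

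You are more careful than the paper in two places:
\begin{itemize}
\item The paper deduces uniqueness of the minimizer from plain convexity while allowing $a_q\ge 0$. You correctly use strict convexity under $a_q>0$.
\item The paper obtains a root of $S(\lambda)=C$ from monotonicity and the limiting values $BU$, $BL$ alone. You also prove continuity of $S$, which is what the intermediate value theorem and bisection actually need. Continuity fails if some $a_q=0$, since then $n_q^\star$ jumps from $U$ to $L$ at $\lambda=0$.
\end{itemize}

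One step is wrong, however. For $C\in(BL,BU)$ it is \emph{not} true that some coordinate must be interior at $\lambda^\star$. $S$ is constant on any interval of $\lambda$ disjoint from all the open intervals $\bigl(a_q\frac{U-2}{U^3},\,a_q\frac{L-2}{L^3}\bigr)$, and such gaps occur as soon as the $a_q$ are sufficiently spread out.

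For example, take $B=2$, $L=3$, $U=10$, $a_1=10$, $a_2=1$. Then $a_2\frac{L-2}{L^3}=\frac{1}{27}<\frac{2}{25}=a_1\frac{U-2}{U^3}$, so $n^\star(\lambda)=(10,3)$ and $S(\lambda)=13$ for every $\lambda\in[\frac{1}{27},\frac{2}{25}]$. For the admissible budget $C=13$ (note $BL=6\le 13\le 20=BU$), all these $\lambda$ are roots.

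So the uniqueness of $\lambda^\star$ asserted in the statement is false in general, not only at $C\in\{BL,BU\}$. The paper's proof infers it from mere monotonicity of $S$ and has the same gap.

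Your fallback is the correct repair, and it should be the main argument for every $C$:
\begin{itemize}
\item The root set of the continuous nonincreasing function $S-C$ is a nonempty closed interval, possibly a half-line.
\item Every root yields a KKT point, hence the unique minimizer.
\item So the allocation does not depend on which root bisection returns.
\end{itemize}
Uniqueness of $\lambda^\star$ itself does hold whenever the minimizer has some coordinate $n_q\in(L,U)$, because every root must then equal $a_q\frac{n_q-2}{n_q^3}$.
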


Similarly, for the RLOO case, we can leverage Proposition~\ref{prop:rloo_var} to define the same parameters $a_q \Let 4\, \sigma_{Z_q}^2 \hat p_q (1 - \hat p_q)$ for $q \in \mc B_t$. The continuous relaxation of~\eqref{eq:discrete_allocate} for RLOO is
\begin{equation}
    \label{eq:rloo_relaxed}
    \min \left\{
        \sum\nolimits_{q \in \mathcal{B}_t} a_q \frac{1}{n_q-1}
        ~:~
        \sum\nolimits_{q \in \mathcal{B}_t} n_q = C, \quad L \leq n_q \leq U, \quad n_q \in \mathbb{R} \ \forall q \in \mathcal{B}_t
    \right\}.
\end{equation}
We can solve~\eqref{eq:rloo_relaxed} efficiently thanks to the next result.
\begin{theorem}[Continuous allocation, RLOO] \label{thm:opt-rloo}
For each $q \in \mc B_t$, let $n_q^\star$ be the function:
\begin{equation}
    n_q^{\star}(\lambda) = 
    \begin{cases}
        U & \text{if } \lambda \leq a_q/(U-1)^2, \\[6pt]
        1 + \sqrt{a_q/\lambda} & \text{if } a_q/(U-1)^2 < \lambda < a_q/(L-1)^2, \\[6pt]
        L & \text{if } \lambda \geq a_q/(L-1)^2
    \end{cases}
\end{equation}
If $BL \leq C \leq BU$, the algebraic equation $\sum_{q} n_q^\star(\lambda^\star) = C$ has a unique solution $\lambda^\star$. Then the unique minimizer of~\eqref{eq:rloo_relaxed}  is given by $n_q^\star = n_q^\star(\lambda^\star)$ for all $q$. Moreover, $\lambda^\star$ can be found by bisection.
\end{theorem}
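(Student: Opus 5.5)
The statement to prove is Theorem~\ref{thm:opt-rloo}. The plan is to treat~\eqref{eq:rloo_relaxed} as a separable convex resource-allocation problem and solve it through its KKT conditions.

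\textbf{Convexity.} For each $q$, the objective term is $f_q(n) = a_q/(n-1)$ on $[L,U]$, and $L \ge 3 > 1$. Its derivatives are
\[
f_q'(n) = -a_q/(n-1)^2, \qquad f_q''(n) = 2a_q/(n-1)^3 .
\]
If $a_q>0$ (that is, $\hat p_q \in (0,1)$ and $\sigma_{Z_q}>0$, which holds for GP predictions since the sigmoid maps into $(0,1)$), then $f_q$ is strictly convex and $-f_q'$ is strictly decreasing on $[L,U]$. The feasible set is a nonempty compact polytope, nonempty exactly because $BL \le C \le BU$. Hence a minimizer exists, and strict convexity of the sum gives uniqueness. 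The constraints are affine, so Slater-type qualification is automatic and the KKT conditions are necessary and sufficient.

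\textbf{KKT and the form of $n_q^\star(\lambda)$.} Attach a multiplier $-\lambda$ to the equality constraint and multipliers to the box constraints. Stationarity plus complementary slackness reduce, for each $q$, to a one-dimensional condition: $n_q$ minimizes $f_q(n) + \lambda n$ over $[L,U]$. Equivalently, $n_q$ is the projection onto $[L,U]$ of the root of $-f_q'(n) = \lambda$.

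Solving $a_q/(n-1)^2 = \lambda$ gives $n = 1+\sqrt{a_q/\lambda}$ for $\lambda>0$. Because $-f_q'$ is decreasing, the threshold cases fall out directly:
\begin{itemize}
\item if $\lambda \ge -f_q'(L) = a_q/(L-1)^2$, the minimizer is $L$;
\item if $\lambda \le a_q/(U-1)^2$, the minimizer is $U$;
\item otherwise the minimizer is the interior root.
\end{itemize}
This is exactly the piecewise formula in the statement. For $\lambda \le 0$ the minimizer is $U$, which is consistent with the first case.

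\textbf{Existence and uniqueness of $\lambda^\star$.} Define $S(\lambda) = \sum_q n_q^\star(\lambda)$. I will check three properties.
\begin{itemize}
\item \emph{Continuity.} Each $n_q^\star$ is continuous in $\lambda$, since the pieces agree at the thresholds: $1+\sqrt{a_q(U-1)^2/a_q} = U$, and similarly at $L$.
\item \emph{Monotonicity.} Each $n_q^\star$ is nonincreasing, and it is strictly decreasing on its interior interval.
\item \emph{Range.} $S(\lambda) = BU$ for $\lambda \le \min_q a_q/(U-1)^2$, and $S(\lambda) = BL$ for $\lambda \ge \max_q a_q/(L-1)^2$.
\end{itemize}
By the intermediate value theorem, a root of $S(\lambda)=C$ exists.

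Uniqueness of $\lambda^\star$ is the delicate step. When $C=BU$ or $C=BL$, every $\lambda$ in a whole half-line solves the equation, though all such $\lambda$ give the same allocation. When $BL<C<BU$, I argue as follows. At any solution, some $q$ must have $n_q^\star(\lambda)\in(L,U)$: otherwise all $n_q\in\{L,U\}$, and one needs a further argument that the thresholds cannot align. Instead of pursuing that, I will argue that $S$ is strictly decreasing wherever $BL<S<BU$. The reason is that if $S(\lambda)\in(BL,BU)$, then some $n_q^\star(\lambda)$ lies strictly between $L$ and $U$, or else $\lambda$ sits exactly at a threshold where nearby $\lambda$ moves some coordinate into the interior. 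Either way, $S$ strictly decreases in a neighborhood. Together with monotonicity, this gives a unique $\lambda^\star$ in the nondegenerate case; in the boundary cases I will state that the allocation, not $\lambda$, is unique.

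\textbf{Conclusion and bisection.} With $\lambda^\star$ found, the allocation $n_q^\star(\lambda^\star)$ together with the implied box multipliers satisfies KKT, so it is optimal, and it is the unique optimum by strict convexity. Bisection applies because $S$ is continuous and monotone on the bracket
\[
\Big[\min_q a_q/(U-1)^2,\ \max_q a_q/(L-1)^2\Big].
\]
The main obstacle is the uniqueness of $\lambda^\star$, specifically ruling out flat pieces of $S$ strictly inside $(BL,BU)$ and handling $a_q=0$; I would assume all $a_q>0$ there.
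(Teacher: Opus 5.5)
Your route is the same as the paper's: KKT conditions, the thresholded map $n_q^\star(\lambda)$, a monotone $S$, and bisection. You are more careful than the paper in three places: you supply continuity for the intermediate-value step, you assume $a_q>0$ explicitly, and you flag the cases $C\in\{BL,BU\}$.

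The genuine gap is in your uniqueness step. The claim that $S$ is strictly decreasing wherever $BL<S(\lambda)<BU$ is false. Your dichotomy misses the case where every coordinate sits at a bound with strict slack, i.e.\ $a_{q'}/(L-1)^2<\lambda<a_q/(U-1)^2$ for the coordinates $q'$ at $L$ and $q$ at $U$. Then no coordinate moves under small perturbations of $\lambda$.

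A concrete counterexample: take $B=2$, $L=3$, $U=16$, $a_1=1$, $a_2=10^{-3}$. For every $\lambda\in[a_2/4,\,a_1/225]=[2.5\times 10^{-4},\,1/225]$ you get $n_1^\star(\lambda)=16$ and $n_2^\star(\lambda)=3$. Hence $S\equiv 19\in(BL,BU)=(6,32)$, and for $C=19$ the equation $S(\lambda)=C$ has a whole interval of solutions.

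In general, $S$ is flat on every gap of $\bigcup_q\big(a_q/(U-1)^2,\,a_q/(L-1)^2\big)$. Such a gap appears whenever two consecutive sorted values of $a_q$ differ by a factor larger than $\big((U-1)/(L-1)\big)^2$. This is realistic, since $a_q\propto\hat p_q(1-\hat p_q)$ is tiny for nearly solved or nearly hopeless prompts. So the uniqueness of $\lambda^\star$ asserted in the theorem cannot be proved, even for $BL<C<BU$. The paper's own proof has the same hole: it deduces uniqueness from mere non-increase of $S$, and it also omits the continuity you correctly supply for existence.

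The repair is the one you already planned for $C\in\{BL,BU\}$, applied everywhere. Drop uniqueness of $\lambda^\star$ and show instead that the allocation does not depend on which root is chosen. You already have what you need: any root $\lambda^\star$ yields a KKT point, KKT is sufficient, and the minimizer is unique by strict convexity, so $n^\star(\lambda^\star)$ is the same for all roots.

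More directly: if $\lambda_1<\lambda_2$ both solve $S(\lambda)=C$, then $n_q^\star(\lambda_1)\ge n_q^\star(\lambda_2)$ for every $q$ while the sums agree, which forces equality coordinatewise. Bisection on your bracket still returns some point of the possibly nondegenerate root interval, which is all the algorithm needs. With this change, the rest of your argument is sound.
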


\textbf{Heuristic Rounding.} First, $n_q^{\star}$ is rounded down to $\lfloor n_q^{\star} \rfloor$. Let $f_q(n)$ denote the per-prompt objective: $f_q(n) = a_q \frac{n-1}{n^2}$ for Dr.~GRPO and $f_q(n) = a_q \frac{1}{n-1}$ for RLOO. The remaining budget is then distributed iteratively to prompts with the largest decrease in $f_q$ if an additional rollout is allocated. This process continues until the total budget $C$ is exhausted, while ensuring that the per-prompt bounds, $L \leq \hat{n}_q \leq U$, are satisfied. The procedure is detailed in Appendix~\ref{sec:appendix_algorithm}.

\section{Numerical Experiments}
\label{sec:experiments}

We now showcase the empirical benefit of our VIP framework on the mathematical reasoning task and the tool-augmented reasoning task. The common setup for both tasks is as follows: we encode each prompt $q$ into a 384-dimensional vector $x_q$ using \texttt{all-MiniLM-L6-v2}.  Before training, we pre-compute and cache the pairwise Euclidean distances between all prompt pairs, making the runtime overhead negligible. We use a median heuristic to set the bandwidth $h$ of the kernel $\mc K$. We also assume that $\tilde Z_q$ has the same variances over all $q$ when we compute the allocation; we support this assumption with a statistical test presented in Appendix~\ref{app:equal-var}. We train all models for two epochs under two total rollout budgets, $8\times Q$ and $16\times Q$, where $Q$ is the dataset size.  We integrate \Ours~on top of two group-relative policy optimization baselines, RLOO and Dr.~GRPO, implemented using VERL~\citep{ref:sheng2024hybridflow} for math experiments and following ~\citep{ref:chen2025learning,ref:jin2025flashrag} for tool-augmented tasks. We closely follow the default hyperparameter settings used in RL training.

\textbf{Mathematical Reasoning Task.} 
For mathematical reasoning, we train on DAPO-MATH-17k~\citep{ref:yu2025dapo}  
and evaluate on AIME2024 
and AIME2025. 
We evaluate \Ours~by comparing Dr.~GRPO and RLOO, each with and without our adaptive rollout allocation method. We experiment with three backbone LMs (\texttt{Qwen2.5-Math-1.5B}, \texttt{Qwen2.5-Math-7B}, and \texttt{Llama-3.2-3B-Instruct}), on AIME24/25, under two rollout-budget settings $C \in \{8\times Q, 16\times Q\}$. We report \textit{Pass@32}, \textit{Mean@32}, and \textit{Maj@32} metrics,  
capturing the accuracy and consensus of multiple rollouts.  We summarize the results in Table~\ref{tab:aime_vip_results_merged}. Overall, adding \Ours~yields consistent improvements on \textit{Pass@32} and \textit{Mean@32} across all three base models and both budgets.  
For example, on Qwen2.5-Math-1.5B at $8\times Q$, {RLOO}$_{\textsc{+VIP}}$
 improves \textit{Pass@32} by +12.3 and  
\textit{Mean@32} by +6.3 points over RLOO.  Similar patterns hold for Llama-3.2-3B-Instruct and Qwen2.5-Math-7B.

We see that the relative performance gain from \Ours~is larger for the 1.5B and 3B models than for the 7B model. This suggests that \Ours’s budget-aware variance reduction may particularly help weaker backbones that otherwise underutilize the rollout budget. 

\begin{table}[htbp]
\scriptsize
\centering
\renewcommand{\arraystretch}{1.2}
\setlength{\tabcolsep}{6pt}
\caption{Percentage results on AIME24 and AIME25. The upper block uses a total rollout budget of $C=8\times Q$, and the lower block uses $C=16\times Q$. For each pair (Dr.~GRPO vs {Dr.~GRPO}$_{\textsc{+VIP}}$ and RLOO vs {RLOO}$_{\textsc{+VIP}}$), higher values are highlighted in green.}
\begin{tabular}{llcccccc}
\toprule
\multirow{2}{*}{Model} & \multirow{2}{*}{Method} &
\multicolumn{3}{c}{AIME24} & \multicolumn{3}{c}{AIME25} \\
\cmidrule(lr){3-5}\cmidrule(lr){6-8}
 & & Pass@32 & Mean@32 & Maj@32 & Pass@32 & Mean@32 & Maj@32 \\
\midrule
\multirow{4}{*}{Qwen2.5-Math-1.5B}
 & Dr.~GRPO   & \cellcolor{green!20}{34.56} & 7.91  & 11.26 & 24.03 & 7.62 & 4.89 \\
 & \textbf{{Dr.~GRPO}${\textsc{+VIP}}$}   & 33.63 & \cellcolor{green!20}{9.167} & \cellcolor{green!20}{15.5} &
   \cellcolor{green!20}{29.82} & \cellcolor{green!20}{6.14} & \cellcolor{green!20}{10.76} \\
 & RLOO       & 18.29 & 3.43 & 6.88 & 15.90 & 2.29 & 8.04 \\
 & \textbf{{RLOO}${\textsc{+VIP}}$}   & \cellcolor{green!20}{30.55} & \cellcolor{green!20}{9.68} & \cellcolor{green!20}{15.65} &
   \cellcolor{green!20}{26.54} & \cellcolor{green!20}{6.35} & \cellcolor{green!20}{13.72} \\
\midrule
\multirow{4}{*}{Qwen2.5-Math-7B}
 & Dr.~GRPO   & 50.0 & 19.0 & 34.0 & 34.0 & 10.0 & 15.0 \\
 & \textbf{{Dr.~GRPO}${\textsc{+VIP}}$}   & \cellcolor{green!20}{58.98} & \cellcolor{green!20}{23.65} & \cellcolor{green!20}{38.15} &
   \cellcolor{green!20}{36.08} & 10.0 & \cellcolor{green!20}{19.74} \\
 & RLOO       & 53.0 & 18.0 & 24.0 & \cellcolor{green!20}{45.0} & 11.0 & 16.0 \\
 & \textbf{{RLOO}${\textsc{+VIP}}$}   & \cellcolor{green!20}{58.0} & \cellcolor{green!20}{20.0} & \cellcolor{green!20}{35.0} &
   34.0 & 11.0 & \cellcolor{green!20}{18.0} \\
\midrule
\multirow{4}{*}{Llama-3.2-3B-Instruct}
 & Dr.~GRPO   & 24.68 & 6.25  & 10.62 & 4.28  & 0.21  & 0.07 \\
 & \textbf{{Dr.~GRPO}${\textsc{+VIP}}$}   & \cellcolor{green!20}{29.18} & \cellcolor{green!20}{8.85} & \cellcolor{green!20}{17.21} &
   \cellcolor{green!20}{9.37} & \cellcolor{green!20}{0.94} & \cellcolor{green!20}{2.35} \\
 & RLOO       & 28.84 & 8.229 & 13.95 & 9.157 & 0.5208 & 0.3633 \\
 & \textbf{{RLOO}${\textsc{+VIP}}$}   & \cellcolor{green!20}{35.59} & \cellcolor{green!20}{9.479} & \cellcolor{green!20}{19.0} &
   \cellcolor{green!20}{9.99} & \cellcolor{green!20}{0.625} & \cellcolor{green!20}{0.61} \\
\midrule\midrule
\multirow{4}{*}{Qwen2.5-Math-1.5B}
 & Dr.~GRPO   & 23.96 & 9.479 & 13.64  & 28.21 & 5.83 & 9.97 \\
 & \textbf{{Dr.~GRPO}${\textsc{+VIP}}$}   & \cellcolor{green!20}{27.37} & \cellcolor{green!20}{12.08} & \cellcolor{green!20}{15.01} &
   \cellcolor{green!20}{33.05} & \cellcolor{green!20}{8.43} & \cellcolor{green!20}{13.77} \\
 & RLOO       & 18.0 & 3.0 & 7.0  & 22.0 & 2.0 & 3.0 \\
 & \textbf{{RLOO}${\textsc{+VIP}}$}   & \cellcolor{green!20}{27.0} & \cellcolor{green!20}{6.0} & \cellcolor{green!20}{14.0} &
   \cellcolor{green!20}{26.0} & \cellcolor{green!20}{8.0} & \cellcolor{green!20}{4.0} \\
\midrule
\multirow{4}{*}{Qwen2.5-Math-7B}
 & Dr.~GRPO   & 52.0 & 19.0 & \cellcolor{green!20}{35.0} & 36.0 & 10.0 & 20.0 \\
 & \textbf{{Dr.~GRPO}${\textsc{+VIP}}$}   & \cellcolor{green!20}{57.0} & \cellcolor{green!20}{20.0} & 33.0 &
   36.0 & \cellcolor{green!20}{13.0} & \cellcolor{green!20}{21.0} \\
 & RLOO       & 37.43 & 14.79 & 21.93 & \cellcolor{green!20}{41.51} & 10.0 & 19.44 \\
 & \textbf{{RLOO}${\textsc{+VIP}}$}   & \cellcolor{green!20}{53.85} & \cellcolor{green!20}{21.77} & \cellcolor{green!20}{35.59} &
   38.32 & \cellcolor{green!20}{10.31} & \cellcolor{green!20}{21.4} \\
\midrule
\multirow{4}{*}{Llama-3.2-3B-Instruct}
 & Dr.~GRPO   & 23.26 & 6.35 & 10.09 & 5.49 & \cellcolor{green!20}{0.63} & \cellcolor{green!20}{1.31} \\
 & \textbf{{Dr.~GRPO}${\textsc{+VIP}}$}   & \cellcolor{green!20}{32.34} & \cellcolor{green!20}{8.23} & \cellcolor{green!20}{14.96} &
   \cellcolor{green!20}{10.67} & 0.52 & 0.05 \\
 & RLOO       & 29.3 & 6.88 & 1.74 & 9.57 & \cellcolor{green!20}{0.73} & \cellcolor{green!20}{1.68} \\
 & \textbf{{RLOO}${\textsc{+VIP}}$}   & \cellcolor{green!20}{31.32} & \cellcolor{green!20}{7.71} & \cellcolor{green!20}{1.77} &
   \cellcolor{green!20}{11.42} & 0.63 & 0.32 \\
\bottomrule
\end{tabular}
\label{tab:aime_vip_results_merged}
\end{table}

\begin{table}[htbp]
\scriptsize
\centering
\renewcommand{\arraystretch}{1.2}
\setlength{\tabcolsep}{6pt}
\caption{Performance on Bamboogle and MuSiQue. \cellcolor{green!20}{Green} cells indicate improvements of the +VIP variant over its base method.}
\begin{tabular}{lcccccc}
\toprule
\multirow{2}{*}{Method} &
\multicolumn{3}{c}{Bamboogle} &
\multicolumn{3}{c}{MuSiQue} \\
\cmidrule(lr){2-4}\cmidrule(lr){5-7}
 & EM & F1@5 & Precision@5 & EM & F1@5 & Precision@5 \\
\midrule
Dr.~GRPO
 & 20
 & 0.282
 & 0.293
 & 6
 & 0.123
 & 0.126 \\
\cellcolor{green!20}\textbf{{Dr.~GRPO}$ {\textsc{+VIP}}$}
 & \cellcolor{green!20}23.2
 & \cellcolor{green!20}0.333
 & \cellcolor{green!20}0.353
 & \cellcolor{green!20}10.5
 & \cellcolor{green!20}0.214
 & \cellcolor{green!20}0.225
 \\
\midrule
RLOO
 & 10.4
 & 0.190
 & 0.225
 & 8.5
 & 0.178
 & 0.179 \\
\cellcolor{green!20}\textbf{{RLOO}$ {\textsc{+VIP}}$}
 & \cellcolor{green!20}17.6
 & \cellcolor{green!20}0.264
 & \cellcolor{green!20}0.294
 & \cellcolor{green!20}11
 & \cellcolor{green!20}0.209
 & \cellcolor{green!20}0.211 \\
\bottomrule
\end{tabular}

\label{tab:tool-result}
\end{table}

\begin{table}[htbp] 
\scriptsize
\centering
\renewcommand{\arraystretch}{1.2}
\setlength{\tabcolsep}{6pt}
\caption{Ablation study on AIME24 and AIME25. All values are percentages. For each metric, the highest value across methods is highlighted in green.}
\begin{tabular}{lcccccc}
\toprule
\multirow{2}{*}{Method} &
\multicolumn{3}{c}{AIME24} &
\multicolumn{3}{c}{AIME25} \\
\cmidrule(lr){2-4}\cmidrule(lr){5-7}
 & Pass@32 & Mean@32 & Maj@32 & Pass@32 & Mean@32 & Maj@32 \\
\midrule
RLOO                                 & 18.29 & 3.43 & 6.88 & 15.90 & 2.29 & 8.04 \\
{RLOO}$ {\textsc{+GP}\textsc{+Inverse Acc}}$              & 24.61 & 6.77 & 7.90 & 19.10 & 3.33 & 5.03 \\
{RLOO}$ {\textsc{+GP}\textsc{+Inverse Var}}$             & 25.83 & 4.90 & 10.90 & 21.11 & 2.08 & 3.79 \\
{RLOO}$ {\textsc{+Ridge}\textsc{+Allocation}}$       & 29.90 & 6.97 & 14.79 & 24.85 & 5.20 & 12.12 \\
\textbf{{RLOO}$ {\textsc{+VIP}}$}                           & \cellcolor{green!20}{30.55} & \cellcolor{green!20}{9.68} & \cellcolor{green!20}{15.65} &
\cellcolor{green!20}{26.54} & \cellcolor{green!20}{6.35} & \cellcolor{green!20}{13.72} \\
\bottomrule
\end{tabular}
\label{tab:ablation_vip}
\end{table}

\textbf{Tool-Augmented Reasoning Task.}
We implement our allocation strategy to teach LMs to use a retrieval tool during generation. We follow \citet{ref:chen2025learning} and train on MuSiQue (19,938 prompts)~\citep{ref:trivedi2022musique}, evaluating on the Bamboogle benchmark~\citep{ref:press2022measuring} and MuSiQue test set. During training we use \texttt{intfloat/e5-base-v2} embeddings~\citep{ref:wang2022text} for the retrieval model. We choose \texttt{Qwen2.5-3B-Instruct} as our base model for this experiment. We use \textit{Precision@5} and \textit{F1@5} to evaluate retrieval quality, and \textit{Exact Match (EM)} for final generation correctness. We summarize the results in Table~\ref{tab:tool-result}. 

Under a fixed rollout budget, \Ours~improves both answer accuracy and retrieval quality in a
\emph{coupled} manner. On Bamboogle, Dr.~GRPO$_\textsc{+VIP}$ raises EM from $20$ to $23.2$
while simultaneously lifting F1@5/Precision@5 by $+0.051/+0.060$, and
RLOO$_\textsc{+VIP}$ shows larger absolute gains (EM $10.4\!\to\!17.6$, F1@5 $0.190\!\to\!0.264$,
Precision@5 $0.225\!\to\!0.294$). The parallel
improvements in F1@5 and Precision@5 suggest fewer false positives and better ranking of
useful contexts, while EM gains indicate that retrieved evidence is more reliably integrated
into final answers. These patterns support \Ours~as a sample-efficient,
domain-agnostic mechanism for tool-augmented reasoning.

\textbf{Ablation Studies.} We dissect \Ours~by ablating its two core components: (i) the \emph{variance predictor} and (ii) the \emph{adaptive budget allocator}. We replace the Gaussian process predictor (Section~\ref{sec:gp-recursive-fixed-kernel}) with a Ridge Regression baseline and substitute the adaptive allocator (Section~\ref{sec:budget-allocation}) with two heuristics, \emph{Inverse Accuracy} and \emph{Inverse Variance} (details in the Appendix). Ablation results on Qwen2.5-Math-1.5B are summarized in Table~\ref{tab:ablation_vip}. {RLOO}${\textsc{+VIP}}$ achieves the best performance across metrics, outperforming every ablated variants. Among components, \emph{adaptive allocation} is the most critical: replacing it with heuristics yields substantial drops on all metrics. Substituting the Gaussian process with Ridge Regression produces milder but consistent degradations, showing that calibrated uncertainty further boosts allocation effectiveness. Overall, these results confirm that \Ours~benefits primarily from variance-aware budget allocation.

\textbf{Runtime comparison.} We report the average runtime of each component of our method during a single gradient step in Table~\ref{tab:wall-clock}. The total computational overhead is extremely small: our method adds only 1.12\% and 0.83\% to the overall RL training time for the 1.5B-parameter and 7B-parameter models, respectively, when including computations cached before training. When excluding cached computations, the overhead further reduces to 0.79\% and 0.58\%. Because model-forward and rollout costs dominate at larger scales, this relative overhead decreases as model size grows.

\begin{table}[h]
\small
\centering
\caption{Wall-clock runtime of core computational components and model-specific operations for Qwen2.5-Math-1.5B and Qwen2.5-Math-7B, measured on a single GPU.}
\begin{tabular}{l l r}
\hline
\textbf{Model} & \textbf{Operation} & \textbf{Time (s)} \\
\hline
--- & Kernel matrix computation (Algorithm 1)$^{\ast}$ & 10.652 \\
--- & Gaussian process training and prediction (Algorithm~\ref{alg:gp_recursive_update}) & 0.745 \\
--- & Rollout allocation (Section~\ref{sec:budget-allocation} + Algorithm~\ref{alg:rounding}) & 29.956 \\
\midrule
Qwen2.5-Math-1.5B & Rollout sampling & 2781.2 \\
Qwen2.5-Math-1.5B & Log probability computation & 627.68 \\
Qwen2.5-Math-1.5B & Policy update & 263.9 \\
\midrule
Qwen2.5-Math-7B & Rollout sampling & 3134.4 \\
Qwen2.5-Math-7B & Log probability computation & 1388.8 \\
Qwen2.5-Math-7B & Policy update & 713.5 \\
\hline
\end{tabular}
\\[0.3em]
\small $^{\ast}$Cached before training.
\label{tab:wall-clock}
\end{table}

\textbf{Success probability prediction quality.} To evaluate the quality of our Gaussian Process predictor, we construct a time-series dataset by training Qwen2.5-Math-1.5B and Qwen2.5-Math-7B with RLOO for 55 gradient steps and logging per-prompt success probability at every step. Because the policy changes after each update, the underlying success probabilities drift over time, requiring any predictor to adapt to this non-stationarity. We compare VIP’s GP predictor against two baselines, Moving Average and Ridge Regression, using 1024 most recent samples to predict the next mini-batch's success probability. Figure~\ref{fig:prediction_mae} reports the MAE across steps. While the Moving Average and Ridge Regression baselines struggle to track rapid changes in model’s behavior, the GP maintains consistently lower MAE throughout training for both model sizes. This indicates that the GP provides a more accurate and adaptive estimator under the non-stationary dynamics of RL training.

\section{Conclusion} 
This paper introduced Variance-Informed Predictive allocation, a framework for minimizing gradient variance in group-based reinforcement learning. By combining Gaussian process–based predictions of prompt success probabilities with an optimization approach to rollout allocation, VIP uses limited sampling budgets more efficiently. Our experiments on mathematical reasoning and tool-augmented benchmarks demonstrated consistent gains over heuristic or uniform strategies. VIP is a step toward more adaptive, resource-efficient, and principled training pipelines for large language models. Future work will explore its integration with non-verifiable or noisy rewards, opening avenues for its application in RLHF and other alignment paradigms.

\newpage
\textbf{Ethics Statement.} All authors read and agree to adhere to the ICLR Code of Ethics throughout the development, submission, and potential publication of this work. Our research does not involve human subjects, personal data, or any sensitive information. All datasets used are publicly available and do not contain personally identifiable information. We have taken care to ensure that our methods do not introduce or amplify bias, discrimination, or unfairness, as the reward signals in RLVR are determined by rule-based verifiers rather than subjective human judgment. We release all code and experimental protocols to promote transparency and reproducibility. We are not aware of any potential conflicts of interest, legal compliance issues, or research integrity concerns associated with this work.

\textbf{Reproducibility Statement.} We have made every effort to ensure that our results are reproducible. All of the models, algorithms, and experiments described in the paper are accompanied by detailed explanations in the main text and appendix. To further support reproducibility, we have provided an anonymized repository containing the full source code, along with instructions for running the experiments and replicating our results. The datasets we used are publicly available, and we include clear descriptions of any preprocessing steps in the appendix. For all theoretical results, we have stated our assumptions explicitly and included complete proofs in the supplementary material.

\textbf{LLM Usage Statement.} In preparing this paper, we used large language models as general-purpose tools for tasks such as proofreading, rephrasing, and checking the clarity of some sections. The research ideas, experiments, analysis, and the main writing were carried out by the authors.

\textbf{Acknowledgments.} Viet Anh Nguyen gratefully acknowledges the support from the CUHK’s Improvement on Competitiveness in Hiring New Faculties Funding Scheme, UGC ECS Grant 24210924, and UGC GRF Grant 14208625. The authors would like the thank Dongxuan Zhu and Hongzheng Yang for their helpful discussion regarding Resource Allocation and Reinforcement Learning.

\bibliography{iclr2026_conference}
\bibliographystyle{iclr2026_conference}

\newpage

\appendix

\section{Proofs of Technical Results}
\label{sec:proofs}
\subsection{Proofs of Section~\ref{sec:variance}}
\begin{proof}[Proof of Proposition~\ref{prop:drgrpo_var}]
We recall the expression of $\tilde G_q$:
\[
\tilde G_q = \frac{1}{n} \sum_{j=1}^n (\tilde R_j - \frac{1}{n} \sum_{k=1}^n \tilde R_k)\tilde Z_j.
\]
The expectation of $\tilde G_q$ is:
\begin{align*}
    \mathbb{E}[\tilde G_q]
    &=\mathbb{E}[\frac{1}{n} \sum_{j=1}^n (\tilde R_j - \frac{1}{n} \sum_{k=1}^n \tilde R_k)\tilde Z_j] \\
    &= \mathbb{E}[\frac{1}{n} \sum_{j=1}^n \tilde R_j \tilde Z_j - \frac{1}{n^2} \sum_{j=1}^n \sum_{k=1}^n \tilde R_k \tilde Z_j]\\
    &= \frac{1}{n} \sum_{j=1}^n \mathbb{E}[ \tilde R_j \tilde Z_j] - \frac{1}{n^2} \sum_{j=1}^n \sum_{k=1}^n \mathbb{E}[ \tilde R_k \tilde Z_j] \\
    &= \E[ \tilde R \tilde Z] - \E[\tilde R \tilde Z] = 0 & (\text{by Assumption}~\ref{a2}(i)).
\end{align*}
The variance of $\tilde G_q$ is:
\begin{align*}
    &\mathrm{Var}(\tilde G_q) \\
    &= \mathbb{E}[\tilde G_q^2] - (\mathbb{E}[\tilde G_q])^2 = \mathbb{E}[\tilde G_q^2] - 0  = \mathbb{E}[\tilde G_q^2] \\
    &= \mathbb{E}\left[ \left( \frac{1}{n} \sum_{j=1}^n \tilde R_j \tilde Z_j - \frac{1}{n^2} \sum_{j=1}^n \sum_{k=1}^n \tilde R_k \tilde Z_j \right)^2 \right] \\
    &= \mathbb{E}\left[  \frac{1}{n^2} \left( \sum_{j=1}^n \tilde R_j \tilde Z_j \right)^2 + \frac{1}{n^4}\left(  \sum_{j=1}^n \sum_{k=1}^n \tilde R_k \tilde Z_j \right)^2 - \frac{2}{n^3} \left( \sum_{j=1}^n \tilde R_j \tilde Z_j \right) \left( \sum_{j=1}^n \sum_{k=1}^n \tilde R_k \tilde Z_j \right) \right] \\
    &= \underbrace{\mathbb{E}\left[  \frac{1}{n^2} \left(  \sum_{j=1}^n \tilde R_j \tilde Z_j \right)^2 \right]}_{\Let C_1} + \underbrace{\mathbb{E}\left[\frac{1}{n^4}\left(  \sum_{j=1}^n \sum_{k=1}^n \tilde R_k \tilde Z_j \right)^2 \right]}_{\Let C_2} - \underbrace{\mathbb{E}\left[ \frac{2}{n^3} \left( \sum_{j=1}^n \tilde R_j \tilde Z_j \right) \left( \sum_{j=1}^n \sum_{k=1}^n \tilde R_k \tilde Z_j \right) \right]}_{\Let C_3}.
\end{align*}
Next we compute each term. We find for $C_1$:
\begin{align*}
    C_1 &= \mathbb{E}\left[   \frac{1}{n^2}\left( \sum_{j=1}^n \tilde R_j \tilde Z_j \right)^2 \right] = \frac{1}{n^2}\mathbb{E}\left[  \left(  \sum_{j=1}^n \tilde R_j \tilde Z_j \right)^2 \right] \\
    &= \frac{1}{n^2}\mathbb{E}\left[
    \sum_{j=1}^n \tilde R_j^2 \tilde Z_j^2 + 2 \sum_{1 \le j < k \le n} \tilde R_j \tilde Z_j \tilde R_k \tilde Z_k
    \right] \\
    &= \frac{1}{n^2} \left( \sum_{j=1}^n \mathbb{E}\left[\tilde R_j^2 \tilde Z_j^2\right]  + 2 \sum_{1 \le j < k \le n} \E\left[ \tilde R_j \tilde Z_j \tilde R_k \tilde Z_k
    \right] \right) \\
    &= \frac{1}{n^2} \left( \sum_{j=1}^n\mathbb{E}\left[\tilde R_j^2 \tilde Z_j^2\right]  + 2 \sum_{1 \le j < k \le n} \E\left[ \tilde R_j \tilde Z_j \right] \E\left[ \tilde R_k \tilde Z_k
    \right] \right) & (\text{independence between $j$ and $k$})\\
    &= \frac{1}{n^2} \left( n\mathbb{E}\left[\tilde R^2 \tilde Z^2\right]  + n(n-1) (\E[ \tilde R \tilde Z])^2 \right) & (\text{identically distributed}) \\
    &= \frac{1}{n} \mathbb{E}\left[\tilde R^2 \tilde Z^2\right] + \frac{(n-1)}{n} (\E[ \tilde R \tilde Z])^2 \\
    &= \frac{1}{n} \mathbb{E}[\tilde R^2] (\mu_Z^2+\sigma_Z^2) + \frac{n-1}{n} \mu_z^2 (\mathbb{E}[\tilde R])^2 \\
    &=  \mathbb{E}[\tilde R^2] (\frac{1}{n} \mu_Z^2+ \frac{1}{n} \sigma_Z^2) + (\mathbb{E}[\tilde R])^2 (\frac{n-1}{n} \mu_z^2).
\end{align*}

We next compute for $C_2$:
\begin{align*}
    C_2 &= \mathbb{E}\left[\frac{1}{n^4}\left(  \sum_{j=1}^n \sum_{k=1}^n \tilde R_k \tilde Z_j \right)^2 \right] = \frac{1}{n^4} \E\left[    \sum_{k=1}^n \sum_{k'=1}^n \sum_{j=1}^n \sum_{j'=1}^n \tilde R_k \tilde R_{k'} \tilde Z_j \tilde Z_{j'} \right] .
\end{align*}
We can decompose the quadruple sum by whether the indices are equal or not.  
There are four index-pattern types:
\begin{itemize}[leftmargin=5mm]
\item When $k=k'$ and $j=j'$: there are $n^2$ such terms, each term is $\mathbb{E}[\tilde R^2 \tilde Z^2] = (\mu_Z^2 + \sigma_Z^2) \mathbb{E}[\tilde R^2]$. Total contribution to $C_2$ is
   \[
   T_{1}=n^2 (\mu_Z^2 + \sigma_Z^2) \mathbb{E}[\tilde R^2].
   \]
\item When $k=k'$ and $j\ne j'$: there are $n^2(n-1)$ such terms. For any fixed $k$ and distinct $j,j'$,  
   $\mathbb{E}[\tilde R_k^2 \tilde Z_j \tilde Z_{j'}] = \mathbb{E}[\tilde R_k^2] \mathbb{E}[\tilde Z_j \tilde Z_{j'}] =\mu_Z^2\mathbb{E}[\tilde R^2]$.  Total contribution to $C_2$ is
   \[
   T_{2}=n^2(n-1) \mu_Z^2 \mathbb{E}[\tilde R^2].
   \]
\item When $k\ne k'$ and $j=j'$: there are  $n^2(n-1)$ such terms. For distinct $k,k'$,
   $\mathbb{E}[\tilde R_k \tilde R_{k'} \tilde Z^2] = \mathbb{E}[\tilde R_k \tilde R_{k'}] \mathbb{E}[\tilde Z^2]= (\mu_Z^2 + \sigma_Z^2)(\mathbb{E}[\tilde R])^2$.  
   Total contribution to $C_2$ is
   \[
   T_{3}=n^2(n-1)(\mu_Z^2 + \sigma_Z^2)(\mathbb{E}[\tilde R])^2.
   \]
\item When $k\ne k'$ and $j\ne j'$: there are  $n^2(n-1)^2$ such terms.
   For all indices different,
   $\mathbb{E}[\tilde R_k \tilde R_{k'} \tilde Z_j \tilde Z_{j'}] = \mu_Z^2(\mathbb{E}[\tilde R])^2$.  
   Total contribution to $C_2$ is
   \[
   T_{4}=n^2(n-1)^2 \mu_Z^2(\mathbb{E}[\tilde R])^2.
   \]
\end{itemize}
Therefore, we find
\begin{align*}
    C_2 &= \frac{1}{n^4} (T_1 + T_2 + T_3 + T_4 ) \\
    &=  \frac{1}{n^4} \Big[ n^2 (\mu_Z^2 + \sigma_Z^2) \mathbb{E}[\tilde R^2] + n^2 (n-1) \mu_Z^2 \mathbb{E}[\tilde R^2] \\
    &\quad + n^2 (n-1) (\mu_Z^2 + \sigma_Z^2) (\mathbb{E}[\tilde R])^2 + n^2 (n-1)^2 \mu_Z^2 (\mathbb{E}[\tilde R])^2 \Big] \\
    &= \mathbb{E}[\tilde R^2] \left( \frac{\mu_Z^2}{n} + \frac{\sigma_Z^2}{n^2} \right) + (\mathbb{E}[\tilde R])^2 \left( \frac{n-1}{n} \mu_Z^2 + \frac{n-1}{n^2} \sigma_Z^2 \right).
\end{align*}
We next compute for $C_3$:
\begin{align*}
    C_3 &= \mathbb{E}\left[ \frac{2}{n^3} \left( \sum_{j=1}^n \tilde R_j \tilde Z_j \right) \left( \sum_{j'=1}^n \sum_{k=1}^n \tilde R_k \tilde Z_{j'} \right) \right] = \frac{2}{n^3} \mathbb{E}\Big[ \sum_{j=1}^n \sum_{j'=1}^n \sum_{k=1}^n \tilde R_j \tilde Z_j \tilde R_k \tilde Z_{j'} \Big].
\end{align*}
We can decompose the triplet sum by whether the indies are equal or not. There are five index-pattern types:
\begin{itemize}[leftmargin=5mm]
    \item When $j=k=j'$: there are $n$ such terms.
    For each $j$,
    $\mathbb{E}[\tilde R_j^2 \tilde Z_j^2] =  \mathbb{E}[\tilde R_j^2 ]\mathbb{E}[\tilde Z_j^2] =  (\mu_Z^2 + \sigma_Z^2)\mathbb{E}[\tilde R^2 ]$.
    Total contribution:    
    \[
    T_{1}=n(\mu_Z^2 + \sigma_Z^2)\mathbb{E}[\tilde R^2 ].
    \]

    \item When $j=k \ne j'$: there are $n(n-1)$ such terms.
    For each $j$ and $j'\neq j$, 
    $\mathbb{E}[\tilde R_j^2 \tilde Z_j \tilde Z_{j'}] = \mathbb{E}[\tilde R_j^2] \mathbb{E}[ \tilde Z_j \tilde Z_{j'}] = \mathbb{E}[\tilde R^2]\mathbb{E}[ \tilde Z_j] \mathbb{E}[\tilde Z_{j'}] = \mathbb{E}[\tilde R^2] \mu_Z^2$.
    Total contribution is
    \[
    T_{2}=n(n-1)\mathbb{E}[\tilde R^2] \mu_Z^2.
    \]
    \item When $j=j' \ne k$: there are $n(n-1)$ such terms.
    For each $j$ and $k\neq j$,
    $\mathbb{E}[\tilde R_j \tilde Z_j^2 \tilde R_k]  = \mathbb{E}[\tilde R_j \tilde R_k] \mathbb{E}[\tilde Z_j^2] =  (\mu_Z^2 + \sigma_Z^2)(\mathbb{E}[\tilde R])^2  $.
    Total contribution is
    \[
    T_{3}=n(n-1)(\mu_Z^2 + \sigma_Z^2)(\mathbb{E}[\tilde R])^2 .
    \]
    \item When $k=j' \ne j$: there are $n(n-1)$ such terms. For each $j$ and $k\neq j$, $\mathbb{E}[\tilde R_j \tilde Z_j \tilde R_k \tilde Z_k]  = \mathbb{E}[\tilde R_j \tilde R_k] \mathbb{E}[\tilde Z_j \tilde Z_k] =  \mu_Z^2(\mathbb{E}[\tilde R])^2$.
    Total contribution is
    \[
    T_{4}=n(n-1) \mu_Z^2(\mathbb{E}[\tilde R])^2.
    \]

    \item When $j,j',k$ are all distinct: there are $n(n-1)(n-2)$ such terms. For each triple of distinct indices,
    $\mathbb{E}[\tilde R_j \tilde Z_j \tilde R_k \tilde Z_{j'}]  = \mu_Z^2(\mathbb{E}[\tilde R])^2$.
    Total contribution is
    \[
    T_{5}=n(n-1)(n-2) \mu_Z^2(\mathbb{E}[\tilde R])^2.
    \]
\end{itemize}
Therefore, we find

\begin{align*}
C_3 &= \frac{2}{n^3}\left(T_1 + T_2 + T_3 + T_4 + T_5\right) \\
&= \frac{2}{n^3}\Bigg[
    n(\mu_Z^2 + \sigma_Z^2)\mathbb{E}[\tilde R^2]
    + n(n-1)\mathbb{E}[\tilde R^2]\mu_Z^2 \\
&\qquad\qquad
    + n(n-1)(\mu_Z^2 + \sigma_Z^2)\left(\mathbb{E}[\tilde R]\right)^2
    + n(n-1)\mu_Z^2\left(\mathbb{E}[\tilde R]\right)^2 \\
&\qquad\qquad
    + n(n-1)(n-2)\mu_Z^2\left(\mathbb{E}[\tilde R]\right)^2
\Bigg] \\
    &=  \mathbb{E}[\tilde R^2]\left(\frac{2}{n}\mu_Z^2 + \frac{2}{n^2} \sigma_Z^2\right)
+ (\mathbb{E}[\tilde R])^2\left(\frac{2n-2}{n}\mu_Z^2 + \frac{2n-2)}{n^2}\sigma_Z^2\right).
\end{align*}

Group terms with $\mathbb{E}[\tilde R^2]$ and $(\mathbb{E}[\tilde R])^2$ coefficients:
\begin{align*}
C_1 + C_2 - C_3 &= \mathbb{E}[\tilde R^2] \Bigg[
    \left( \frac{1}{n} \mu_Z^2 + \frac{1}{n} \sigma_Z^2 \right)  
    + \left( \frac{\mu_Z^2}{n} + \frac{\sigma_Z^2}{n^2} \right) - \left( \frac{2}{n} \mu_Z^2 + \frac{2}{n^2} \sigma_Z^2 \right)
\Bigg] \\
& + (\mathbb{E}[\tilde R])^2 \Bigg[
    \frac{n-1}{n} \mu_Z^2 
    + \left( \frac{n-1}{n} \mu_Z^2 + \frac{n-1}{n^2} \sigma_Z^2 \right) - \left( \frac{2n-2}{n} \mu_Z^2 + \frac{2n-2}{n^2} \sigma_Z^2 \right)
\Bigg].
\end{align*}
We simplify each bracket to obtain:
\begin{align*}
    \mathrm{Var}(\tilde G_q) = C_1 + C_2 - C_3 = \frac{n-1}{n^2} \sigma_Z^2 \left( \mathbb{E}[\tilde R^2] - (\mathbb{E}[\tilde R])^2 \right) = \frac{n-1}{n^2} \sigma_Z^2 \sigma_R^2.
\end{align*}
For a given prompt, $\tilde R$ takes $1$ with probability $p$ and $-1$ with probability $1 - p$, leading to its variance of $4p(1-p)$. We obtain the final variance of the per-prompt gradient estimator:
\[
\mathrm{Var}(\tilde G_q) = \frac{\sigma_Z^2 (n-1)}{n^2} \cdot 4p(1-p).
\]
This completes the proof.
\end{proof}
\begin{proof}[Proof of Proposition~\ref{prop:rloo_var}]
We recall the expression of $\tilde G_q$:
\[
\tilde G_q = \frac{1}{n} \sum_{j=1}^n \left(\tilde R_j - \frac{1}{n-1} \sum_{\substack{k=1 \\ k \neq j}}^n \tilde R_k\right)\tilde Z_j.
\]
The expectation of $\tilde G_q$ is:
\begin{align*}
    \mathbb{E}[\tilde G_q]
    &=\mathbb{E}\left[\frac{1}{n} \sum_{j=1}^n \left(\tilde R_j - \frac{1}{n-1} \sum_{\substack{k=1 \\ k \neq j}}^n \tilde R_k\right)\tilde Z_j\right] \\
    &= \mathbb{E}\left[\frac{1}{n} \sum_{j=1}^n \tilde R_j \tilde Z_j - \frac{1}{n(n-1)} \sum_{j=1}^n \sum_{\substack{k=1 \\ k \neq j}}^n \tilde R_k \tilde Z_j\right]\\
    &= \frac{1}{n} \sum_{j=1}^n \mathbb{E}[ \tilde R_j \tilde Z_j] - \frac{1}{n(n-1)} \sum_{j=1}^n \sum_{\substack{k=1 \\ k \neq j}}^n \mathbb{E}[ \tilde R_k \tilde Z_j] \\
    &= \E[ \tilde R \tilde Z] - \E[\tilde R \tilde Z] & (\text{by Assumption}~\ref{a2}(i))\\
    &= 0.
\end{align*}

The variance of $\tilde G_q$ is:
\begin{align*}
    &\mathrm{Var}(\tilde G_q) \\
    &= \mathbb{E}[\tilde G_q^2] - (\mathbb{E}[\tilde G_q])^2 = \mathbb{E}[\tilde G_q^2] - 0  = \mathbb{E}[\tilde G_q^2] \\
    &= \mathbb{E}\left[ \left( \frac{1}{n} \sum_{j=1}^n \tilde R_j \tilde Z_j - \frac{1}{n(n-1)} \sum_{j=1}^n \sum_{\substack{k=1 \\ k \neq j}}^n \tilde R_k \tilde Z_j \right)^2 \right] \\
    &= \mathbb{E}\left[  \frac{1}{n^2} \left( \sum_{j=1}^n \tilde R_j \tilde Z_j \right)^2 + \frac{1}{n^2(n-1)^2}\left(  \sum_{j=1}^n \sum_{\substack{k=1 \\ k \neq j}}^n \tilde R_k \tilde Z_j \right)^2 - \frac{2}{n^2(n-1)} \left( \sum_{j=1}^n \tilde R_j \tilde Z_j \right) \left( \sum_{j=1}^n \sum_{\substack{k=1 \\ k \neq j}}^n \tilde R_k \tilde Z_j \right) \right] \\
    &= \underbrace{\mathbb{E}\left[  \frac{1}{n^2} \left(  \sum_{j=1}^n \tilde R_j \tilde Z_j \right)^2 \right]}_{\Let C_1} + \underbrace{\mathbb{E}\left[\frac{1}{n^2(n-1)^2}\left(  \sum_{j=1}^n \sum_{\substack{k=1 \\ k \neq j}}^n \tilde R_k \tilde Z_j \right)^2 \right]}_{\Let C_2} - \underbrace{\mathbb{E}\left[ \frac{2}{n^2(n-1)} \left( \sum_{j=1}^n \tilde R_j \tilde Z_j \right) \left( \sum_{j=1}^n \sum_{\substack{k=1 \\ k \neq j}}^n \tilde R_k \tilde Z_j \right) \right]}_{\Let C_3}.
\end{align*}
The first term $C_1$ is already computed in the proof of Proposition~\ref{prop:drgrpo_var}, and we have:
\[
C_1 = \mathbb{E}[\tilde R^2] (\frac{1}{n} \mu_Z^2+ \frac{1}{n} \sigma_Z^2) + (\mathbb{E}[\tilde R])^2 (\frac{n-1}{n} \mu_z^2).
\]

Next, we consider the term $C_2$:
\begin{align*}
C_2 
&= \mathbb{E}\left[\frac{1}{n^2(n-1)^2} \left( \sum_{j=1}^n \sum_{\substack{k=1 \\ k \neq j}}^n \tilde R_k \tilde Z_j \right)^2 \right] \\
&= \mathbb{E}\left[\frac{1}{n^2(n-1)^2} \left( \left(\sum_{j=1}^n \sum_{k=1}^n \tilde R_k \tilde Z_j\right) - \left(\sum_{j=1}^n \tilde R_j \tilde Z_j\right) \right)^2 \right] \\
&= \frac{1}{n^2(n-1)^2} \Bigg(
\mathbb{E}\left[\left(\sum_{j=1}^n \sum_{k=1}^n \tilde R_k \tilde Z_j\right)^2\right]
- 2\,\mathbb{E}\left[\left(\sum_{j=1}^n \sum_{k=1}^n \tilde R_k \tilde Z_j\right)\left(\sum_{j'=1}^n \tilde R_{j'} \tilde Z_{j'}\right)\right]
+ \mathbb{E}\left[\left(\sum_{j=1}^n \tilde R_j \tilde Z_j\right)^2\right]
\Bigg).
\end{align*}
We can utilize the computation from the proof of Proposition~\ref{prop:drgrpo_var} to have:
\begin{align*}
&\mathbb{E}\left[\frac{1}{n^4}\left(\sum_{j=1}^n \sum_{k=1}^n \tilde R_k \tilde Z_j\right)^2\right] = \mathbb{E}[\tilde R^2] \left( \frac{\mu_Z^2}{n} + \frac{\sigma_Z^2}{n^2} \right) + (\mathbb{E}[\tilde R])^2 \left( \frac{n-1}{n} \mu_Z^2 + \frac{n-1}{n^2} \sigma_Z^2 \right), \\
&\mathbb{E}\left[\frac{2}{n^3}\left(\sum_{j=1}^n \tilde R_j \tilde Z_j\right)\left(\sum_{j'=1}^n \sum_{k=1}^n \tilde R_k \tilde Z_{j'}\right)\right] = \mathbb{E}[\tilde R^2]\left(\frac{2}{n}\mu_Z^2 + \frac{2}{n^2}\sigma_Z^2\right) + (\mathbb{E}[\tilde R])^2\left(\frac{2n-2}{n}\mu_Z^2 + \frac{2n-2}{n^2}\sigma_Z^2\right), \\
&\mathbb{E}\left[\frac{1}{n^2}\left(\sum_{j=1}^n \tilde R_j \tilde Z_j\right)^2\right] = \mathbb{E}[\tilde R^2]\left(\frac{1}{n}\mu_Z^2+ \frac{1}{n}\sigma_Z^2\right) + (\mathbb{E}[\tilde R])^2 \left(\frac{n-1}{n}\mu_Z^2\right).
\end{align*}
Therefore,
\begin{align*}
&C_2 = \mathbb{E}[\tilde R^2]\left[
\frac{n}{(n-1)^2} \mu_Z^2 + \frac{1}{(n-1)^2} \sigma_Z^2
- \left(\frac{2}{(n-1)^2} \mu_Z^2 + \frac{2}{n(n-1)^2}\sigma_Z^2\right)
+ \frac{1}{n(n-1)^2} \mu_Z^2 + \frac{1}{n(n-1)^2} \sigma_Z^2
\right] \\
&+ (\mathbb{E}[\tilde R])^2 \Bigg[
    \frac{n}{(n-1)} \mu_Z^2 + \frac{1}{(n-1)} \sigma_Z^2 
    - \left(\frac{2}{n-1} \mu_Z^2 + \frac{2}{n(n-1)} \sigma_Z^2\right) + \frac{1}{n(n-1)}\mu_Z^2 \Bigg]\\
    & =\mathbb{E}[\tilde R^2] \left[
    \frac{n^2-2n+1}{n(n-1)^2} \mu_Z^2
    + 
    \frac{n-1}{n(n-1)^2} \sigma_Z^2
    \right] + (\mathbb{E}[\tilde R])^2 \left[
    \frac{n^2-2n+1}{n(n-1)} \mu_Z^2
    +
    \frac{n-2}{n(n-1)} \sigma_Z^2
    \right] \\
    &=\mathbb{E}[\tilde R^2] \left[
    \frac{1}{n} \mu_Z^2
    + 
    \frac{1}{n(n-1)} \sigma_Z^2
    \right] + (\mathbb{E}[\tilde R])^2 \left[
    \frac{n-1}{n} \mu_Z^2
    +
    \frac{n-2}{n(n-1)} \sigma_Z^2
    \right].
\end{align*}

We compute $C_3$ as follows:
\begin{align*}
    C_3 &= \mathbb{E}\left[ \frac{2}{n^2(n-1)} \left( \sum_{j=1}^n \tilde R_j \tilde Z_j \right) \left( \sum_{j'=1}^n \sum_{\substack{k=1 \\ k \neq {j'}}}^n \tilde R_k \tilde Z_{j'} \right) \right] \\
    &= \mathbb{E}\left[ \frac{2}{n^2(n-1)} \left( \sum_{j=1}^n \tilde R_j \tilde Z_j \right) \left( \sum_{j'=1}^n \sum_{\substack{k=1}}^n \tilde R_k \tilde Z_{j'} -  \sum_{j'=1}^n \tilde R_{j'} \tilde Z_{j'} \right) \right]\\
    &= \frac{2}{n^2(n-1)} \Big( \mathbb{E}\left[ \left( \sum_{j=1}^n \tilde R_j \tilde Z_j \right) \left( \sum_{j'=1}^n \sum_{\substack{k=1}}^n \tilde R_k \tilde Z_{j'} \right) \right] - \mathbb{E} \left[ \left( \sum_{j=1}^n \tilde R_j \tilde Z_j \right) \left( \sum_{j'=1}^n \tilde R_{j'} \tilde Z_{j'} \right) \right] \Big).
\end{align*}
We can utilize the computation of $\frac{n^3}{2} C_3$ and $n^2 C_1$ from the proof of Proposition~\ref{prop:drgrpo_var} to have:
\begin{align*}
    &\mathbb{E}\left[ \left( \sum_{j=1}^n \tilde R_j \tilde Z_j \right) \left( \sum_{j'=1}^n \sum_{\substack{k=1}}^n \tilde R_k \tilde Z_{j'} \right) \right] = \mathbb{E}[\tilde R^2]\left(n^2\mu_Z^2 + n \sigma_Z^2\right)
+ (\mathbb{E}[\tilde R])^2\left( n^2(n-1)\mu_Z^2 + n(n-1) \sigma_Z^2\right), \\
    &\mathbb{E} \left[ \left( \sum_{j=1}^n \tilde R_j \tilde Z_j \right) \left( \sum_{j'=1}^n \tilde R_{j'} \tilde Z_{j'} \right) \right] = \mathbb{E}[\tilde R^2] (n \mu_Z^2+ n \sigma_Z^2) + (\mathbb{E}[\tilde R])^2 (n(n-1) \mu_z^2).
\end{align*}
Plugging these terms to the computation of $C_3$ yields us:
\begin{align*}
C_3 &= \frac{2}{n^2(n-1)} \Bigg\{ \mathbb{E}[\tilde R^2]\left(n^2\mu_Z^2 + n \sigma_Z^2\right)
+ (\mathbb{E}[\tilde R])^2\left( n^2(n-1)\mu_Z^2 + n(n-1) \sigma_Z^2\right) \\
&\qquad - \left[ \mathbb{E}[\tilde R^2] \left(n \mu_Z^2+ n \sigma_Z^2\right) + (\mathbb{E}[\tilde R])^2 \left(n(n-1) \mu_Z^2\right) \right] \Bigg\} \\
&= \mathbb{E}[\tilde R^2] \cdot \frac{2(n^2-n)}{n^2(n-1)} \mu_Z^2
+ (\mathbb{E}[\tilde R])^2 \cdot \frac{2n(n-1)}{n^2(n-1)} \left((n-1)\mu_Z^2 + \sigma_Z^2\right) \\
&=  \mathbb{E}[\tilde R^2] \left( \frac{2}{n} \mu_Z^2 \right) + (\mathbb{E}[\tilde R])^2 \left( \frac{2n-2}{n} \mu_Z^2 + \frac{2}{n} \sigma_Z^2 \right).
\end{align*}

We have:
\begin{align*}
    \mathrm{Var}(\tilde G_q) =  C_1 + C_2 - C_3 &=  \mathbb{E}[\tilde R^2] \left(\frac{1}{n-1} \sigma_Z^2 \right) + (\mathbb{E}[\tilde R])^2 \left( -\frac{1}{n-1} \sigma_Z^2 \right)\\
    &= \frac{\sigma_Z^2}{n-1}(\mathbb{E}[\tilde R])^2 - (\mathbb{E}[\tilde R])^2) \\
    &= \frac{\sigma_Z^2}{n-1}\mathrm{Var}(\tilde R).
\end{align*}
For a given prompt, $\tilde R$ takes $1$ with probability $p$ and $-1$ with probability $1 - p$, leading to its variance of $4p(1-p)$. We obtain the final variance of the per-prompt gradient estimator:
\[
\mathrm{Var}(\tilde G_q) = \frac{\sigma_Z^2}{n-1} \cdot 4p(1-p).
\]
This completes the proof.
\end{proof}

\subsection{Proofs of Section~\ref{sec:allocation}}

\begin{proof}[Proof of Theorem~\ref{thm:opt-drgrpo}]
For clarity and continuity, we restate problem~\eqref{eq:drgrpo_relaxed} before proceeding with the proof:
\begin{equation}
    \label{eq:appendix_drgrpo}
    \begin{array}{cl}
        \min & \displaystyle \sum_{q \in \mc B_t}  a_q\frac{n_q - 1}{n_q^{2}} \\
        \st & \displaystyle \sum_{q \in \mc B_t} n_q = C \\
        & L \leq n_q \leq U \quad \forall q \in \mc B_t.
    \end{array}
\end{equation}  
Let $V(\{ n_q\})$ be the objective function of the above problem. We compute the first and second derivatives of the objective function with respect to each coordinate $n_q$:
\[
\frac{\partial V}{\partial n_q} = -a_q \frac{n_q - 2}{n_q^3}.
\]
Since $n_q \geq L \geq 3$, so for all $q$, $\frac{\partial V}{\partial n_q} < 0$. Thus, $V$ is decreasing with respect to each $n_q$ on the feasible set.

For the second derivatives:
\[
\frac{\partial^2 V}{\partial n_q \partial n_{q'}} = 0  \quad \forall q \neq q', \quad \frac{\partial^2 V}{\partial n_q^2} = a_q \frac{2n_q - 6}{n_q^4} \geq 0 \quad \forall q \quad (\text{Since } n_q \geq L \geq 3 \text{, and } a_q \geq 0)
\]
Therefore, $V$ is convex and decreasing in each $n_q$ on the feasible set
\[
\left\{ n \in \mathbb{R}^B : \sum_{q \in \mc B_t} n_q = C, \quad L \leq n_q \leq U~\forall q \right\}.
\]
Hence, the minimizer exists and is unique whenever the feasible set is nonempty $BL \leq C \leq BU$.

The Lagrangian function is
\[
\mathcal{L} = \sum_{q \in \mc B_t} a_q\frac{n_q-1}{n_q^2}
+ \lambda \left( \sum_{q \in \mc B_t} n_q - C \right)
+ \sum_{q \in \mc B_t} \mu_q (L - n_q)
+ \sum_{q \in \mc B_t}  \nu_q (n_q - U)
\]
where $\lambda \in \mathbb{R}$, and $\mu_q, \nu_q \geq 0$ are Lagrangian multipliers.
The KKT conditions  are:
\begin{align*}
    & -a_q \frac{n_q - 2}{n_q^3} + \lambda - \mu_q + \nu_q = 0 & \forall q, \\
    & \mu_q \geq 0, \quad \nu_q \geq 0 & \forall q, \\
    & \mu_q (n_q - L) = 0, \quad \nu_q (n_q - U) = 0 & \forall q, \\
    & L \leq n_q \leq U &\forall q, \\
    & \sum_{q \in \mc B_t} n_q = C.
\end{align*}

We consider three cases of $n_q$:
\begin{itemize}[leftmargin=5mm]
    \item For each $q$ with $L < n_q < U$, the KKT stationarity condition is
    \[
    \lambda = a_q \frac{n_q - 2}{n_q^3},
    \]
    where $\lambda$ is the Lagrange multiplier for the sum constraint. Note that the right-hand side is decreasing in $n_q$. 

    For $n_q = L$, the right-hand side is $a_q \frac{L-2}{L^3}$, and for $n_q = U$, it is $a_q \frac{U-2}{U^3}$. Therefore, for each $q$ and any $\lambda \in (a_q \frac{U-2}{U^3}, a_q \frac{L-2}{L^3})$, there is at most one solution $n_q$ to $a_q \frac{n_q - 2}{n_q^3} = \lambda$ in the interior $(L, U)$. If $\lambda \geq a_q \frac{L-2}{L^3}$ or $\lambda \leq a_q \frac{U-2}{U^3}$, there is no interior solution, and the optimum for $n_q$ must be at a bound.

    \item If $n_q = L$, then $\mu_q \geq 0$ and $\nu_q = 0$. According to the KKT condition, we obtain:
    \[
    \lambda = a_q \frac{L-2}{L^3} + \mu_q \geq a_q \frac{L-2}{L^3}.
    \]

    \item If $n_q = U$, then $\mu_q = 0$ and $\nu_q \geq 0$. According to the KKT condition, we obtain:
    \[
    \lambda = a_q \frac{U-2}{U^3} - \nu_q \leq a_q \frac{U-2}{U^3}.
    \]
\end{itemize}

For a value of $\lambda$, for each coordinate, the KKT solution for $n_q$ is defined as:
\[
n_q^\star(\lambda) = 
\begin{cases}
    U & \text{if } \lambda \leq a_q \frac{U-2}{U^3}, \\
    \text{the unique solution to } \lambda = a_q \frac{n_q-2}{n_q^3} & \text{if } a_q \frac{U-2}{U^3} < \lambda < a_q \frac{L-2}{L^3}, \\
    L & \text{if } \lambda \geq a_q \frac{L-2}{L^3}.
\end{cases}
\]
The coupling constraint $\sum_{q \in \mc B_t} n_q = C$ is enforced by selecting $\lambda$ such that
\[
S(\lambda) \Let \sum_{q \in \mc B_t} n_q^\star(\lambda) = C.
\]
Each $n_q^\star(\lambda)$ is non-increasing in $\lambda$ since $a_q \frac{n_q-2}{n_q^3}$ is decreasing and the projection preserves monotonicity. Consequently, $S(\lambda)$ is also non-increasing. In particular:
\begin{itemize}
    \item As $\lambda \to -\infty$, $n_q^\star(\lambda) \to U$, so $S(-\infty) = B U$.
    \item As $\lambda \to +\infty$, $n_q^\star(\lambda) \to L$, so $S(+\infty) = B L$.
\end{itemize}
Therefore, for any feasible $C$ with $BL \leq C \leq BU$, there exists a unique $\lambda^\star$ such that $S(\lambda^\star) = C$. Moreover, because $S$ is non-increasing, finding $\lambda^\star$ can be done by bisection. If $C > BU$ or $C < BL$, the problem is infeasible.
\end{proof}

\begin{proof}[Proof of Theorem~\ref{thm:opt-rloo}]

For clarity and continuity, we restate Problem~\ref{eq:rloo_relaxed} before proceeding with the proof:
\begin{equation}
    \label{eq:appendix_rloo}
    \begin{array}{cl}
         \min & \displaystyle \sum_{q \in \mc B_t} a_q\frac{1}{n_q}\\
         \st & \displaystyle \sum_{q \in \mc B_t} n_q = C \\
         & L \leq n_q \leq U \quad \forall q \in \mc B_t
    \end{array}
\end{equation}

Let $V(\{ n_q\})$ be the objective function of the above problem. We compute the first and second derivatives of the objective function with respect to each coordinate $n_q$:
\[
\frac{\partial V}{\partial n_q} = -a_q \frac{1}{(n_q - 1)^2}
\]

Since $n_q \geq L \geq 3$ and $a_q > 0$, we have $\frac{\partial V}{\partial n_q} 
\leq 0$ for all $q$. Thus, $V$ is decreasing with respect to each $n_q$ on the feasible set.

For the second derivatives:
\[
\frac{\partial^2 V}{\partial n_q \partial n_{q'}} = 0  \quad \forall q \neq q', \qquad \frac{\partial^2 V}{\partial n_q^2} = 2a_q \frac{1}{(n_q - 1)^3} > 0 \quad \forall q
\]
Therefore, $V$ is convex and decreasing in each $n_q$ on the feasible set
\[
\left\{ n \in \mathbb{R}^B : \sum_{q \in \mc B_t} n_q = C, \quad L \leq n_q \leq U \right\}.
\]
Hence, the minimizer exists and is unique whenever the feasible set is nonempty ($BL \leq C \leq BU$).

The Lagrangian function is
\[
\mathcal{L} = \sum_{q \in \mc B_t} a_q\frac{1}{n_q - 1}
+ \lambda \left( \sum_{q \in \mc B_t} n_q - C \right)
+ \sum_{q \in \mc B_t} \mu_q (L - n_q)
+ \sum_{q \in \mc B_t} \nu_q (n_q - U)
\]
where $\lambda \in \mathbb{R}$, $\mu_q, \nu_q \geq 0$. The KKT conditions are:
\begin{align*}
    & -a_q \frac{1}{(n_q - 1)^2} + \lambda - \mu_q + \nu_q = 0 & \forall q\\
    & \mu_q \geq 0, \quad \nu_q \geq 0 & \forall q\\
    & \mu_q (n_q - L) = 0, \quad \nu_q (n_q - U) = 0 & \forall q\\
    & L \leq n_q \leq U & \forall q\\
    & \sum_{q \in \mc B_t} n_q = C.
\end{align*}

We consider three cases of $n_q$:
\begin{itemize}[leftmargin=5mm]
    \item For each $q$ with $L < n_q < U$, the KKT stationarity condition is
    \[
    \lambda = a_q \frac{1}{(n_q - 1)^2},
    \]
    where $\lambda$ is the Lagrange multiplier for the sum constraint. Note that the right-hand side is decreasing in $n_q$ since $n_q \geq L \geq 3$. 

    For $n_q = L$, the right-hand side is $a_q \frac{1}{(L-1)^2}$, and for $n_q = U$, it is $a_q \frac{1}{(U-1)^2}$. Therefore, for each $q$ and any $\lambda \in (a_q \frac{1}{(U-1)^2}, a_q \frac{1}{(L-1)^2})$, there is one solution $n_q = \sqrt{\frac{a_q}{\lambda}} + 1$ to $a_q \frac{1}{(n_q-1)^2} = \lambda$ in the interior $(L, U)$. If $\lambda \geq a_q \frac{1}{(L-1)^2}$ or $\lambda \leq a_q \frac{1}{(U-1)^2}$, there is no interior solution, and the optimum for $n_q$ must be at a bound.

    \item If $n_q = L$, then $\mu_q \geq 0$ and $\nu_q = 0$. According to the KKT condition, we obtain:
    \[
    \lambda = a_q \frac{1}{(L-1)^2} + \mu_q \geq a_q \frac{1}{(L-1)^2}.
    \]

    \item If $n_q = U$, then $\mu_q = 0$ and $\nu_q \geq 0$. According to the KKT condition, we obtain:
    \[
    \lambda = a_q \frac{1}{(U-1)^2} - \nu_q \leq a_q \frac{1}{(U-1)^2}.
    \]
\end{itemize}

For a value of $\lambda$, for each coordinate, the KKT solution for $n_q$ is defined as:
\[
n_q^{\star}(\lambda) = 
\begin{cases}
    U & \text{if } \lambda \leq a_q \frac{1}{(U-1)^2}, \\
    \sqrt{\frac{a_q}{\lambda}} + 1 & \text{if } a_q \frac{1}{(U-1)^2} < \lambda < a_q \frac{1}{(L-1)^2}, \\
    L & \text{if } \lambda \geq a_q \frac{1}{(L-1)^2}.
\end{cases}
\]

The coupling constraint $\sum_{q \in \mc B_t} n_q = C$ is enforced by selecting $\lambda$ such that
\[
S(\lambda) := \sum_{q \in \mc B_t} n_q^{\star}(\lambda) = C.
\]
Each $n_q^{\star}(\lambda)$ is non-increasing in $\lambda$ (since $a_q \frac{1}{(n_q-1)^2}$ is decreasing and the projection preserves monotonicity), so $S(\lambda)$ is also non-increasing. In particular:
\begin{itemize}
    \item As $\lambda \to -\infty$, $n_q^{\star}(\lambda) \to U$, so $S(-\infty) =  BU$.
    \item As $\lambda \to +\infty$, $n_q^{\star}(\lambda) \to L$, so $S(+\infty) = BL$.
\end{itemize}
Therefore, for any feasible $C$ with $BL \leq C \leq BU$, there exists a unique $\lambda$ such that $S(\lambda) = C$. If $C > BU$ or $C < BL$, the problem is infeasible.
\end{proof}

\section{Statistical Tests for Second-Order Uncorrelation} \label{app:test}

In this section, we provide statistical tests to validate the assumptions in our paper.

\subsection{First-order Correlation Test via Fisher's method} \label{app:corr}

For each question $q$, consider the two random variables $\tilde R_{q}$ and $\tilde Z_{q}$, with $n$ independent observations
\[
\{ (\tilde R_{q,j}, \tilde Z_{q,j}) \}_{j=1}^{n}.
\]

\textbf{Compute per-question Pearson correlation.} 
The sample Pearson correlation for question $q$ is
\[
\hat\rho_{q} = \frac{ \sum_{j=1}^n (\tilde R_{q,j} - \bar{\tilde R}_{q})(\tilde Z_{q,j} - \bar{Z}_{q}) }{\sqrt{\sum_{j=1}^n (\tilde R_{q,j} - \bar{R}_{q})^2 \sum_{j=1}^n (\tilde Z_{q,j} - \bar{ Z}_{q})^2 }},
\]
where
\[
\bar{R}_{q} = \frac{1}{n} \sum_{j=1}^n \tilde R_{q,j}, \qquad 
\bar{Z}_{q} = \frac{1}{n} \sum_{j=1}^n \tilde Z_{q,j}.
\]

\textbf{Compute per-question $p$-values.}  
For each question $q$, we test the null hypothesis
\[
H_{0,q}: \rho_q = 0.
\] 
The $p$-value $p_q$ is obtained directly from the standard Pearson correlation test.

\textbf{Combine $p$-values across questions using Fisher's method.}  
Let $Q$ be the total number of questions. Fisher's method combines the per-question $p$-values $\{p_q\}_{q=1}^Q$ into a single test statistic:
\[
\chi^2_{\text{Fisher}} = -2 \sum_{q=1}^{Q} \ln p_q.
\]
Under the global null hypothesis
\[
H_0: \rho_q = 0 \quad \forall q,
\]
the statistic $\chi^2_{\text{Fisher}}$ follows a chi-squared distribution with $2Q$ degrees of freedom:
\[
\chi^2_{\text{Fisher}} \sim \chi^2_{2Q}.
\]

\textbf{Global $p$-value and decision rule.}  
The global $p$-value for testing $H_0$ across all questions is
\[
p_{\text{global}} = \Pr\left( \chi^2_{2Q} \geq \chi^2_{\text{Fisher}} \right).
\]

Given a significance level $\alpha$ (e.g., $\alpha = 0.05$), we make the following decision:
\begin{itemize}[leftmargin=5mm]
    \item If $p_{\text{global}} < \alpha$, we reject the global null hypothesis $H_0$, which indicates that at least some of the correlations $\rho_q$ are significantly different from zero across the questions.
    \item If $p_{\text{global}} \geq \alpha$, we fail to reject $H_0$, which supports the hypothesis that the correlations $\rho_q$ are zero for all questions at the significance level $\alpha$.
\end{itemize}

We conduct the correlation test described above on a benchmark of $Q = 600$ questions, each with $n = 16$ independent rollouts. For each question $q$, we compute the Pearson correlation between $\tilde R_q$ and $\tilde Z_q$, obtain the corresponding $p$-value $p_q$, and aggregate across all questions using Fisher's method to compute the global $p$-value $p_{\text{global}}$.

We evaluate the policy model $\pi_{\theta_t}$ at four checkpoints during training of \texttt{Qwen2.5-Math-1.5B}, corresponding to $0.0$, $0.5$, $1.0$ epochs. At each checkpoint, we report the resulting $p_{\text{global}}$ values in Table~\ref{tab:pglobal_first_order}. Since all global $p$-values exceed the chosen significance level $\alpha = 0.05$, we do not reject the null hypothesis, which supports our assumption that the correlations $\rho_q$ are zero across all questions.

\begin{table}[h!]
\centering
\begin{tabular}{@{}c cc@{}}
\toprule
\textbf{Epoch} 
& \multicolumn{2}{c}{\textbf{Global $p$-value}} \\
\cmidrule(lr){2-3}
& $\tilde Z_j = \|H(\tilde o_j)\|_2$ \\ 
\midrule
0.0 & 0.7322 \\
0.5 & 0.1108 \\
1.0 & 0.2186 \\
\bottomrule
\end{tabular}
\caption{Global $p$-values ($p_{\text{global}}$) across training epochs for \texttt{Qwen2.5-Math-1.5B}.}
\label{tab:pglobal_first_order}
\end{table}


\subsection{First-order Correlation Test via Edgington's Method} 

For each question $q$, let $\hat\rho_q$ denote the sample Pearson correlation computed from $n$ independent rollouts, and let $p_q$ be the corresponding two-sided $p$-value for testing the null hypothesis
\[
H_{0,q} : \rho_q = 0.
\]
To aggregate evidence across all $Q$ questions, we apply Edgington’s sum-of-$p$ method.

\textbf{Sum of $p$-values.}
Each per-question $p_q$ is treated as a realization of a $\mathrm{Uniform}(0,1)$ variable under its null hypothesis.  
Edgington’s statistic is defined by the simple sum
\[
S_{\mathrm{Ed}}
\;=\;
\sum_{q=1}^Q p_q .
\]

\textbf{Null distribution.}
Under the global null hypothesis
\[
H_0 : \rho_q = 0 \quad \forall q,
\]
each $p_q \sim \mathrm{Uniform}(0,1)$, and therefore
\[
S_{\mathrm{Ed}}
\;\sim\;
\text{Irwin--Hall}(Q),
\]
with mean and variance
\[
\mathbb{E}[S_{\mathrm{Ed}}] = \frac{Q}{2},
\qquad
\operatorname{Var}(S_{\mathrm{Ed}}) = \frac{Q}{12}.
\]
For large $Q$, $S_{\mathrm{Ed}}$ is well approximated by a normal distribution:
\[
S_{\mathrm{Ed}}
\;\approx\;
\mathcal{N}\!\left(\frac{Q}{2}, \; \frac{Q}{12}\right).
\]

\textbf{Global $p$-value and decision rule.}
Small values of $S_{\mathrm{Ed}}$ indicate joint evidence against $H_0$.  
The corresponding one-sided global $p$-value is
\[
p_{\mathrm{global}}
=
\Phi\!\left(
\frac{S_{\mathrm{Ed}} - Q/2}{\sqrt{Q/12}}
\right),
\]
where $\Phi$ denotes the standard normal CDF.  
Given a significance level $\alpha = 0.5$, we reject $H_0$ when $p_{\mathrm{global}} < \alpha$.

We set up the experiment identically to the Fisher’s method test in Appendix~\ref{app:corr}, using the same benchmark of $Q=600$ questions, each with $n=16$ independent rollouts.  
For each checkpoint of the policy model $\pi_{\theta_t}$, we compute the Edgington statistic and report the global $p$-value. Since all global $p$-values exceed the chosen significance level $\alpha = 0.05$, we do not reject the null hypothesis, which supports our assumption that the correlations $\rho_q$ are zero across all questions.
\begin{table}[h!]
\label{tab:pglobal_edgington}
\centering 
\begin{tabular}{@{}c cc@{}}
\toprule
\textbf{Epoch} 
& \multicolumn{2}{c}{\textbf{Global $p$-value}} \\
\cmidrule(lr){2-3}
& $\tilde Z_j = \|H(\tilde o_j)\|_2$ \\
\midrule
0.0 & 0.7894 \\  
0.5 & 0.3964 \\  
1.0 & 0.2148 \\  
\bottomrule
\end{tabular}
\caption{Global $p$-values ($p_{\text{global}}$) across training epochs for \texttt{Qwen2.5-Math-1.5B} using Edgington's method.}
\end{table}


\subsection{Equal Variance Test via Levene’s test}
\label{app:equal-var}

In the numerical experiments, we have assumed that the variance for $\tilde Z_q$ is constant across different prompts $q$. We proceed with a hypothesis test:
\[
H_0: \sigma_{Z_q}^2 = \sigma_{Z_{q'}}^2 \ \forall q \neq q', \qquad 
H_1: \text{At least one } \sigma_{Z_q}^2 \neq \sigma_{Z_{q'}}^2
\]

For each question $q$, consider the random variable $\tilde Z_q$ with $n_q$ independent observations $\{ \tilde Z_{q,j} \}_{j=1}^{n_q}$.

\textbf{Transform observations for Levene's test.}  
Let $Y_{q,j}$ denote the absolute deviation from the per-question median:
\[
Y_{q,j} = \big| \tilde Z_{q,j} - \text{median}(\tilde Z_{q,1}, \dots, \tilde Z_{q,n_q}) \big|.
\]

\textbf{Compute group means of transformed observations.}  
The mean of the transformed observations for question $q$ is
\[
\bar{Y}_q = \frac{1}{n_q} \sum_{j=1}^{n_q} Y_{q,j},
\]
and the overall mean across all questions is
\[
\bar{Y} = \frac{1}{N} \sum_{q=1}^{Q} \sum_{j=1}^{n_q} Y_{q,j}, \qquad N = \sum_{q=1}^{Q} n_q.
\]

\textbf{Compute Levene's test statistic.}  
The test statistic is given by
\[
W = \frac{(N-Q) \sum_{q=1}^{Q} n_q (\bar{Y}_q - \bar{Y})^2}{(Q-1) \sum_{q=1}^{Q} \sum_{j=1}^{n_q} (Y_{q,j} - \bar{Y}_q)^2}.
\]
Under the null hypothesis that the variances are equal across questions,
\[
H_0: \sigma_{Z_q}^2 = \sigma_{Z_{q'}}^2 \quad \forall q \neq q',
\]
the statistic $W$ approximately follows an $F$-distribution with $Q-1$ and $N-Q$ degrees of freedom $W \sim F_{Q-1, N-Q}$.

\textbf{Compute $p$-value and decision rule.}  
The $p$-value for testing $H_0$ is
\[
p_{\text{Levene}} = \Pr(F_{Q-1, N-Q} \geq W).
\]
Given a significance level $\alpha$ (e.g., $\alpha = 0.05$), we make the following decision:
\begin{itemize}[leftmargin=5mm]
    \item If $p_{\text{Levene}} < \alpha$, we reject $H_0$, indicating that the variances of $\tilde Z_q$ differ across questions.
    \item If $p_{\text{Levene}} \geq \alpha$, we fail to reject $H_0$, the hypothesis that the variances are equal across all questions, at the significance level $\alpha$.
\end{itemize}

We conduct the variance homogeneity test described above on a benchmark of $Q = 600$ questions, each with $n = 16$ independent rollouts. We perform Levene's test across all questions to assess the equality of variances. We evaluate the policy model $\pi_{\theta_t}$ at four checkpoints during training of \texttt{Qwen2.5-Math-1.5B}, corresponding to $0.0$, $0.5$, $1.0$ epochs. At each checkpoint, we report the resulting global $p$-values $p_{\text{Levene}}$ in Table~\ref{tab:pglobal_equivariance}. Since all $p_{\text{Levene}}$ exceed the chosen significance level $\alpha = 0.05$, we can not reject the null hypothesis, which supports our assumption that the variances $\sigma_{Z_q}^2$ are equal across all questions.  

\begin{table}[h!] 
\centering
\begin{tabular}{@{}c cc@{}}
\toprule
\textbf{Epoch} 
& \multicolumn{2}{c}{\textbf{Global $p$-value}} \\
\cmidrule(lr){2-3}
& $\tilde Z_j = \mathbbm{1}^\top H(\tilde o_j)$ 
& $\tilde Z_j = \|H(\tilde o_j)\|_2$ \\
\midrule
0.0 & 0.5019 & 0.2705 \\
0.5 & 0.4132 & 0.4785 \\
1.0 & 0.3847 & 0.3847 \\
\bottomrule
\end{tabular}
\caption{$p_{\text{Levene}}$ from Levene's test across training epochs for \texttt{Qwen2.5-Math-1.5B}, assessing variance homogeneity of $\tilde Z_q$.}
\label{tab:pglobal_equivariance}
\end{table}

\subsection{Equal Variance Test via O'Brien's Test}
\label{app:equal-var-obrien}

In the numerical experiments, we have assumed that the variance for $\tilde Z_q$ is constant across different prompts $q$. We proceed with a hypothesis test:
\[
H_0: \sigma_{Z_q}^2 = \sigma_{Z_{q'}}^2 \ \forall q \neq q', \qquad 
H_1: \text{At least one } \sigma_{Z_q}^2 \neq \sigma_{Z_{q'}}^2
\]

For each question $q$, consider the random variable $\tilde Z_q$ with $n_q$ independent observations $\{ \tilde Z_{q,j} \}_{j=1}^{n_q}$.

\textbf{Transform observations for O'Brien's test.}  
Let $Y_{q,j}$ denote O'Brien's transformation of the observations:
\[
Y_{q,j} = \frac{(n_q-1.5)n_q(\tilde Z_{q,j} - \bar{\tilde Z}_q)^2 - 0.5 s_q^2 (n_q-1)}{(n_q-1)(n_q-2)},
\]
where $\bar{\tilde Z}_q$ is the sample mean for question $q$, and $s_q^2$ is the unbiased sample variance for question $q$.

\textbf{Compute group means of transformed observations.}  
The mean of the transformed observations for question $q$ is
\[
\bar{Y}_q = \frac{1}{n_q} \sum_{j=1}^{n_q} Y_{q,j},
\]
and the overall mean across all questions is
\[
\bar{Y} = \frac{1}{N} \sum_{q=1}^{Q} \sum_{j=1}^{n_q} Y_{q,j}, \qquad N = \sum_{q=1}^{Q} n_q.
\]

\textbf{Compute O'Brien's test statistic.}  
The test statistic is given by
\[
W_{\text{OB}} = \frac{(N-Q) \sum_{q=1}^{Q} n_q (\bar{Y}_q - \bar{Y})^2}{(Q-1) \sum_{q=1}^{Q} \sum_{j=1}^{n_q} (Y_{q,j} - \bar{Y}_q)^2}.
\]
Under the null hypothesis that the variances are equal across questions,
\[
H_0: \sigma_{Z_q}^2 = \sigma_{Z_{q'}}^2 \quad \forall q \neq q',
\]
the statistic $W_{\text{OB}}$ approximately follows an $F$-distribution with $Q-1$ and $N-Q$ degrees of freedom $W_{\text{OB}} \sim F_{Q-1, N-Q}$.

\textbf{Compute $p$-value and decision rule.}  
The $p$-value for testing $H_0$ is
\[
p_{\text{OB}} = \Pr(F_{Q-1, N-Q} \geq W_{\text{OB}}).
\]
Given a significance level $\alpha$ (e.g., $\alpha = 0.05$), we make the following decision:
\begin{itemize}[leftmargin=5mm]
    \item If $p_{\text{OB}} < \alpha$, we reject $H_0$, indicating that the variances of $\tilde Z_q$ differ across questions.
    \item If $p_{\text{OB}} \geq \alpha$, we fail to reject $H_0$, the hypothesis that the variances are equal across all questions, at the significance level $\alpha$.
\end{itemize}

We conduct the variance homogeneity test described above on a benchmark of $Q = 600$ questions, each with $n = 16$ independent rollouts. We perform O'Brien's test across all questions to assess the equality of variances. We evaluate the policy model $\pi_{\theta_t}$ at three checkpoints during training of \texttt{Qwen2.5-Math-1.5B}, corresponding to $0.0$, $0.5$, $1.0$ epochs. At each checkpoint, we report the resulting global $p$-values $p_{\text{OB}}$ in Table~\ref{tab:pglobal_obrien}. Since all $p_{\text{OB}}$ exceed the chosen significance level $\alpha = 0.05$, we cannot reject the null hypothesis, which supports our assumption that the variances $\sigma_{Z_q}^2$ are equal across all questions.

\begin{table}[h!]
\centering
\begin{tabular}{@{}c cc@{}}
\toprule
\textbf{Epoch} 
& \multicolumn{2}{c}{\textbf{Global $p$-value}} \\
\cmidrule(lr){2-3}
& $\tilde Z_j = \|H(\tilde o_j)\|_2$ \\
\midrule
0.0 & 0.3009 \\
0.5 & 0.2563 \\
1.0 & 0.2420 \\
\bottomrule
\end{tabular}
\caption{$p_{\text{OB}}$ from O'Brien's test across training epochs for \texttt{Qwen2.5-Math-1.5B}, assessing variance homogeneity of $\tilde Z_q$.}
\label{tab:pglobal_obrien}
\end{table}


\section{Additional Information on Numerical Experiments} \label{app:exp}

\textbf{Hyperparameters.} We curate a list of important training hyperparameters for our experiment in Table~\ref{tab:hyperparameter}.

\begin{table}[ht] 
\centering
\caption{Hyperparameter configuration.}
\label{tab:hyperparameter}
\begin{tabular}{l l l}
\toprule
Category & Hyperparameter & Value / Setting \\
\midrule
Optimizer & Optimizer & AdamW \\
          & Learning rate & $1\times10^{-6}$ \\
          & Warm-up & 20 rollout steps \\
rollout   & Prompt batch size & 512 \\
          & Responses per prompt & 6/8/Dynamic \\
Training  & Mini-batch size & 512 \\
          & Max generation length & 10\,240 tokens \\
          & Temperature & 1.0 \\
\bottomrule
\end{tabular}
\end{table}

\subsection{Additional information on ablation studies}

\textbf{Inverse-accuracy allocation.}
We allocate more rollout budget to prompts with lower empirical accuracy.
Concretely, letting $\mathrm{acc}_i$ denote the running accuracy estimate for prompt $i$,
we set target weights $w_i \propto (1-\mathrm{acc}_i+\epsilon)$ and normalize to meet the
global budget and per-prompt bounds.

\textbf{Inverse-variance allocation.}
We allocate more rollout budget to prompts whose answers exhibit lower variance.
Letting $\sigma_i^2$ be the (running) answer variance estimate, we set
$w_i \propto 1/(\sigma_i^2+\epsilon)$ with the same normalization.

Both heuristics are implemented via a continuously relaxed, constrained optimization
that enforces the total-budget and box constraints; we solve it with an online
solver and then map fractional solutions to integers using the rounding
heuristic.

\begin{figure}[!ht]
    \centering
    \includegraphics[width=0.9\linewidth]{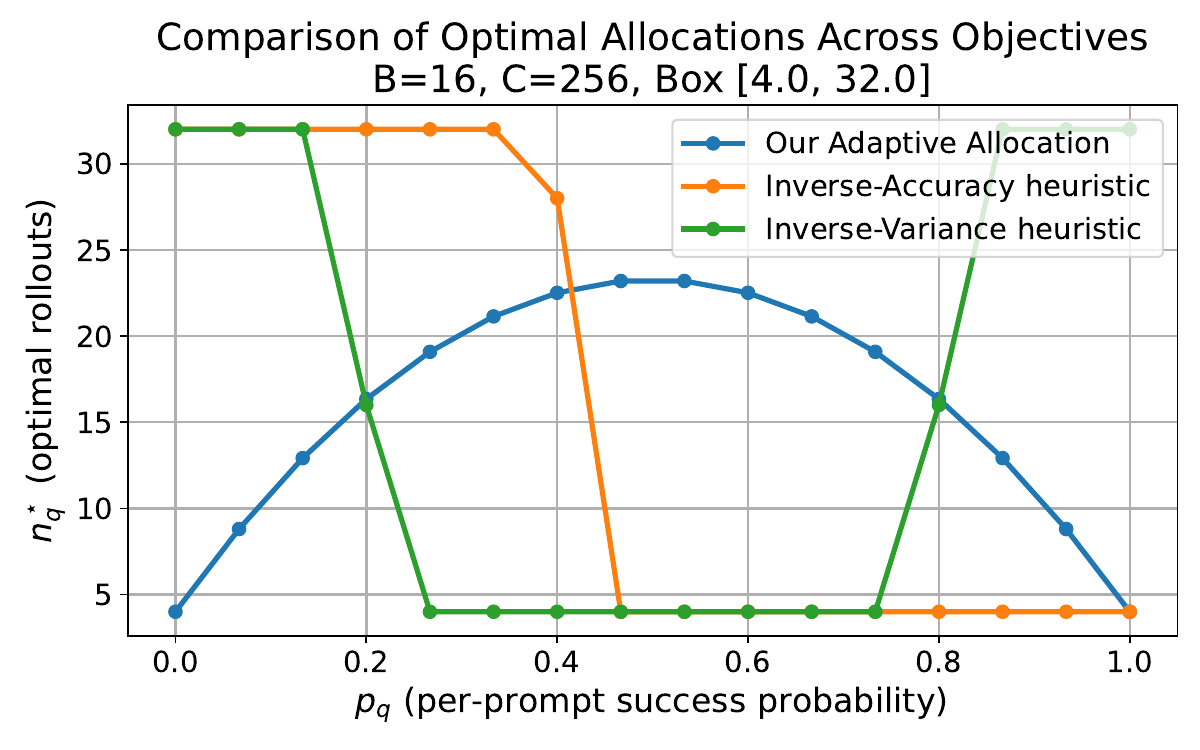}
    \caption{Comparison of optimal rollout allocations produced by different heuristics versus our proposed variance-aware allocation strategy. The figure plots the optimal number of rollouts $n_i^\star$ against prompt difficulty $p_i$, highlighting how our method allocates budget differently from inverse-accuracy and inverse-variance baselines.}
    \label{fig:ablation_comparison}
\end{figure}

\begin{figure}[ht]
\centering
\begin{subfigure}{0.44\textwidth}
    \centering
    \includegraphics[width=\linewidth]{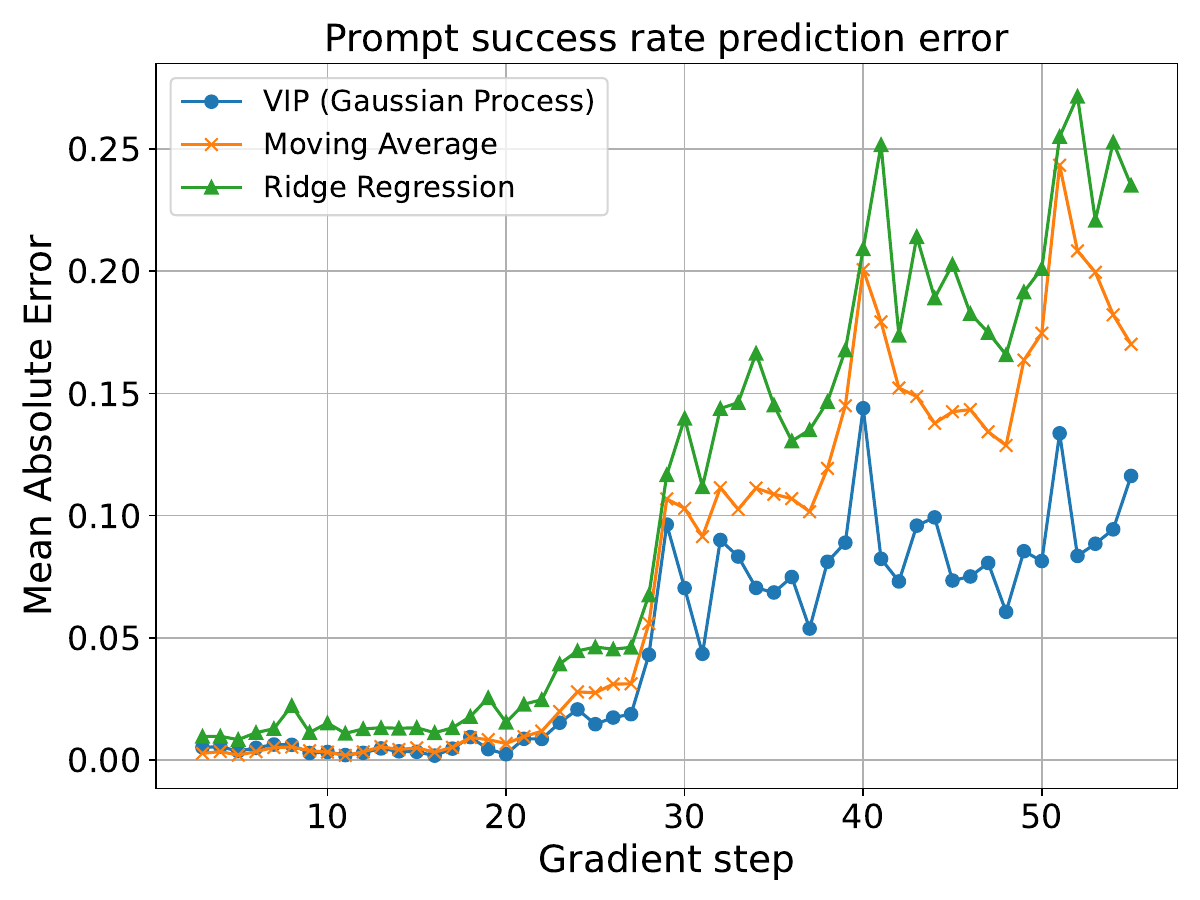}
    \caption{Qwen2.5-Math-1.5B}
\end{subfigure}
\hfill
\begin{subfigure}{0.44\textwidth}
    \centering
    \includegraphics[width=\linewidth]{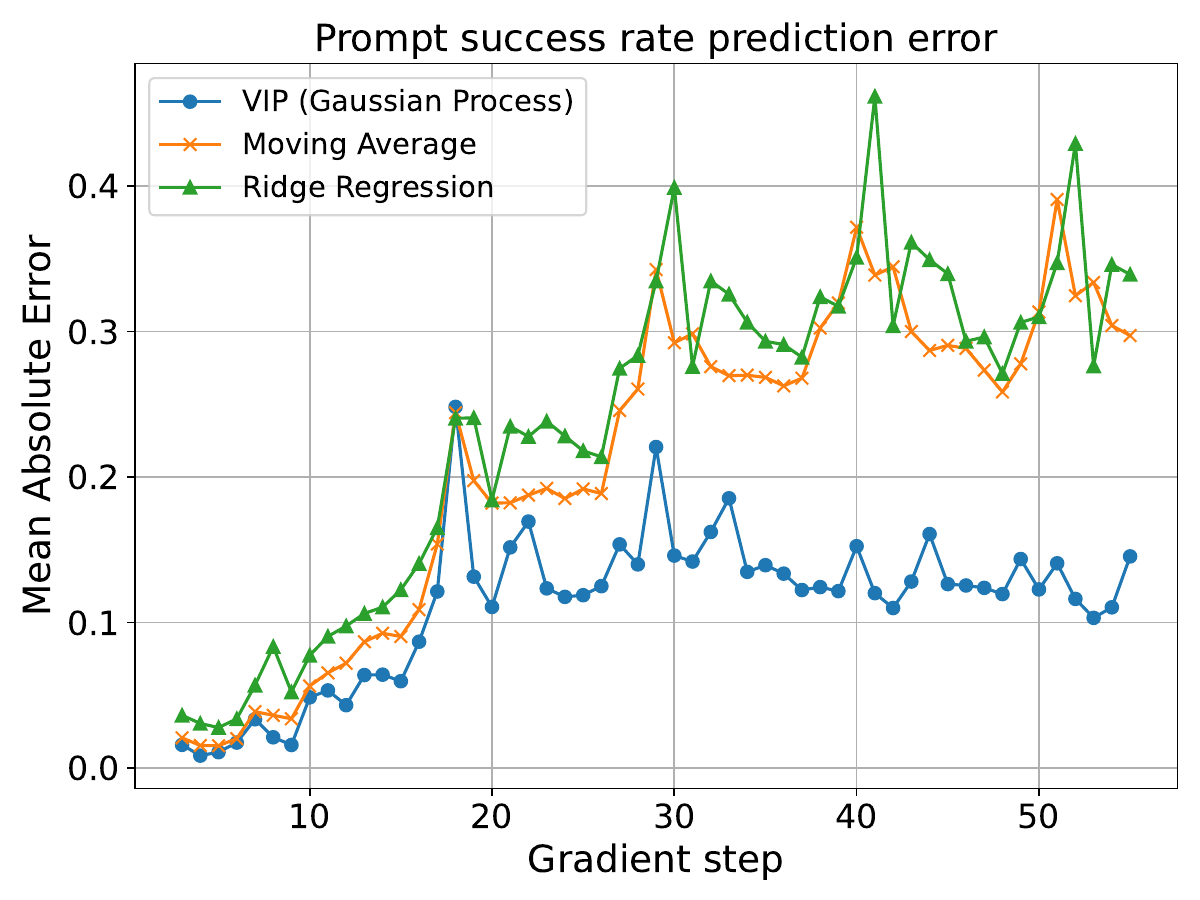}
    \caption{Qwen2.5-Math-7B}
\end{subfigure}
\caption{Prediction mean absolute error (MAE) over training steps for two model scales. Our GPR-based predictor achieves consistently lower MAE than moving average and Ridge Regression baselines for both the 1.5B and 7B models.}
\label{fig:prediction_mae}
\end{figure}

\subsection{Prompt template.}
During training, we only use one prompt template for every prompt in the dataset. There are two prompt templates, one for mathematical reasoning and one for tool-augmented reasoning. 

\begin{figure*}
\caption{Prompt template for mathematical reasoning} 
\lstset{
  basicstyle=\ttfamily\small, 
  frame=single,              
  breaklines=true,     
  breakatwhitespace=true,
}
\begin{lstlisting}
Solve the following math problem step by step. The last line of your response should be of the form Answer: $Answer (without quotes) where $Answer is the answer to the problem. Do not wrap $Answer with \boxed{}.

current question: {{question}}

Below are two examples for format reference.
Example question 1: Solve for x: 3x - 5 = 16.

Response:
Add 5 to both sides: 3x = 21.
Divide both sides by 3: x = 7.
Answer: 7

Solve the current question. Remember to put your answer on its own line after "Answer:".
\end{lstlisting}
\end{figure*}

\begin{figure*}
\caption{Prompt template for tool augmented reasoning} 
\lstset{
  basicstyle=\ttfamily\small, 
  frame=single,     
  breaklines=true,    
  breakatwhitespace=true,  
}
\begin{lstlisting}
In this environment you have access to a set of tools you can use to assist with the user query.

You may perform multiple rounds of function calls. 

In each round, you can call one or more functions.

Here are available functions in JSONSchema format: \n```json\n{func_schemas}\n```

In your response, you need to first think about the reasoning process in the mind and then conduct function calling to get the information or perform the actions if needed. \
The reasoning process and function calling are enclosed within <think> </think> and <tool_call> </tool_call> tags. \
The results of the function calls will be given back to you after execution, \
and you can continue to call functions until you get the final answer for the user's question. \
Finally, if you have got the answer, enclose it within \\boxed{{}} with latex format and do not continue to call functions, \
i.e., <think> Based on the response from the function call, I get the weather information. </think> The weather in Beijing on 2025-04-01 is \\[ \\boxed{{20C}} \\].

For each function call, return a json object with function name and arguments within <tool_call></tool_call> XML tags:
<tool_call>
{{"name": <function-name>, "arguments": <args-json-object>}}
</tool_call>
\end{lstlisting}
\end{figure*}

\section{Algorithms}
\label{sec:appendix_algorithm}
The algorithm capturing the complete flow the posterior update for the Gaussian Process is provided in Algorithm~\ref{alg:gp_recursive_update}.

\begin{algorithm}[ht]
\caption{Recursive GP Posterior Update}
\label{alg:gp_recursive_update}
\begin{algorithmic}[1]
\REQUIRE Mini-batch $\mathcal B_t$; rollout allocation $\{n_q\}_{q=1}^{\mathcal B_t}$; prior mean $m_t(\mathcal D) \in \mathbb{R}^Q$, kernel matrix $\Sigma \in \mathbb{R}^{Q \times Q}$; 

\FOR{each $q \in \mathcal{B}_t$}
    \STATE \# Run $n_q$ rollouts and observe outcomes $\tilde R_{j} \in \{-1, 1\}$
    \STATE $\bar{R}_{q} \gets \frac{1}{n_q} \sum_{j=1}^{n_q} \tilde R_{j}$

    \STATE $\hat{g}_{q} \gets \mathrm{sigmoid}^{-1}\Big( \mathrm{clip}\left(\frac{\bar{R}_{q} + 1}{2}, \epsilon, 1-\epsilon\right) \Big)$
\ENDFOR
\STATE $g_t^{\text{observe}} \gets (\hat{g}_{q})_{q \in \mathcal{B}_t}$
\STATE Partition $m_t$ and $\Sigma$ according to $\mathcal{B}_t$ and $\mathcal{B}_t^c$
\STATE $m_{t,\mathcal{B}_t^c}^{\star} \gets m_{t,\mathcal{B}_t^c} + \Sigma_{\mathcal{B}_t^c\mathcal{B}_t} \Sigma_{\mathcal{B}_t\mathcal{B}_t}^{-1} (g_t^{\text{observe}} - m_{t,\mathcal{B}_t})$
\STATE $\Sigma^{\star} \gets \Sigma_{\mathcal{B}_t^c\mathcal{B}_t^c} - \Sigma_{\mathcal{B}_t^c\mathcal{B}_t} \Sigma_{\mathcal{B}_t\mathcal{B}_t}^{-1} \Sigma_{\mathcal{B}_t\mathcal{B}_t^c}$
\FOR{$q = 1$ {\bfseries to} $Q$}
    \STATE \textbf{if} $q \in \mathcal{B}_t$ \textbf{then} $m_{t+1}(x_q) \gets \hat{g}_{q}$ \textbf{else} $m_{t+1}(x_q) \gets m_{t,\mathcal{B}_t^c}^{\star}(x_q)$ 
    \textbf{end if}
\ENDFOR
\STATE $\hat{p}_{t+1} = \mathrm{sigmoid}(m_{t+1}(\mc D))$
\RETURN $\{\hat{p}_{t+1}\}$, $m_{t+1}$
\end{algorithmic}
\end{algorithm}

Algorithm~\ref{alg:rounding} presents our heuristic rounding procedure, which maps a continuous solution to a discrete one while ensuring that the budget constraints remain satisfied.
\begin{algorithm}[h!]
\caption{Heuristic rounding for integer rollout allocation}
\label{alg:rounding}
\begin{algorithmic}[1]
\REQUIRE Solution $\{n_q^\star\}$, total budget $C$, bounds $\{L, U\}$, objective functions $f_q(\cdot)$ for each $q$
\STATE For each $q$, set $\hat{n}_q \gets \lfloor n_q^{\star} \rfloor$
\STATE $C_{\mathrm{rem}} \gets C - \sum_{q \in \mc B_t} \hat{n}_q$
\FOR{each $q$ with $\hat{n}_q < U$}
    \STATE Compute incentive: $\Delta_q \gets f_q(\hat{n}_q) - f_q(\hat{n}_q + 1)$
\ENDFOR
\WHILE{$C_{\mathrm{rem}} > 0$}
    \STATE Select $q^{\star} = \arg\max_{q: \hat{n}_q < U} \Delta_q$
    \STATE Set $\hat{n}_{q^{\star}} \gets \hat{n}_{q^{\star}} + 1$
    \STATE Recompute $\Delta_{q^{\star}} \gets f_{q^{\star}}(\hat{n}_{q^{\star}}) - f_{q^{\star}}(\hat{n}_{q^{\star}} + 1)$
    \STATE $C_{\mathrm{rem}} \gets C_{\mathrm{rem}} - 1$
\ENDWHILE
\RETURN Integer allocation $\{\hat n_q\}$ with $\sum_{q \in \mc B_t} \hat n_q = C$ and $L \leq \hat n_q \leq U$ for all $q$
\end{algorithmic}
\end{algorithm}

\section{Extension to Continuous Rewards}
\label{sec:appendix_continuous}

This section details the necessary adaptations to our predictive rollout allocation strategy for the case where the reward $R(\tilde o_j)$ is a real-valued random variable. All definitions, assumptions, and notation follow the main text unless otherwise stated.

\subsection{Gradient Variance for Continuous Rewards}

We first state the analogues of our variance propositions for the continuous reward setting. The proofs are intermediate results from proofs for binary case in Appendix~\ref{sec:proofs}.

\begin{proposition}[Dr.~GRPO gradient variance, continuous reward]
\label{prop:drgrpo_var_cont}
Let $R(\tilde o_j) = \tilde R$ be a real-valued random variable with variance $\mathrm{Var}(\tilde R)$. If Assumption~\ref{a2} holds and $\mathrm{Var}(\tilde Z) = \sigma_Z^2$, then the variance of the per-prompt projected Dr.~GRPO gradient estimator with $n$ rollouts is
\[
\mathrm{Var}(\tilde G) = \frac{(n-1)\sigma_Z^2}{n^2} \mathrm{Var}(\tilde R).
\]
\end{proposition}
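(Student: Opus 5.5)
The plan is to re-run the computation from the proof of Proposition~\ref{prop:drgrpo_var}. At no point before its last line did that argument use that $\tilde R$ is $\pm 1$ valued. Every moment appearing there was written in terms of $\mathbb{E}[\tilde R]$, $\mathbb{E}[\tilde R^2]$, $\mu_Z$ and $\sigma_Z^2$. So the binary specialization $\mathrm{Var}(\tilde R)=4p(1-p)$ only enters at the very end, and I will stop just before it.

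First I will check that the zero-mean step still holds. We have $\mathbb{E}[\tilde G]=\mathbb{E}[\tilde R\tilde Z]-\mathbb{E}[\tilde R\tilde Z]=0$, which uses only Assumption~\ref{a2}(i) and finiteness of $\mathbb{E}[\tilde R\tilde Z]$. I will assume $\tilde R$ and $\tilde Z$ have finite second moments so that all expectations exist. Then $\mathrm{Var}(\tilde G)=\mathbb{E}[\tilde G^2]=C_1+C_2-C_3$, with the same decomposition as before.

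Next I will go through the index-pattern enumerations for $C_1$, $C_2$ and $C_3$, and confirm that each factorization is justified by Assumption~\ref{a2} alone. For example, $\mathbb{E}[\tilde R_k^2\tilde Z_j\tilde Z_{j'}]=\mathbb{E}[\tilde R^2]\mu_Z^2$ and $\mathbb{E}[\tilde R_k\tilde R_{k'}\tilde Z_j^2]=(\mathbb{E}\tilde R)^2(\mu_Z^2+\sigma_Z^2)$. Each of these factorizations follows from three facts: independence across rollouts for $\tilde R$ and for $\tilde Z$ separately (part (i)), the second-order uncorrelatedness $\mathrm{Cov}(\tilde R_k\tilde R_{k'},\tilde Z_j\tilde Z_{j'})=0$ (part (ii)), and $\mathrm{Cov}(\tilde R_k,\tilde Z_j)=0$ (part (ii)). None of them uses the support of $\tilde R$. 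The term counts are purely combinatorial and therefore unchanged.

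Collecting coefficients then gives the bracketed expression exactly as in that proof. The $\mu_Z^2$ terms cancel in both brackets, and the coefficients on $\mathbb{E}[\tilde R^2]$ and $(\mathbb{E}\tilde R)^2$ are $\frac{n-1}{n^2}\sigma_Z^2$ and $-\frac{n-1}{n^2}\sigma_Z^2$ respectively. This yields $\mathrm{Var}(\tilde G)=\frac{n-1}{n^2}\sigma_Z^2\bigl(\mathbb{E}[\tilde R^2]-(\mathbb{E}\tilde R)^2\bigr)=\frac{(n-1)\sigma_Z^2}{n^2}\mathrm{Var}(\tilde R)$, as claimed.

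The only real care needed is in the second step: verifying that no hidden use of $\tilde R^2=1$ or of boundedness slipped into the moment factorizations. Beyond that, the proof is a direct citation of the intermediate identity from the proof of Proposition~\ref{prop:drgrpo_var}.
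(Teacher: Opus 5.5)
Your proposal is correct and is essentially the paper's own argument. The paper proves this proposition by noting that the binary-case computation of $C_1+C_2-C_3$ already yields $\frac{n-1}{n^2}\sigma_Z^2\bigl(\mathbb{E}[\tilde R^2]-(\mathbb{E}[\tilde R])^2\bigr)$ before $\tilde R\in\{1,-1\}$ is used; you also correctly check that each moment factorization relies only on Assumption~\ref{a2}. One small correction: the zero-mean step needs $\mathrm{Cov}(\tilde R_k,\tilde Z_j)=0$ from Assumption~\ref{a2}(ii), not just part (i), since with i.i.d. rollouts one gets $\mathbb{E}[\tilde G]=\frac{n-1}{n}\mathrm{Cov}(\tilde R,\tilde Z)$. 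Because you assume all of Assumption~\ref{a2}, this does not affect the proof.
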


\begin{proposition}[RLOO gradient variance, continuous reward]
\label{prop:rloo_var_cont}
Let $R(\tilde o_j) = \tilde R$ be a real-valued random variable with variance $\mathrm{Var}(\tilde R)$. If Assumption~\ref{a2} holds and $\mathrm{Var}(\tilde Z) = \sigma_Z^2$, then the variance of the per-prompt projected RLOO gradient estimator with $n$ rollouts is
\[
\mathrm{Var}(\tilde G) = \frac{\sigma_Z^2}{n-1} \mathrm{Var}(\tilde R).
\]
\end{proposition}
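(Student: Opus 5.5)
The statement immediately above is Proposition~\ref{prop:rloo_var_cont}, the RLOO variance for a real-valued reward. I would not redo any moment calculations. Instead I would reuse the proof of Proposition~\ref{prop:rloo_var} verbatim up to the point where the binary structure of $\tilde R$ is first used. The key observation is that nothing before the last line of that proof depends on $\tilde R \in \{-1,1\}$. The computations of $\E[\tilde G]$, $C_1$, $C_2$ and $C_3$ only use three ingredients:
\begin{itemize}[leftmargin=5mm]
\item Assumption~\ref{a2}(i), for the i.i.d.\ structure that collapses the index sums;
\item Assumption~\ref{a2}(ii), to factor $\E[\tilde R_k\tilde R_{k'}\tilde Z_j\tilde Z_{j'}]$ into $\E[\tilde R_k\tilde R_{k'}]\,\E[\tilde Z_j\tilde Z_{j'}]$ and $\E[\tilde R_k \tilde Z_j]$ into $\E[\tilde R]\mu_Z$;
\item finiteness of the moments $\E[\tilde R^2]$ and $\E[\tilde Z^2]=\mu_Z^2+\sigma_Z^2$.
\end{itemize}

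The first step is to make explicit the implicit hypothesis that $\tilde R$ has a finite second moment. This is automatic once $\mathrm{Var}(\tilde R)$ is assumed to exist as a finite quantity. With it, every expectation in the expansion is finite and the term-by-term manipulations are legitimate.

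The second step is to check that the zero-mean computation $\E[\tilde G]=\E[\tilde R\tilde Z]-\E[\tilde R\tilde Z]=0$ carries over unchanged, so that $\mathrm{Var}(\tilde G)=\E[\tilde G^2]=C_1+C_2-C_3$.

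The third step is to import the closed forms of $C_1$, $C_2$ and $C_3$ from the proof of Proposition~\ref{prop:rloo_var}. These are expressed purely through $\E[\tilde R^2]$ and $(\E[\tilde R])^2$. Summing them gives
\[
\mathrm{Var}(\tilde G)=\frac{\sigma_Z^2}{n-1}\big(\E[\tilde R^2]-(\E[\tilde R])^2\big)=\frac{\sigma_Z^2}{n-1}\mathrm{Var}(\tilde R).
\]
At this point I would simply stop before substituting $\mathrm{Var}(\tilde R)=4p(1-p)$.

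The main obstacle is not conceptual but a bookkeeping check. I would re-verify that the coefficients of the $\mu_Z^2$ terms cancel in the RLOO combination $C_1+C_2-C_3$. I would also confirm that the surviving $\sigma_Z^2$ coefficients are $\tfrac{1}{n-1}$ on $\E[\tilde R^2]$ and $-\tfrac{1}{n-1}$ on $(\E[\tilde R])^2$, since the binary-case write-up contains a typo in that final line. I would verify this directly from the displayed expressions for $C_1$, $C_2$ and $C_3$:
\begin{itemize}[leftmargin=5mm]
\item On $\E[\tilde R^2]$: the $\mu_Z^2$ coefficients are $\tfrac1n+\tfrac1n-\tfrac2n=0$, and the $\sigma_Z^2$ coefficients are $\tfrac1n+\tfrac{1}{n(n-1)}=\tfrac{1}{n-1}$.
\item On $(\E[\tilde R])^2$: the $\mu_Z^2$ coefficients are $\tfrac{n-1}{n}+\tfrac{n-1}{n}-\tfrac{2n-2}{n}=0$, and the $\sigma_Z^2$ coefficients are $\tfrac{n-2}{n(n-1)}-\tfrac2n=-\tfrac{1}{n-1}$.
\end{itemize}
This confirms the claimed formula.
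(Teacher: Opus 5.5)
Your proposal is correct and follows the paper's own route: the paper obtains this proposition as the intermediate identity $\mathrm{Var}(\tilde G)=\frac{\sigma_Z^2}{n-1}\bigl(\E[\tilde R^2]-(\E[\tilde R])^2\bigr)$ in the proof of Proposition~\ref{prop:rloo_var}, before substituting $\mathrm{Var}(\tilde R)=4p(1-p)$. Your coefficient check of $C_1+C_2-C_3$ is accurate, including the typo you flag in that final display. As a quick independent check, the RLOO advantage is exactly $\frac{n}{n-1}$ times the Dr.~GRPO advantage, so the result also follows from Proposition~\ref{prop:drgrpo_var_cont} by multiplying that variance by $\frac{n^2}{(n-1)^2}$.
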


\subsection{Gaussian Process Prediction of Reward Variance}

For continuous rewards, the per-prompt gradient variance depends on $\mathrm{Var}(\tilde R_q)$, which is not directly observable prior to rollout. To predict this quantity, we replace the GP model for success probability with a GP model for reward variance. Specifically, for each prompt $q$, we model the reward variance as $v_{q,t} = \mathrm{softplus}(g_t(x_q)) = \log(1+\exp(g_t(x_q)))$, where $g_t$ is a latent GP as in the main text. After observing rewards $\{\tilde R_{q,j}\}_{j=1}^{n_q}$, we compute the sample variance $\widehat{s}^2_q$ and set the observation for the latent variable as $\hat g_{q,t} = \log(\exp(\widehat{s}^2_{q}) - 1)$. The GP posterior update and recursive prediction steps proceed identically, replacing the sigmoid link with the softplus link.

\subsection{Budget Allocation Optimization}

Given predicted reward variances $\widehat{\mathrm{Var}}(\tilde R_q)$, we define $a_q := \sigma_{Z_q}^2 \widehat{\mathrm{Var}}(\tilde R_q)$. The continuous relaxation of the rollout allocation problem for Dr.~GRPO becomes
\[
\min \left\{
    \sum\nolimits_{q \in \mathcal{B}_t} a_q \frac{n_q - 1}{n_q^2} ~:~ \sum\nolimits_{q \in \mathcal{B}_t} n_q = C,\ L \leq n_q \leq U,\ n_q \in \mathbb{R}\ \forall q
\right\},
\]
and for RLOO,
\[
\min \left\{
    \sum\nolimits_{q \in \mathcal{B}_t} a_q \frac{1}{n_q-1} ~:~ \sum\nolimits_{q \in \mathcal{B}_t} n_q = C,\ L \leq n_q \leq U,\ n_q \in \mathbb{R}\ \forall q
\right\}.
\]
The optimal solutions are given by Theorems~\ref{thm:opt-drgrpo} and~\ref{thm:opt-rloo} in the main text, now with the updated definition of $a_q$. The rounding procedure described in Appendix~\ref{sec:appendix_algorithm} applies without modification.

\section{Training Evolution Comparison}

In this section, we compare training evolution on model \texttt{Qwen2.5-Math-1.5B} using GRPO and GRPO+VIP. Figures~\ref{fig:grpo-training} report training metrics (mean advantages, mean rewards) and multiple performance metrics (\textit{best@32}, \textit{maj@32}, \textit{mean@32}).

To ensure that all training trajectories are directly comparable, \textbf{every model is trained on the same dataset under identical optimization settings}: the same fixed ordering of 17k training prompts, two epochs of training, a batch size of 512, mini-batch size of 64, and rollout budget per batch of 512 * 8 and 512 * 16. As a result, each gradient step corresponds to the same amount of data and computation across all methods.

Across most of evaluation checkpoints, we observe consistent and pronounced improvements from using VIP. During training, we observe that using VIP, we obtain consistently far-from-zero advantages and higher rewards. This indicates that our learning algorithm provides richer learning signal and our rollout allocation strategy is effective. Further empirical evidence are provided in Figures~\ref{fig:grpo-final-performance} where VIP performance exceeded GRPO at most training steps. 

\begin{figure*}[t]
    \centering
    
    \begin{subfigure}[b]{0.49\textwidth}
        \centering
        \includegraphics[width=\linewidth]{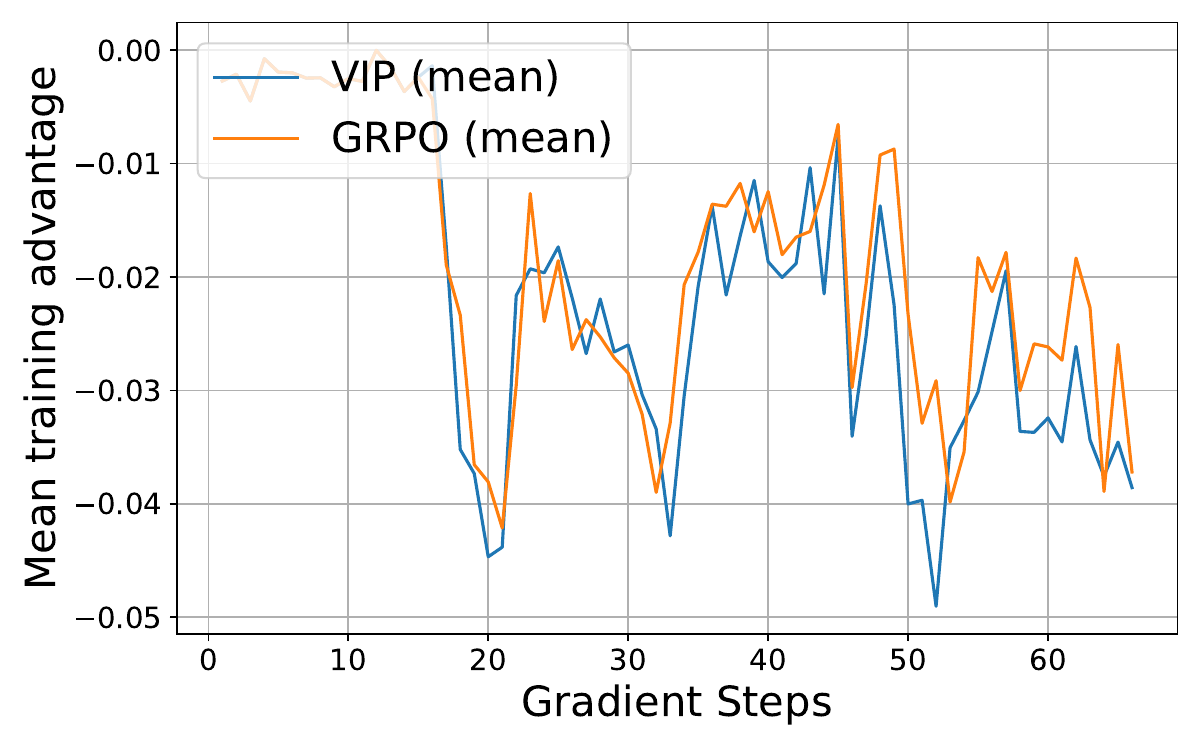}
    \end{subfigure}
    \hfill
    \begin{subfigure}[b]{0.49\textwidth}
        \centering
        \includegraphics[width=\linewidth]{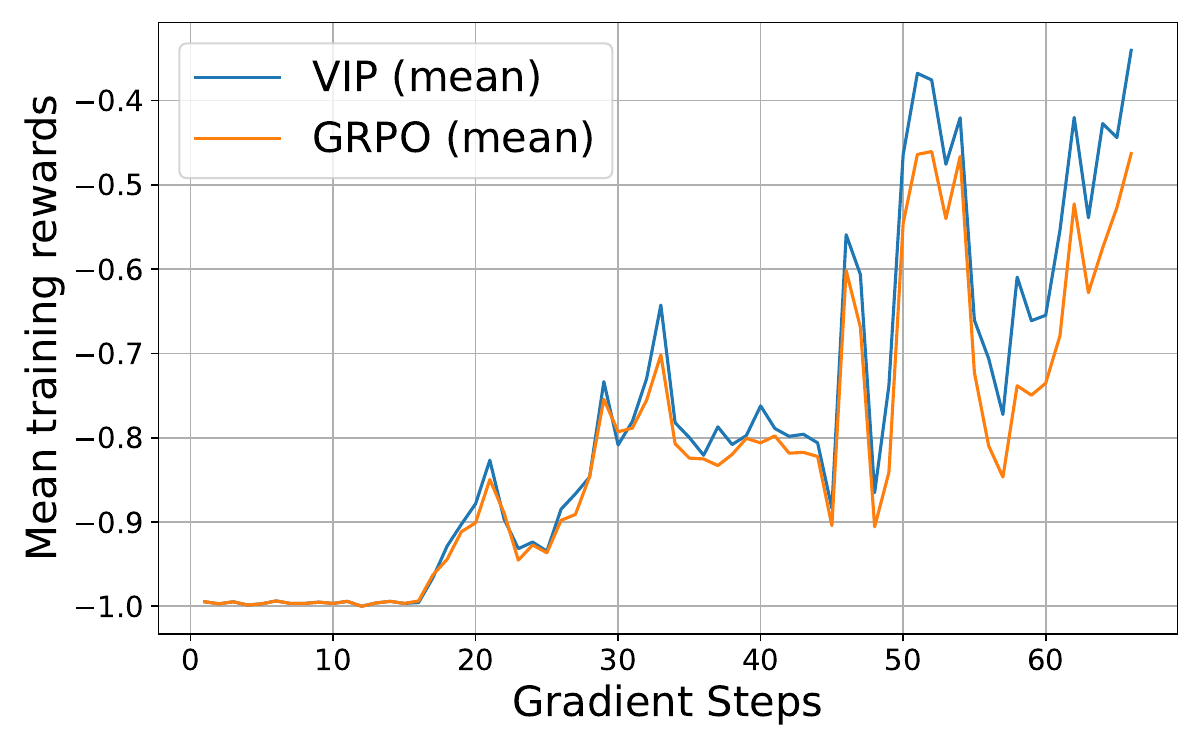}
    \end{subfigure}

    \begin{subfigure}[b]{0.49\textwidth}
        \centering
        \includegraphics[width=\linewidth]{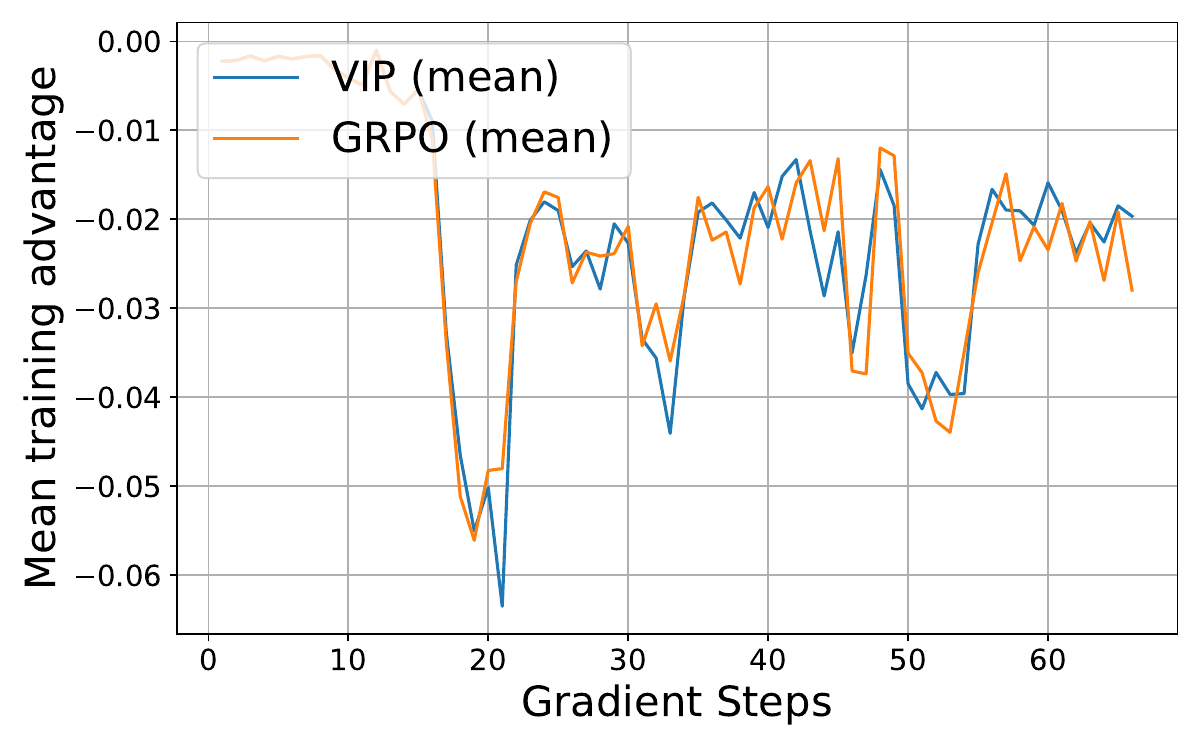}
    \end{subfigure}
    \hfill
    \begin{subfigure}[b]{0.49\textwidth}
        \centering
        \includegraphics[width=\linewidth]{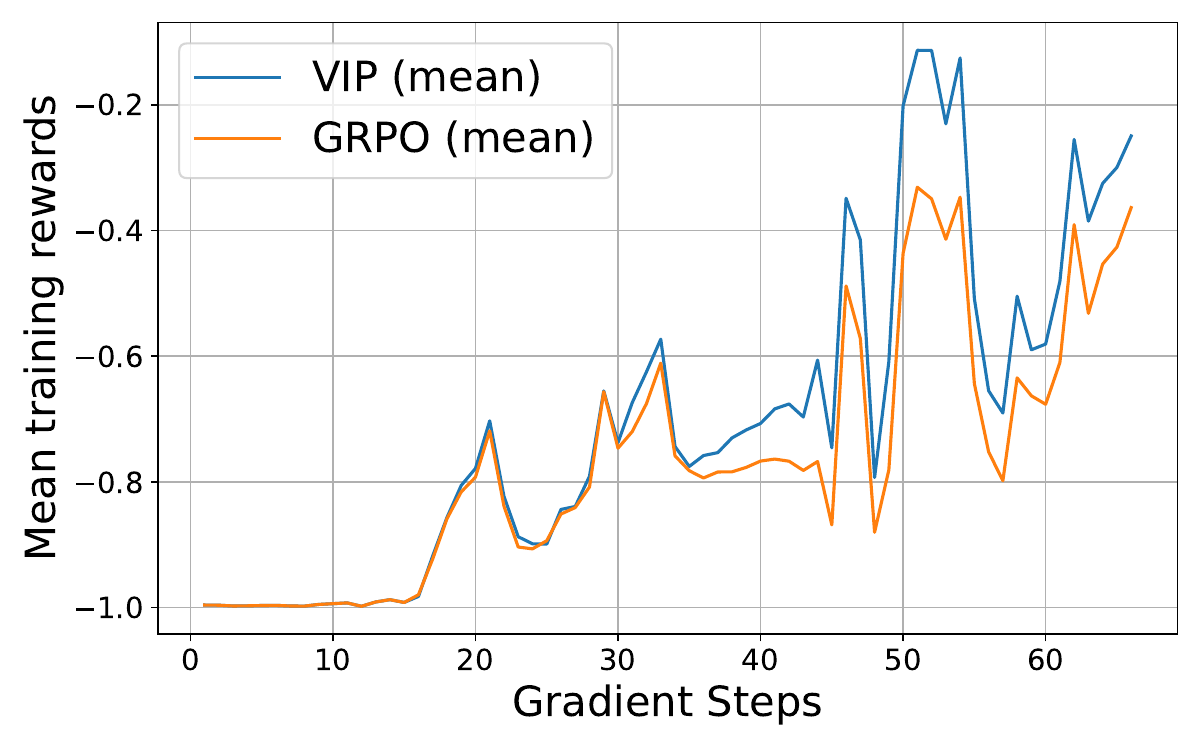}
    \end{subfigure}

    \caption{Mean advantages and mean rewards of GRPO vs. GRPO+VIP using rollout budgets of 8 (first row) and 16 (second row).}
    \label{fig:grpo-training}
\end{figure*}

\begin{figure*}[t]
    \centering
    \begin{subfigure}[b]{0.32\textwidth}
        \centering
        \includegraphics[width=\linewidth]{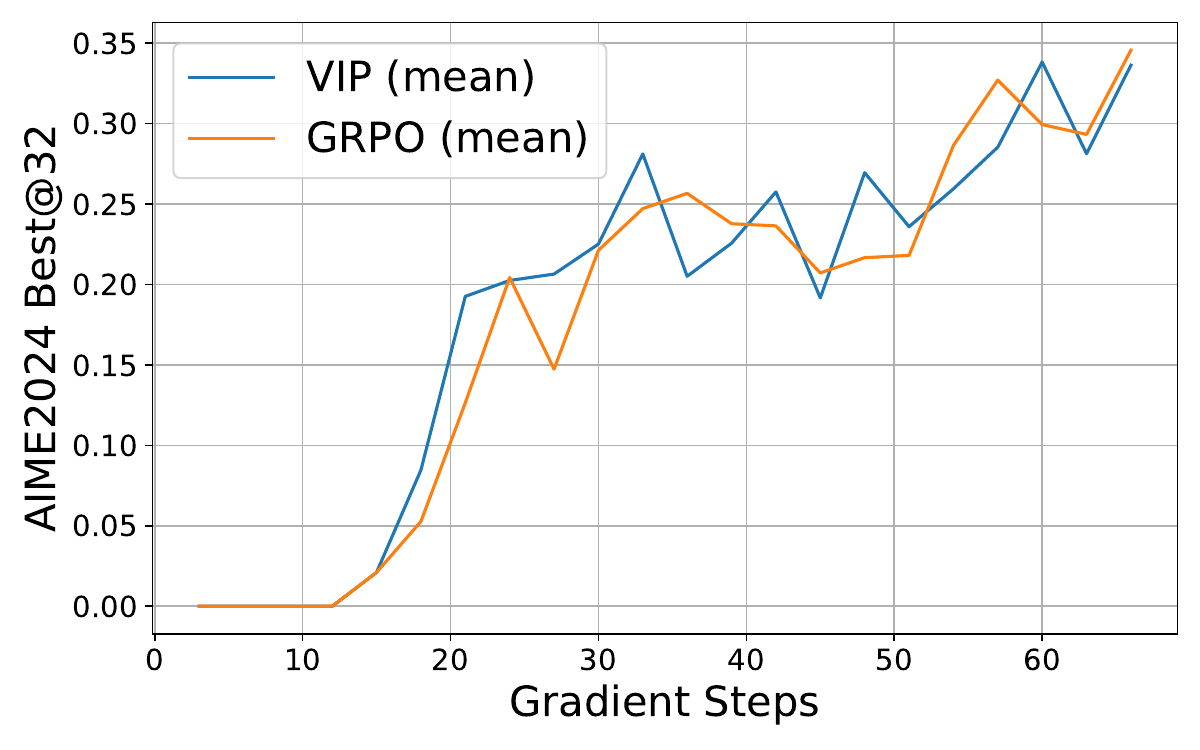}
        \caption{AIME 2024, best@32}
    \end{subfigure}
    \hfill
    \begin{subfigure}[b]{0.32\textwidth}
        \centering
        \includegraphics[width=\linewidth]{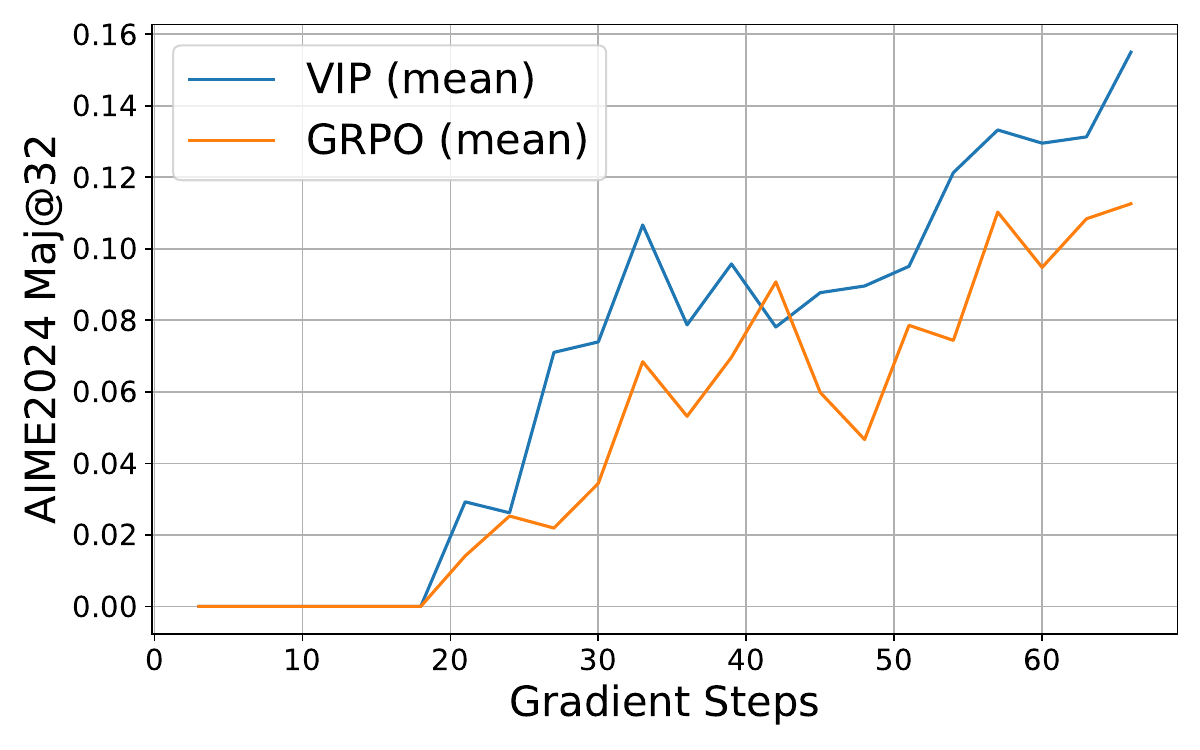}
        \caption{AIME 2024, maj@32}
    \end{subfigure}
    \hfill
    \begin{subfigure}[b]{0.32\textwidth}
        \centering
        \includegraphics[width=\linewidth]{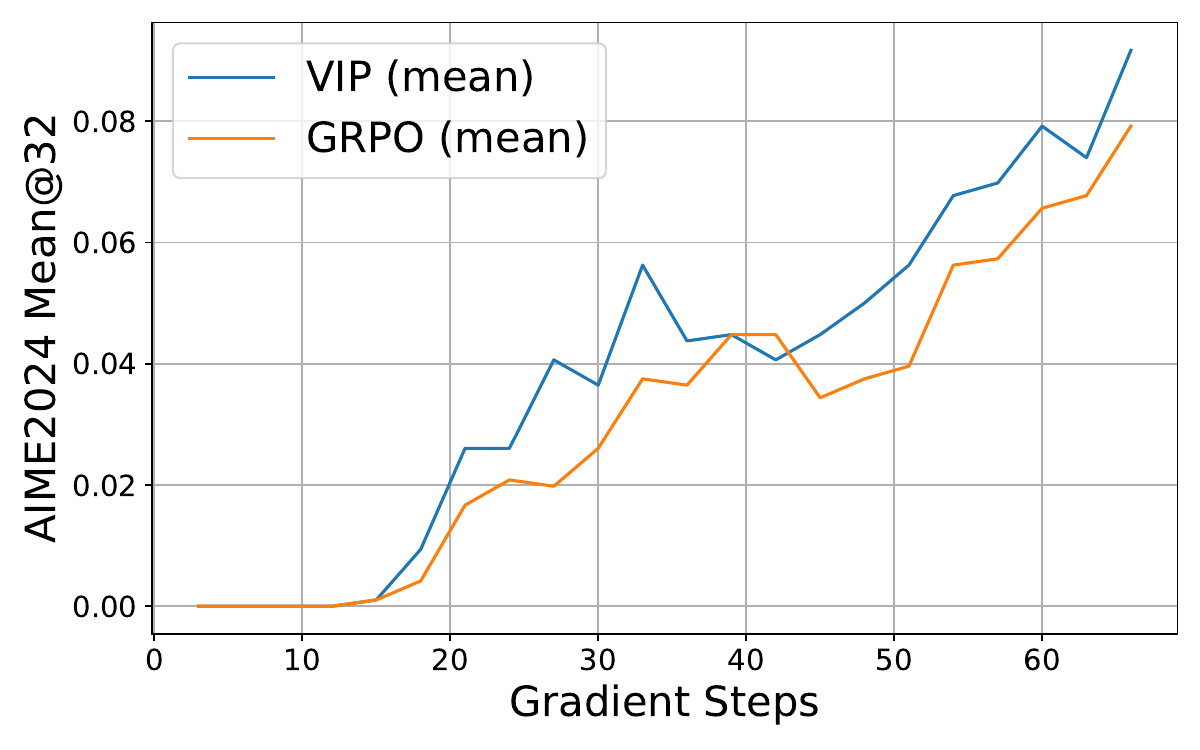}
        \caption{AIME 2024, mean}
    \end{subfigure}

    \begin{subfigure}[b]{0.32\textwidth}
        \centering
        \includegraphics[width=\linewidth]{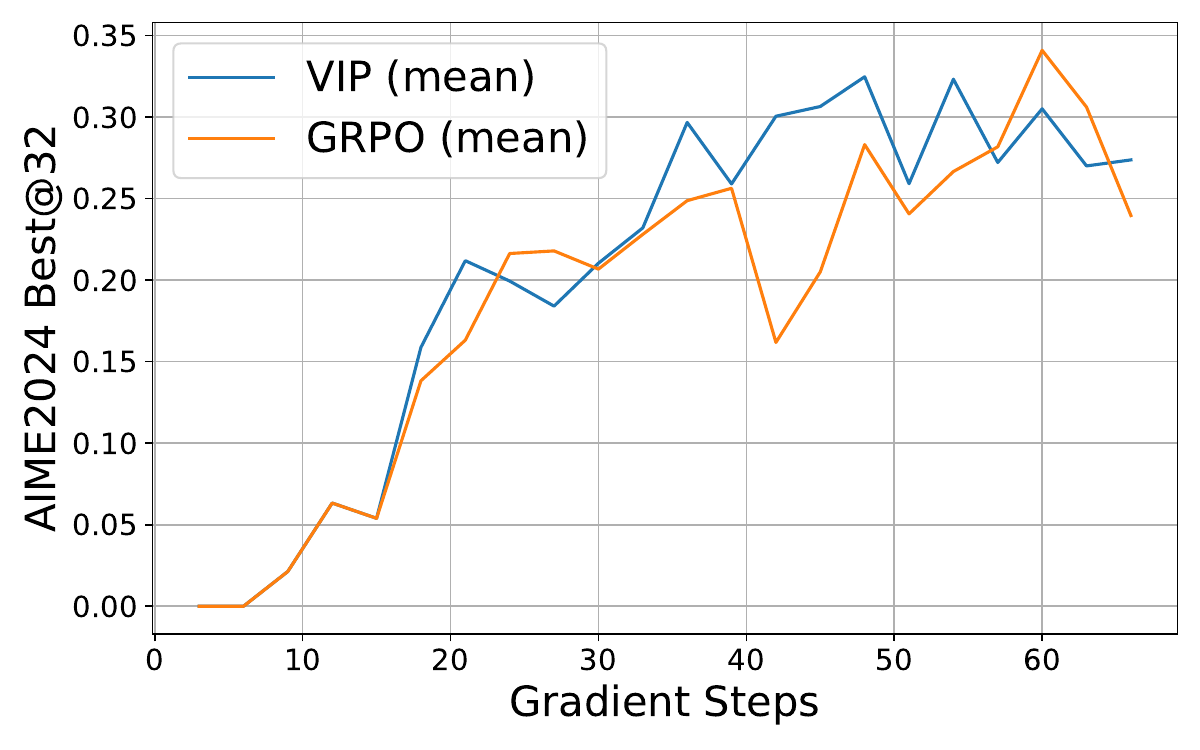}
        \caption{AIME 2024, best@32}
    \end{subfigure}
    \hfill
    \begin{subfigure}[b]{0.32\textwidth}
        \centering
        \includegraphics[width=\linewidth]{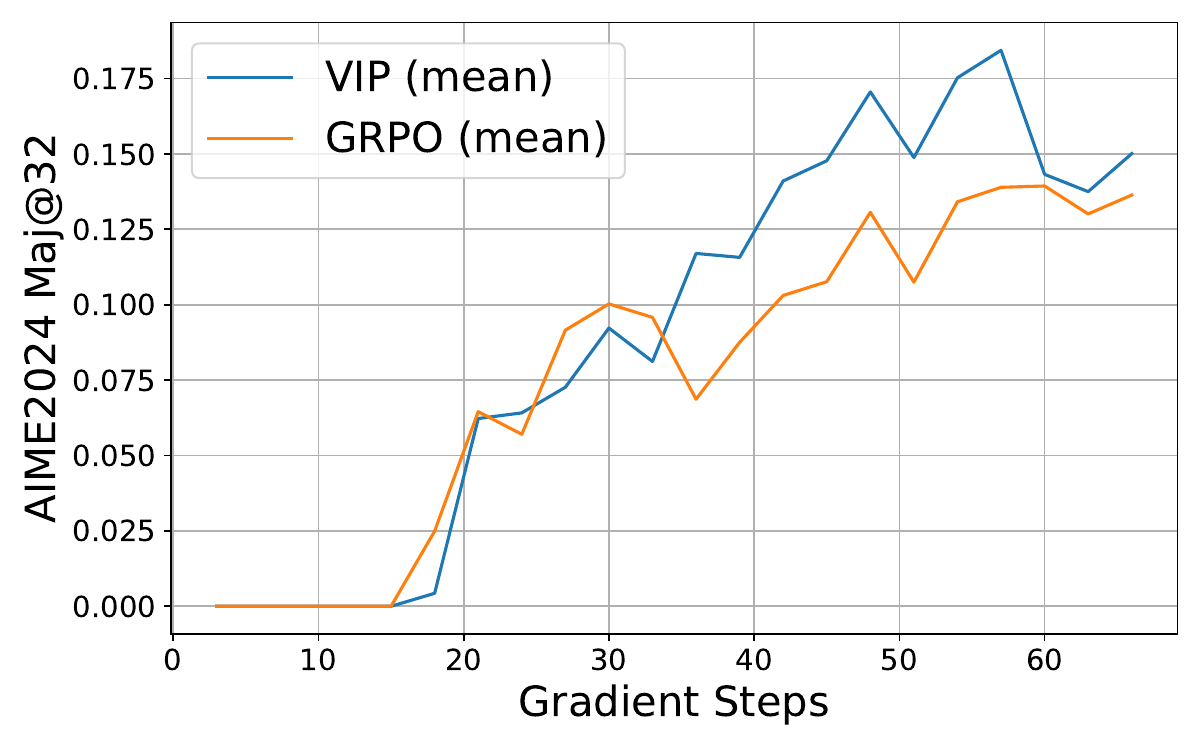}
        \caption{AIME 2024, maj@32}
    \end{subfigure}
    \hfill
    \begin{subfigure}[b]{0.32\textwidth}
        \centering
        \includegraphics[width=\linewidth]{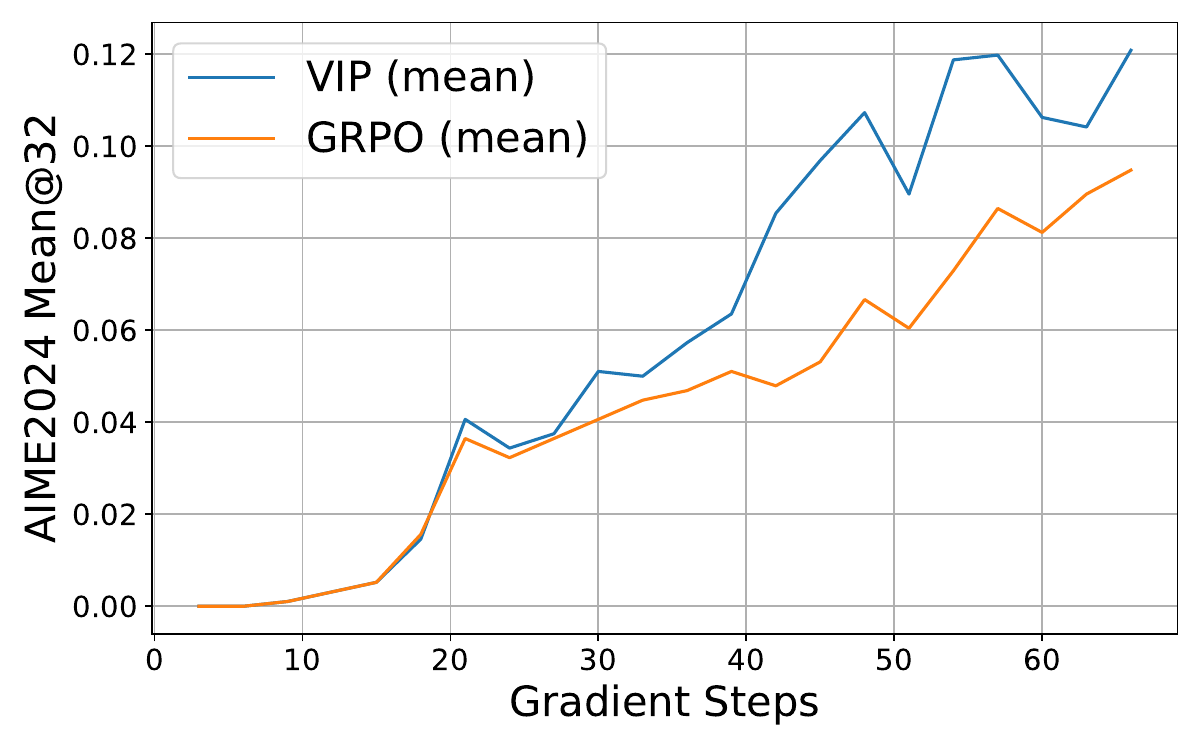}
        \caption{AIME 2024, mean}
    \end{subfigure}

    \begin{subfigure}[b]{0.32\textwidth}
        \centering
        \includegraphics[width=\linewidth]{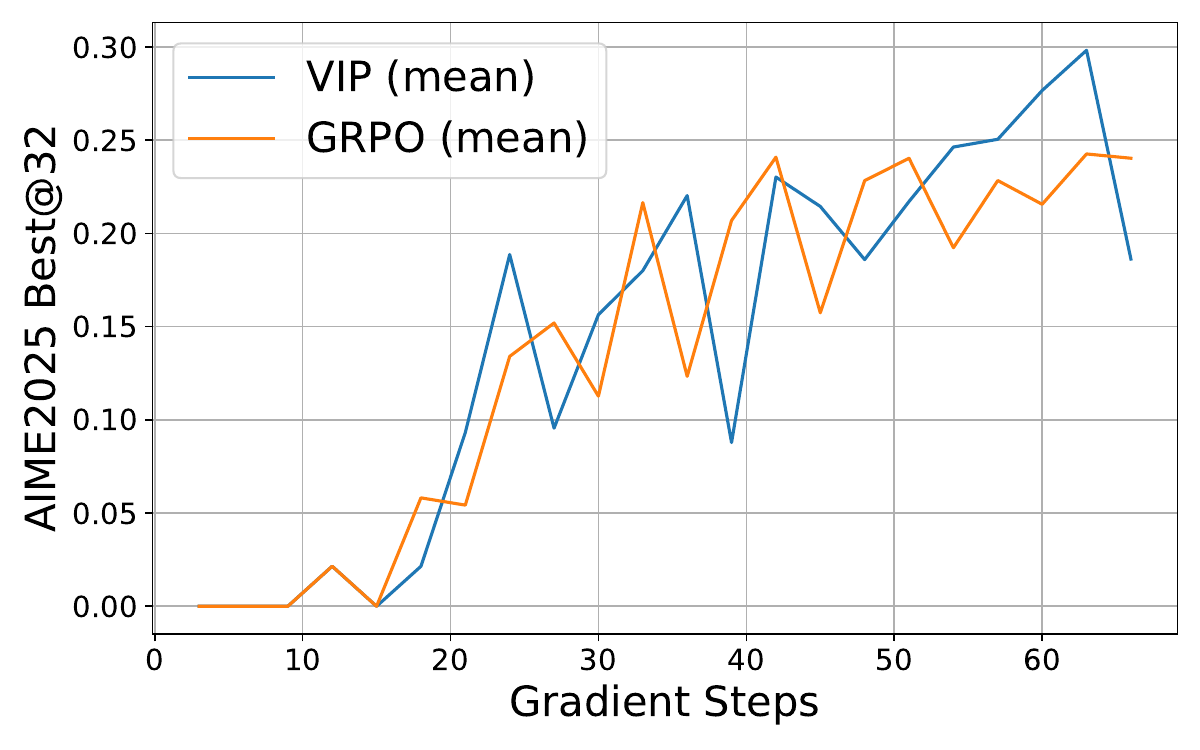}
        \caption{AIME 2025, best@32}
    \end{subfigure}
    \hfill
    \begin{subfigure}[b]{0.32\textwidth}
        \centering
        \includegraphics[width=\linewidth]{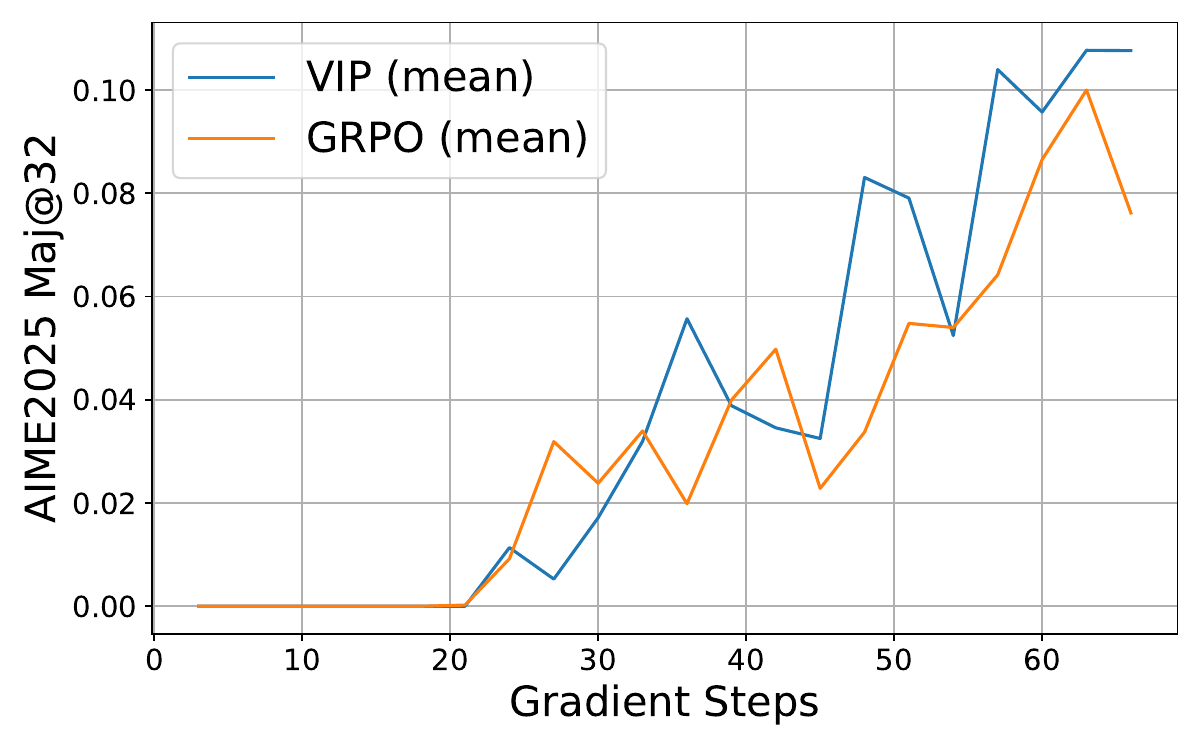}
        \caption{AIME 2025, maj@32}
    \end{subfigure}
    \hfill
    \begin{subfigure}[b]{0.32\textwidth}
        \centering
        \includegraphics[width=\linewidth]{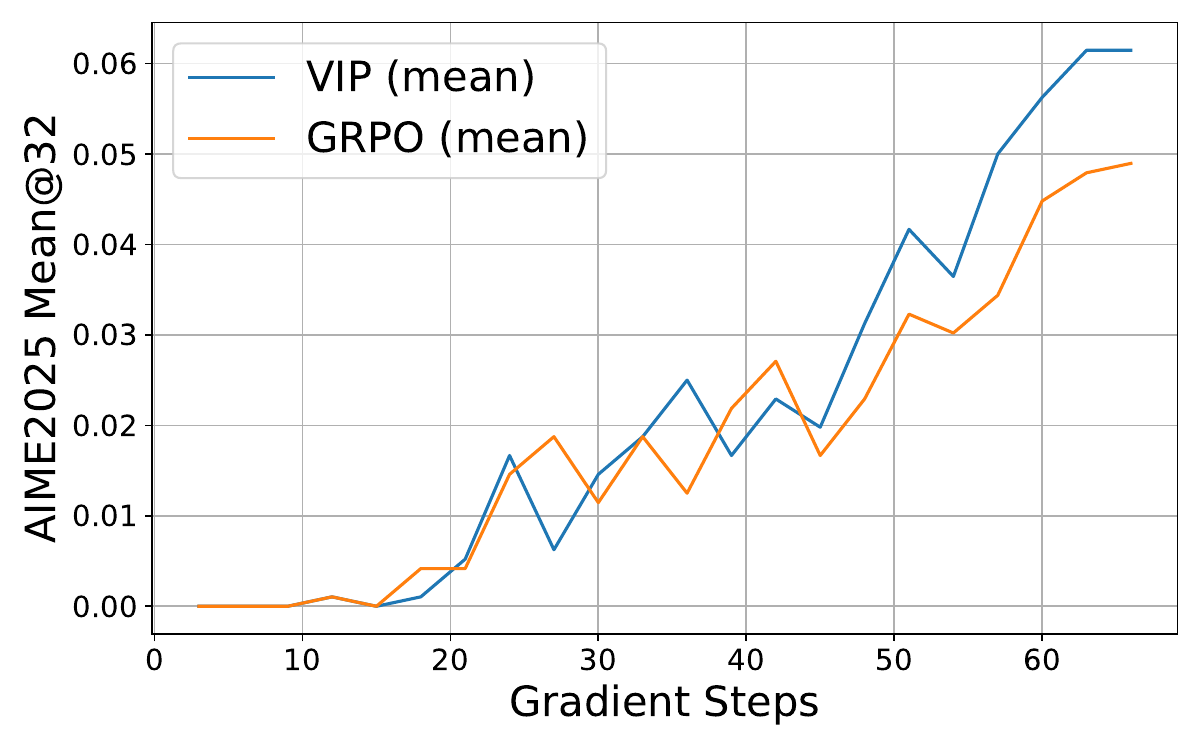}
        \caption{AIME 2025, mean}
    \end{subfigure}

    \begin{subfigure}[b]{0.32\textwidth}
        \centering
        \includegraphics[width=\linewidth]{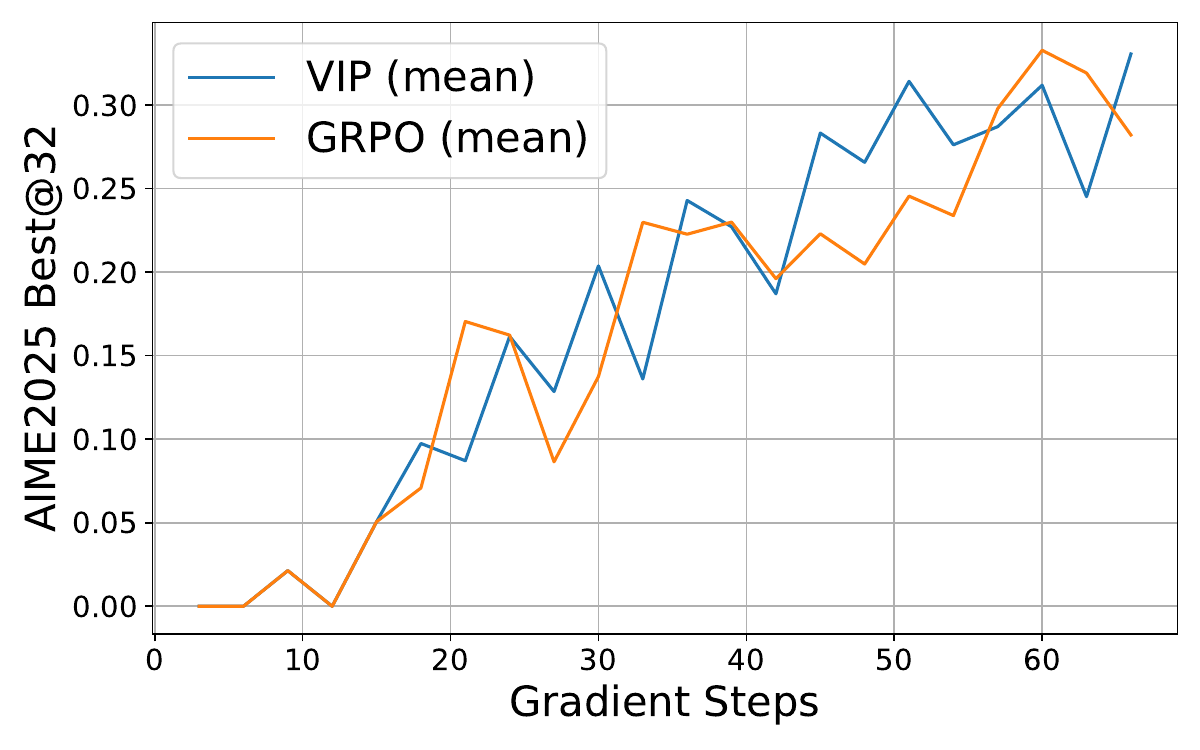}
        \caption{AIME 2025, best@32}
    \end{subfigure}
    \hfill
    \begin{subfigure}[b]{0.32\textwidth}
        \centering
        \includegraphics[width=\linewidth]{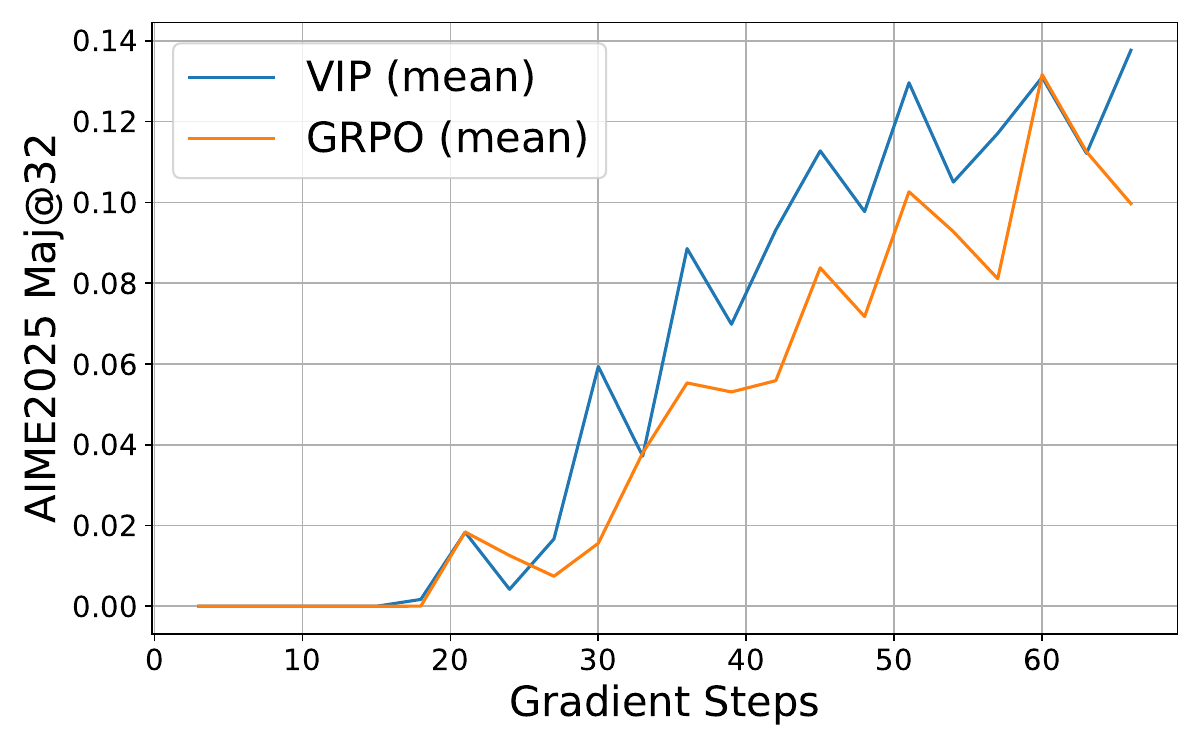}
        \caption{AIME 2025, maj@32}
    \end{subfigure}
    \hfill
    \begin{subfigure}[b]{0.32\textwidth}
        \centering
        \includegraphics[width=\linewidth]{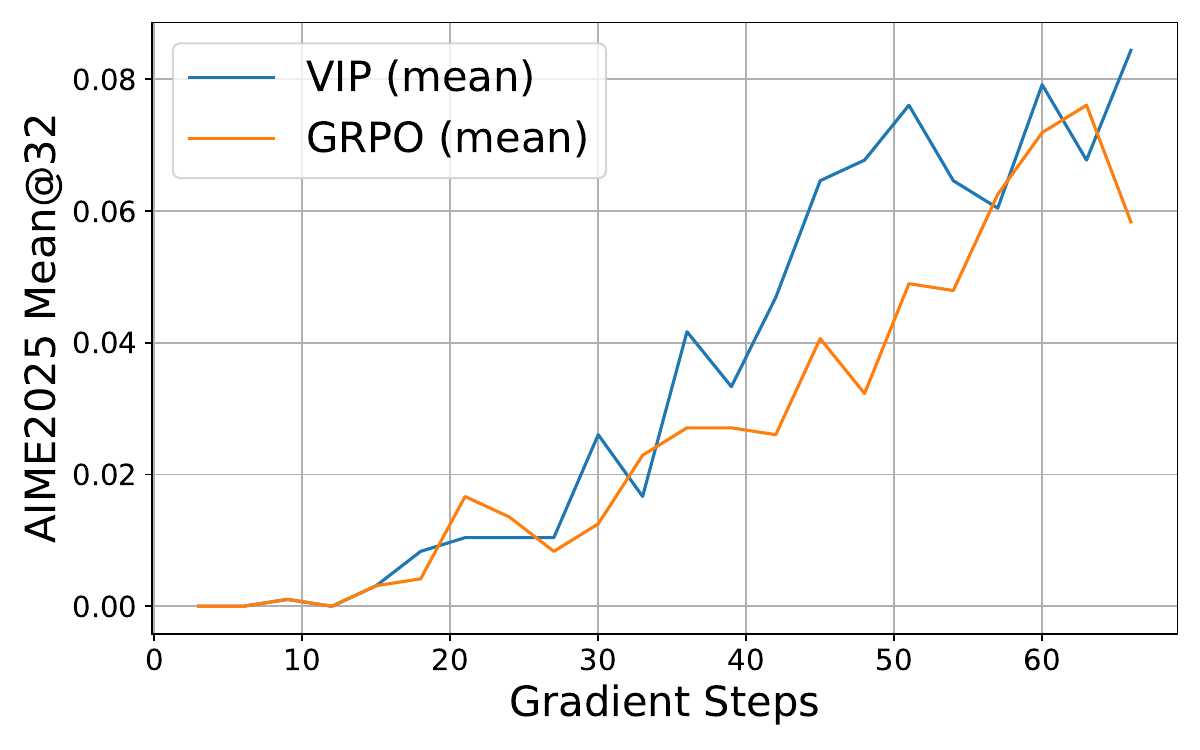}
        \caption{AIME 2025, mean}
    \end{subfigure}

    \caption{GRPO vs. GRPO+VIP using rollout budget 8 (a,b,c,g,h,i) and 16 (d,e,f,j,k,l) on AIME 2024 and 2025 across different accuracy metrics.}
    \label{fig:grpo-final-performance}
\end{figure*}

\end{document}